\def\E{\mathbb{E}}
\def\1{\mathbf{1}}
\def\P{\mathbb{P}}
\def\R{\mathbb{R}}
\newcommand{\Unif}{\mathrm{Unif }}
\newcommand{\calW}{\mathcal{W}}
\newcommand{\cardW}[1]{\left|#1\right|_{\calW}}
\newcommand{\Prt}[1]{\Pr_{\cdot | t}\left[#1\right]}
\newcommand{\Expt}[1]{\Exp_{\cdot | t}\left[#1\right]}
\newcommand{\calL}{\mathcal{L}}
\newcommand{\Err}{\mathrm{Err }}
\newcommand{\Siandj}{S_{-i\wedge j}}
\newcommand{\Siorj}{S_{-i\vee j}}
\newcommand{\Siandc}{S_{-i\wedge c}}
\newcommand{\Siorc}{S_{-i\vee c}}
\newcommand{\Splus}{S^{+}}
\newcommand{\Sell}{S^{\ell}}
\newcommand{\calS}{\mathcal{S}}
\newcommand{\I}{\mathbb{I}}
\newcommand{\Exp}{\mathbb{E}}
\newcommand{\Var}{\mathrm{Var}}
\renewcommand{\Pr}{\mathbb{P}}
\newtheorem{thm}{Theorem}[section]
\newtheorem{theorem}{Theorem}[section]
\newtheorem{claim}[thm]{Claim}
\newtheorem{lem}[thm]{Lemma}
\newtheorem{lemma}[thm]{Lemma}
\newtheorem{defn}{Definition}
\newtheorem{prop}[thm]{Proposition}
\newtheorem{cor}[thm]{Corollary}
\newtheorem{rem}{Remark}[section]
\DeclarePairedDelimiter\ceil{\lceil}{\rceil}
\DeclarePairedDelimiter\floor{\lfloor}{\rfloor}
\title{Best-of-K Bandits}
\author{\normalsize {Max Simchowitz} \texttt{msimchow@eecs.berkeley.edu}\\
\normalsize{Kevin Jamieson} \texttt{kjamieson@eecs.berkeley.edu}\\
\normalsize{Ben Recht} \texttt{brecht@eecs.berkeley.edu}\\
 \normalsize University of California, Berkeley, CA 94720 USA
}
\begin{document}

\maketitle

\begin{abstract} 
This paper studies the Best-of-K Bandit game: At each time the player chooses a subset S among all N-choose-K possible options and observes reward max(X(i) : i in S) where X is a random vector drawn from a joint distribution. The objective is to identify the subset that achieves the highest expected reward with high probability using as few queries as possible. We present distribution-dependent lower bounds based on a particular construction which force a learner to consider all N-choose-K subsets, and match naive extensions of known upper bounds in the bandit setting obtained by treating each subset as a separate arm. Nevertheless, we present evidence that exhaustive search may be avoided for certain, favorable distributions because the influence of high-order order correlations may be dominated by lower order statistics. Finally, we present an algorithm and analysis for independent arms, which mitigates the surprising non-trivial information occlusion that occurs due to only observing the max in the subset. This may inform strategies for more general dependent measures, and we complement these result with independent-arm lower bounds.

\end{abstract}

\section{Introduction}
This paper addresses a variant of the stochastic multi-armed bandit problem, where given $n$ arms associated with random variables $X_1,\dots,X_n$, and some fixed $1 \leq k \leq n$, the goal is to identify the subset $S \in \binom{[n]}{k}$ that maximizes the objective $\E\left[ \max_{i \in S} X_i \right]$. We refer to this problem as ``Best-of-K'' bandits to reflect the reward structure and the limited information setting where, at each round, a player queries a set $S$ of size at most $k$, and only receives information about arms $X_i : i \in S$: e.g. the vector of values of all arms in $S$, $\{ X_i: i \in S\} $ (semi-bandit), the index of a maximizer (marked bandit), or just the maximum reward over all arms  $\max_{i \in S} X_i$ (bandit).  The game and its valid forms of feedback are formally defined in Figure~\ref{fig:bandits_game}.

While approximating the Best-of-K problem and its generalizations have been given considerable attention from a computational angle, in the regret setting~\citep{yue2011linear, Hofmann:2011:PMI:2063576.2063618, Raman:2012:OLD:2339530.2339642, radlinski2008learning, yue2011linear, streeter2009online}, this work aims at characterizing its intrinsic statistical difficulty as an identification problem. Not only do identification algorithms typically imply low regret algorithms by first exploring and then exploiting, every result in this paper can be easily extended to the PAC learning setting where we aim to find a set whose reward is within $\epsilon$ of the optimal, a pure-exploration setting of interest for science applications \citep{kaufmann2014complexity,kaufmann2013information,hao2013limited}. 

For joint reward distributions with high-order correlations, we present distribution-dependent lower bounds which force a learner to consider all subsets $S \in \binom{[n]}{k}$ in each feedback model of interest, and match naive extensions of known upper bounds in the bandit setting obtained by treating each subset $S$ as a separate arm. Nevertheless, we present evidence that exhaustive search may be avoided for certain, favorable distributions because the influence of high-order order correlations may be dominated by lower order statistics. Finally, we present an algorithm and analysis for independent arms, which mitigates the surprising non-trivial information occlusion that occurs in the bandit and marked bandit feedback models. This may inform strategies for more general dependent measures, and we complement these result with independent-arm lower bounds. 

\subsection{Motivation}

In the setting where $X_i \in \{0,1\}$, one can interpret the objective $\max_{S \in \binom{[n]}{k}}\max_{i \in S} X_i$ as trying to find the set of items which affords the greatest coverage. For example, instead of using spread spectrum antibiotics which have come under fire for leading to drug-resistant ``super bugs'' \citep{SuperBug}, consider the doctor that desires to identify the best $k$ subset of narrow spectrum antibiotics that leads to as many favorable outcomes as possible. Here each draw from $X_i$ represents the $i$th treatment working on a random patient, and for antibiotics, we may assume that there are no synergistic effects between different drugs in the treatment. Thus, the antibiotics example falls under the {\em bandit feedback} setting since $k$ treatments are selected but it is only observed if at least one $k$-tuple of treatment led to a favorable outcome: no information is observed about any particular treatment. 

Now consider content recommendation tasks where $k$ items are suggested and the user clicks on either 1 or none. Here each draw from $X_i$ represents a user's potential interest in the $i$-th item, which we assume is independent of the other items which are shown with it. Nevertheless, due to the variety and complexity of users' preferences, the $X_i$'s have a highly dependent joint distribution, and we only get to observe {\em marked-bandit} feedback, namely one item which the user has clicked on. Our final example comes from virology where multiple experiments are prepared and performed $k$ at a time, resulting in $k$ simultaneous, noisy responses \citep{hao2013limited}; this motivates our consideration of the {\em semi-bandit} feedback setting.

\begin{figure}[t]
\centering
\fbox{\parbox{1.0\textwidth}{ 
\textbf{\underline{Best-of-$k$ Bandits Game}}\\[2pt]
for $t=1,2,...$\\[6pt]
\indent \hspace{.4cm} Player picks $S_t \in \mathcal{S}$ and adversary simultaneously picks $x_t \in \{ 0,1 \}^n$\\[6pt]
\indent \hspace{.4cm} Player observes $ \begin{cases} 
 \text{Bandit feedback:} &\displaystyle\max_{i \in S_t} x_{t,i}  \\[6pt]
 \text{Marked-Bandit feedback: } & \begin{cases} \emptyset & \text{ if }  x_{t,i} = 0 \ ,\ \ \forall {i \in S_t} \\ 
 								      \text{unif}(\displaystyle\arg\max_{i \in S_t} x_{t,i})  & \text{ otherwise. }\end{cases}\\[6pt] 
 \text{Semi-bandit feedback: } &x_{t,i} \ \ \forall i \in S_t  \end{cases} $                  
}}
\caption{Best-of-$k$ Bandits game for the different types of feedback considered. While this work is primarily interested in stochastic adversaries, our lower bound construction also has consequences for non-stochastic adversaries. Moreover, in marked feedback, we might consider non-uniform and even adversarial marking. \label{fig:bandits_game}}
\end{figure}

\subsection{Problem Description }
We denote $[n] = \{1,2,\dots,n\}$. For a finite set $W$, we let $2^{W}$ denote its power set, $\binom{W}{p}$ denote the set of all subsets of $W$ of size $p$, and write $V \sim \Unif[W,p]$ to denote that $V$ is drawn uniformly from $\binom{W}{p}$. If $X$ is a length $n$ vector (binary, real or otherwise) and $W \subset [n]$, we let $X_W$ denote the sub-vector indexed by entries $i \in W$.

In what follows, let $X = (X_1,\dots,X_n)$ be a random vector drawn from the probability distribution $\nu$ over $\{0,1\}^n$. We refer to the index $i \in [n]$ as the $i$-th arm, and let $\nu_i$ denote the marginal distribution of its corresponding entry in $X$, e.g. $(\E_\nu[X] )_i = \E_{\nu_i} [ X_i ]$. We define $\calS := \binom{[n]}{k}$, and for a given $S \in \calS$, we we call $\Exp[\max_{i \in S} X_i]$ the \emph{expected reward} of $S$, and refer casually to the random instantiations $\max_{i \in S} X_i$ as simply the reward of $S$.

At each time $t$, nature draws a rewards vector $x_t = X$ where $X$ is i.i.d from $\nu$. Simultaneously, our algorithm \emph{queries} a subset of $S_t \in \calS$ of $k$ arms, and we refer to the entries $i \in S_t$ as the arms \emph{pulled} by the query. As we will describe later, this problem has previously been studied in a \emph{regret} framework, where a time horizon $T \in \mathbb{N}$ is fixed and an algorithm's objective is to minimize its regret 
\begin{align}
R_\nu(T) = T \max_{S \in \mathcal{S}} \E_\nu[  \max_{i \in S} X_i ]  - \E_\nu[ \sum_{t=1}^T \max_{i \in S_t} X_i ].
\end{align}
In this work, we are more concerned with the problem of identifying the best subset of $k$ arms. More precisely, for a given measure $\nu$, denote the optimal subset 
\begin{align}
S^* := \arg\max_{S \in \mathcal{S}} \E_\nu \left[ \max_{i \in \mathcal{S}} X_{i} \right]
\end{align}
and let $T_S$ denote the (possibly random) number of times a particular subset $S \in \binom{[n]}{k}$ has been played before our algorithm terminates. The identification problem is then
\begin{defn}[Best-of-K Subset Identification] For any measure $\nu$ and fixed $\delta \in (0,1)$, return an estimate $\widehat{S}$ such that $\P_\nu( \widehat{S} \neq S^* ) \leq \delta$, and which minimizes the sum $\sum_{S \in \binom{[n]}{k}} T_S $ either in expectation, or with high probability. 
\end{defn}
Again, we remind the reader that an algorithm for Best-of-K Subset Identification can be extended to active PAC learning algorithm, and to an online learning algorithm with low regret (with high probability) \citep{kaufmann2014complexity,kaufmann2013information,hao2013limited}.

\subsection{Related Work}
Variants of Best-of-K have been studied extensively in the context of online recommendation and ad placement \citep{yue2011linear, Hofmann:2011:PMI:2063576.2063618, Raman:2012:OLD:2339530.2339642}. For example, \cite{radlinski2008learning}  introduces ``Ranked Bandits" where the arms $X_i$ are stochastic random variables, which take a value $1$ if the $t$-th user finds item $i$ relevant, and $0$ otherwise. The goal is to recommend an ordered list of items $S = (i_1,\dots,i_k)$ which maximizes the probability of a click on any item in the list, i.e. $\max_{i \in S} X_i$, and observes the first item (if any) that the user clicked on. \cite{streeter2009online} generalizes to online maximization of a sequence of monotone, submodular function $\{F_t(S)\}_{1\le t \le T}$ subject to knap-sack constraints $|S| \le k$, under a variety of feedback models. Since the function $S \mapsto \max_{i \in S} X_i$ is submodular, identifying $S^*$ corresponds to special case of optimizing the monotone, submodular function $F(S) := \Exp[\max_{i \in S} X_i]$ subject to these same constraints.

 \cite{streeter2009online}, \cite{yue2011linear}, and \cite{radlinski2008learning} propose online variants of a well-known greedy offline submodular optimization algorithm (see, for example \cite{iyer2013submodular}) , which attain $(1 - \frac{1}{e})$ approximate regret guarantees of the form 
\begin{eqnarray}
\sum_{t=1}^T F_t(S_t) - \left(1- \frac{1}{e}\right)\max_{S^*: |S^*| \le k} F_t(S^*) \leq R(T)
\end{eqnarray}
where $R(T)$ is some regret term that decays as $O(\mathrm{poly}(n,k)) \cdot o( T)$. Computationally, this $1- \frac{1}{e}$ is the best one could hope: Best-of-K and Ranked Bandits are online variants of the Max-K-Coverage problem, which cannot be approximated to within a factor of $1- \frac{1}{e} + \epsilon$ for any fixed $\epsilon > 0$ under standard hardness assumptions~\citep{vazirani2013approximation}. For completeness, we provide a formal reduction from Best-of-K identification to Max-K-Coverage in Appendix~\ref{AppCov}.

\subsection{Our Contributions}

Focusing on the stochastic pure-exploration setting with binary rewards, our contributions are as follows:
\begin{itemize}
\item We propose a family of joint distributions such that any algorithm that solves the best of $k$ identification problem with high probability must essentially query all $\binom{n}{k}$ combinations of arms. Our lower bounds for the bandit case are nearly matched by trivial identification and regret algorithms that treat each $k$-subset as an independent arm. For semi-bandit feedback, our lower bounds are exponentially higher in $k$ than those for bandit feedback (though still requiring exhaustive search). To better understand this gap, we sketch an upper bound that achieves the lower bound for a particular instance of our construction. While in the general binary case, the difficulty of marked bandit feedback is sandwiched between bandit and semi-bandit feedback, in our particular construction we show that marked bandit feedback has no benefit over bandit feedback. In particular, for worst-case instances, our lower bounds for marked bandits are matched by upper bounds based on algorithms which \emph{only take advantage of bandit feedback}.
\item Our construction plants a $k$-wise dependent set $S^*$ among $\binom{n}{k}-1$  k-wise independent sets, creating a needle-in-a-haystack scenario. One weakness of this construction is that the gap between the rewards of the best and second best subset are exponentially small in $k$. This is particular to our construction, but not to our analysis: We present a partial converse which establishes that, for any two $k-1$-wise independent distributions defined over $\{0,1\}^k$ with identical marginal means $\mu$, the difference in expected reward is exponentially small in $k$\footnote{Note that our construction requires all subset of $k-1$ of $S^*$ to be independent}. This begs the question: can low order correlation statistics allows us to neglect higher order dependencies? And can this property be exploited to avoid combinatorially large sample complexity in favorable scenarios with moderate gaps?
\item We lay the groundwork for algorithms for identification under favorable, though still dependent, measures by designing a computationally efficient algorithm for {\em independent} measures for the marked, semi-bandit, and bandit feedback models. Though independent semi-bandits is straightforward \citep{batch_mab}, special care needs to be taken in order to address the information occlusion that occurs in the bandit and marked-bandit models, even in this simplified setting. We provide nearly matching lower bounds, and conclude that even for independent measures, bandit feedback may require exponentially (in $k$) more samples than in the semi-bandit setting.
\end{itemize}

\section{Lower Bound for Dependent Arms} 

Intuitively, the best-of-$k$ problem is hard for the dependent case because the high reward subsets may appear as a collection of individually low-pay off arms  if not sampled together. For instance, for $k=2$, if $X_1 = \text{Bernoulli}(1/2)$, $X_2 = 1-X_1$, and $X_i = \text{Bernoulli}(3/4)$ for all $3 \leq i \leq n$, then clearly $\E[ \max\{X_1,X_2\} ] = 1$ is the best subset because $\E[ \max\{ X_1,X_i \}] = 1-(1/2)(1/4) = 7/8$ and $\E[ \max\{ X_i,X_j\} ] = 1- (1/4)^2 = 15/16$ for all $3 \leq i \leq j \leq n$. However, identifying set $\{1,2\}$ appears difficult as presumably one would have to consider all $\binom{n}{2}$ sets since if $X_1$ and $X_2$ are not queried together, they appear as $\text{Binomial}(1/2)$.  

Our lower bound generalizes this construction by introducing a measure $\nu$ such that (1) the arms in the optimal set $S^*$ are dependent but (2) the arms in every other non-optimal subset of arms $S \in \calS - S^*$ are mutually independent. This construction amounts to hiding a ``needle-in-a-haystack'' $S^*$ among all other $\binom{n}{k}-1$ subsets, requiring any possibly identification to examine most elements of $\mathcal{S}$. 

We now state our theorem, which characterizes the difficulty of recovering $S^*$ arms in terms of the gap $\Delta$  between the expected reward of $S^*$ and of the second best subset
\begin{align}\label{GapEq}
\Delta :=  \E_\nu \left[ \max_{i \in S^*} X_{i} \right] - \max_{S \in \mathcal{S} \setminus S^*}  \E_\nu \left[ \max_{i \in S} X_{i} \right]
\end{align}

\begin{theorem}[Dependent] \label{lower_bandit_identification}
Fix $k,n \in \mathbb{N}$ such that $2 \leq k < n$. For any $\epsilon \in (0,1]$ and $\mu \in (0,1/2]$ there exists a distribution $\nu$ with $\Delta = \epsilon \mu^k$ such that any algorithm that identifies $S^*$ with probability at least $1-\delta$ requires, in expectation, at least 
\begin{align*}
&(i) \quad \frac{4(1 - \epsilon (\frac{\mu}{1-\mu})^k) }{3} \cdot \big(1-(1-\mu)^k \big) \left(1-\mu\right)^k\binom{n}{k} \Delta^{-2} \log( \tfrac{1}{2\delta} ) \quad \text{(marked-)bandit, or} \\
&(ii) \quad \frac{2}{3}   \mu^{2k} (1-\epsilon) \binom{n}{k} \Delta^{-2} \log( \tfrac{1}{2\delta} ) \quad \text{semi-bandit}
\end{align*}
observations. In particular, for any $0 < \xi \leq (2k)^{-k}$ there exists a distribution $\nu$ with $\Delta = \xi$ that requires just $ \frac{1}{3} \binom{n}{k} \Delta^{-2} \log( \tfrac{1}{2\delta} )$ (marked-)bandit observations. And  for any $0 < \xi \leq 2^{-k-1}$ there exists a distribution $\nu$ with $\Delta = \xi$ that requires just $ \frac{1}{3} 2^{-2k}\binom{n}{k} \Delta^{-2} \log( \tfrac{1}{2\delta} )$ semi-bandit observations.  
\end{theorem}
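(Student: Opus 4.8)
The plan is to realize the ``needle-in-a-haystack'' intuition as a change-of-measure lower bound in the style of best-arm identification, treating each of the $\binom{n}{k}$ subsets as a separate ``arm'' whose feedback law we are free to perturb. First I would construct the hard family. Fix the marginal mean $\mu$ and let $q$ be the fully independent product measure on $\{0,1\}^n$ with $\E_{q}[X_i]=\mu$, so that under $q$ every subset $S$ has bandit reward $r_0 := 1-(1-\mu)^k$. To plant a needle at a subset $S$, I perturb $q$ only in its top ($k$-th) order interaction on the coordinates of $S$: replace $q$ by a measure $\nu_S$ whose restriction to $S$ has the form $q_S(x)\big(1+\eta\prod_{i\in S}\phi(x_i)\big)$, with $\phi(x_i)=(x_i-\mu)/\sqrt{\mu(1-\mu)}$, and which is unchanged off $S$. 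Because each $\phi$ has $q$-mean zero, this perturbation leaves every marginal of order $\le k-1$ equal to the independent one; hence every subset other than $S$ stays mutually independent with reward exactly $r_0$, while $S$ acquires reward $r_0+\Delta$ with $\Delta = q_S(\mathbf 0)\,|\eta|\,|\phi(0)|^k = |\eta|\,(\mu(1-\mu))^{k/2}$. Solving for $\eta$ and fixing its sign so that $\P(\max=1)$ increases yields the needle with $\Delta=\epsilon\mu^k$; here the hypotheses $\mu\le 1/2$ and $\epsilon\le 1$ are exactly what keeps every atom of $\nu_S$ nonnegative (the binding atom is all-ones, of mass $\mu^k$). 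The true instance is $\nu:=\nu_{S^*}$, and by symmetry of $\phi$ in the coordinates of $S^*$ this measure is invariant under permutations of $S^*$.

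Next I set up one alternative per competitor. For each $S\ne S^*$ let $\nu^{(S)}$ be obtained from $\nu$ by \emph{additionally} planting a slightly larger needle at $S$, boosting its reward to $r_0+\Delta+\gamma$ for small $\gamma>0$ while leaving $S^*$ boosted to $r_0+\Delta$ and all remaining subsets independent. The crucial structural point is that, since both perturbations live purely in the top-order interaction and preserve all lower marginals, passing from $\nu$ to $\nu^{(S)}$ changes the feedback law of exactly one query, namely $S$ itself: every other query (including $S^*$) has identical law under $\nu$ and $\nu^{(S)}$. Because $S$ is the unique optimum under $\nu^{(S)}$ whereas $S^*$ is optimal under $\nu$, the transportation lemma of Kaufmann--Capp\'e--Garivier applied to the event $\{\widehat S=S^*\}$ collapses to a single term,
\[
\E_\nu[T_S]\cdot \mathrm{KL}\big(\mathcal D_S^{\nu}\,\|\,\mathcal D_S^{\nu^{(S)}}\big)\ \ge\ \mathrm{kl}(\delta,1-\delta)\ \ge\ \log\tfrac{1}{2\delta},
\]
where $\mathcal D_S$ denotes the feedback distribution of querying $S$. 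Summing over the $\binom{n}{k}-1$ competitors and letting $\gamma\downarrow 0$ gives $\sum_{S}\E_\nu[T_S]\ge (\binom{n}{k}-1)\log\tfrac1{2\delta}/\overline{\mathrm{KL}}$, where $\overline{\mathrm{KL}}$ is the limiting per-query divergence and the $-1$ is absorbed into the stated absolute constant.

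It then remains to evaluate $\overline{\mathrm{KL}}$ in each feedback model. In the bandit model $\mathcal D_S$ is $\mathrm{Bern}(r_0)$ under $\nu$ and $\mathrm{Bern}(r_0+\Delta)$ in the limit, so $\overline{\mathrm{KL}}=\mathrm{kl}(r_0,r_0+\Delta)\le \Delta^2/\big((r_0+\Delta)(1-r_0-\Delta)\big)$; substituting $1-r_0=(1-\mu)^k$ and $1-r_0-\Delta=(1-\mu)^k(1-\epsilon(\tfrac{\mu}{1-\mu})^k)$ produces claim $(i)$ up to its numerical constant. For marked bandits the feedback is supported on $\{\emptyset\}\cup S$, but permutation-invariance of $\nu$ and $\nu^{(S)}$ forces the mark to be uniform over $S$ given a nonempty observation under \emph{both} measures, so the factor $1/k$ cancels inside every log-ratio and the marked divergence equals the bandit divergence exactly; this is why $(i)$ covers both. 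In the semi-bandit model $\mathcal D_S$ is the full perturbed law on $\{0,1\}^k$, and a second-order (chi-square) expansion gives $\overline{\mathrm{KL}}=\tfrac12\eta^2+o(\eta^2)\asymp \Delta^2/(\mu(1-\mu))^k$, which after $(\mu(1-\mu))^k\ge\mu^{2k}$ yields $(ii)$. The two closing statements are specializations: $\mu=1/(2k)$, $\epsilon=\xi(2k)^k$ makes $\Delta=\xi$ and drives the bandit prefactor to $\ge\tfrac13$ precisely when $\xi\le(2k)^{-k}$, while $\mu=\tfrac12$, $\epsilon\le\tfrac12$ does the same for the semi-bandit constant.

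I expect the main obstacle to be twofold, and technical rather than conceptual: first, exhibiting the perturbed measure and verifying both nonnegativity of every atom and exact $(k-1)$-wise independence (this is where $\mu\le 1/2$, $\epsilon\le 1$, and the top-order form of $\phi$ are all consumed); and second, carrying the divergence computations to the stated non-asymptotic constants — in particular controlling $\mathrm{kl}(r_0,r_0+\Delta)$ and the semi-bandit full-vector divergence without simply passing to the crude chi-square leading term. The conceptual crux, by contrast, is the observation that re-planting the needle at $S$ while leaving $S^*$ intact alters only the law of the single query $S$; this is exactly what decouples the transportation inequality into one term per subset and manufactures the $\binom{n}{k}$ factor.
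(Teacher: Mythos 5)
Your proposal is correct in its overall architecture and coincides with the paper's proof in its essential mechanism: one alternative measure per competitor subset $S$, the observation that re-planting the needle at $S$ while keeping the one at $S^*$ changes the feedback law of the single query $S$ (so the transportation inequality of Kaufmann et al.\ collapses to one term and summing over $S$ manufactures the $\binom{n}{k}$ factor), the Bernoulli KL bound for bandit feedback, and the permutation-invariance argument showing the marked-bandit divergence equals the bandit one. Where you genuinely diverge is in two sub-steps. First, your hard instance is built as a pure top-order Walsh perturbation $q_S(x)\bigl(1+\eta\prod_{i\in S^*}\phi(x_i)\bigr)$, whereas the paper constructs the same law explicitly from hidden variables $Y,Z_i,U_i$ with $X_1=U_1\widetilde Z_1$ tied to the parity $\oplus_{i=2}^k Z_i$; by the paper's own Theorem~\ref{LowerBoundConverse} the two constructions yield the identical measure, and your route makes $(k-1)$-wise independence and the ``only query $S$ changes law'' property transparent. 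One caution: for the alternative measures you must plant needles at both $S^*$ and $S$ simultaneously, and if you do this \emph{additively} the all-ones atom on $S\cup S^*$ has multiplier $1-\epsilon-\epsilon'$ for even $k$, which goes negative for $\epsilon>1/2$; you should instead take the \emph{product} of the two single-needle density ratios (the cross term is mean-zero and touches no marginal on any $k$-subset), or fall back on the paper's hidden-variable construction, which is a valid measure by fiat for all $\epsilon\le 1$. Second, for the semi-bandit KL you propose a chi-square/second-order expansion $\overline{\mathrm{KL}}\approx\tfrac12\eta^2$; the paper instead augments the observation with the hidden parity variables and reduces the divergence to $\tfrac12(\mathrm{KL}_0+\mathrm{KL}_1)$, two explicit Bernoulli KLs, which delivers the non-asymptotic constants directly (and reveals the degeneracy at $\epsilon=1$, where the perturbed law acquires a zero atom and the KL blows up). Your expansion, carried out carefully with a lower bound on $\log(1+u)$ rather than only the leading term, would actually give a slightly sharper constant, but as you note this is the step requiring the most technical care.
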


\begin{rem}
Marked-bandit feedback provides strictly less information than semi-bandit feedback but at least as much as bandit feedback. The above lower bound for marked-bandit feedback and the nearly matching upper bound for bandit feedback remarked on below suggests that marked-bandit feedback may provide no more information than bandit feedback. However, the lower bound holds for just a particular construction and in Section~\ref{sec:upper_bound_independent} we show that there exist instances in which marked-bandit feedback provides substantially more information than merely bandit feedback.  
\end{rem}

In the construction of the lower bound, $S^* = [k]$ and all other subsets behave like completely independent arms. Each individual arm has mean $\mu$, i.e. $\E_\nu[X_i]=\mu$ for all $i$, so each $S \neq S^*$ has a bandit reward of $\E_\nu[ \max_{i\in S} X_i ] = 1-(1-\mu)^k$. The scaling $(1 - (1-\mu)^k) (1-\mu)^k$ in the  number of bandit and marked-bandit observations corresponds to the variance of this reward and captures the property that the number of times a set needs to be sampled to accurately predict its reward is proportional to its variance. Since $\mu\leq 1/2$, we note that the term $1 - \epsilon(\frac{\mu}{1-\mu})^k$ is typically very close to $1$, unless $\mu$ is nearly $1/2$ \emph{and} $\epsilon$ is nearly $1$. 

While the lower bound construction makes it necessary to consider each subset $S \in \binom{[n]}{k}$ individually for all forms of feedback feedback, semi-bandit feedback presumably allows one to detect dependencies much faster than bandit or marked-bandit feedback, resulting in an exponentially smaller bound in $k$. Indeed, Remark~\ref{rem:semibandit_tightness} describes an algorithm that uses the parity of the observed rewards that nearly achieves the lower bound for semi-bandits for the constructed instance when $\mu=1/2$. However, the authors are unaware of more general matching upper bounds for the semi-bandit setting and consider this a possible future avenue of research.

\subsection{Comparison with Known Upper Bounds}

By treating each set $S \in \mathcal{S}$ as an independent arm, standard best-arm identification algorithms can be applied to identify $S^*$. The KL-based {\em LUCB} algorithm  from \cite{kaufmann2013information} requires $O(\Delta_i^2 ( 1- (1-\mu)^k ) (1-\mu)^k \binom{n}{k} \cdot k \log n)$ samples,  matching our bandit lower bound up to a a multiplicative factor of $k \log n$ (which is typically dwarfed by $\binom{n}{k}$). The {\em lil'UCB} algorithm of \cite{jamieson2014lil} avoids paying this multiplicative $k \log n$ factor, but at the cost of not adapting to the variance term $( 1- (1-\mu)^k ) (1-\mu)^k $. Perhaps a KL- or variance-adaptive extension of {\em lil-UCB} could attain the best of both worlds. 

From a regret perspective, the exact construction as used in the proof of Theorem~\ref{lower_bandit_identification} can be used in Theorem~17 of \cite{kaufmann2014complexity} to state a lower bound on the regret after $T=\sum_{S \in \mathcal{S}} T_s$ bandit observations. Specifically, if an algorithm obtain a stochastic regret $R_T(\nu) =o(T^\alpha)$ for all $\alpha \in (0,1]$, then for all $S \in \binom{[n]}{k} - S^*$, we have $\lim\inf_{T \rightarrow \infty} \frac{\E_\nu[T_S] }{\log(T)} \geq \frac{( 1- (1-\mu)^k ) (1-\mu)^k}{\Delta^2}$ where $\Delta$ is given in Theorem~\ref{lower_bandit_identification}. Alternatively, in an adversarial setting, the above construction with $\mu = 1/2$ also implies a lower bound of $\sqrt{ 2^{-O(k)}\binom{n}{k} T} = \sqrt{\binom{\Omega(n)}{k} T}$ for any algorithm over a time budget $T$. Both of these regret bounds are matched by upper bounds found in \cite{bubeck2012regret}.

\subsection{Do Complicated Dependencies Require Small Gaps?}
While Theorem~\ref{lower_bandit_identification} proves the existence a family of instances in which $\binom{n}{k} \Delta^{-2} \log(1/\delta)$ samples are necessary to identify the best $k$-subset, the possible gaps $\Delta$ are restricted to be no larger than $\min\{\mu^k,(1-\mu)^k\}$. 
It is natural to wonder if this is an artifact of our analysis, a fundamental limitation of $k-1$-wise independent sets, or a property of dependent sets that we can potentially exploit in algorithms. The following theorem suggests, but does not go as far as to prove, that if there are very high-order dependencies, then these dependencies cannot produce gaps substantially larger than the range described by Theorem~\ref{lower_bandit_identification}. More precisely, the next theorem characterizes the maximum gap for $(k-1)$-wise independent instances.   

\begin{thm}\label{LowerBoundConverse} Let $X = (X_1,\dots,X_k)$ be a random variable supported on $\{0,1\}^k$ with $k-1$-wise independent marginal distributions, such that $\Exp[X_i] = \mu \in [0,1]$ for all $i \in \{1,\dots,k\}$. Then there is a one-to-one correspondence between joint distributions over $X$ and probability assignments $\Pr(X_1 = \dots = X_k = 0)$. When $\mu < 1/2$, all such assignments lie in the range 
\begin{eqnarray}
(1-\mu)^k \left(1 - \left(\frac{\mu}{1-\mu}\right)^{k_{even}}\right) \le \Pr(X_1 = \dots = X_k = 0) \le (1-\mu)^k \left(1 + \left(\frac{\mu}{1-\mu}\right)^{k_{odd}}\right)
\end{eqnarray}
Here, $k_{odd}$ is the largest odd integer $\le k$, and $k_{even}$ the largest even integer $\le k$. Moreover, when $\mu \ge 1/2$, all such assignments lie in the range
\begin{eqnarray}
0 \le \Pr(X_1 = \dots = X_k = 0) \le (1-\mu)^{k-1} 
\end{eqnarray}
\end{thm}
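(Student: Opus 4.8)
The plan is to reduce the entire family of admissible distributions to a one-parameter family by passing to moments, and then read off the interval of feasible parameters from the nonnegativity of the atomic probabilities. For each $A \subseteq [k]$ write the atom $p_A := \Pr(X_i = 1\ \forall i \in A,\ X_i = 0\ \forall i \notin A)$ and the uncentered moment $m_S := \Exp[\prod_{i \in S} X_i] = \Pr(X_i = 1\ \forall i \in S)$. Inclusion--exclusion gives $m_S = \sum_{A \supseteq S} p_A$, and M\"obius inversion over the subset lattice makes this relation invertible: $p_A = \sum_{S \supseteq A} (-1)^{|S|-|A|} m_S$. Thus specifying the joint law is equivalent to specifying the moment vector $(m_S)_{S \subseteq [k]}$.

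Next I would feed in the hypotheses. The variables in any $S$ with $|S| \le k-1$ are mutually independent (being contained in some $(k-1)$-subset), so $m_S = \prod_{i \in S}\Exp[X_i] = \mu^{|S|}$ for every $|S| \le k-1$, while $m_\emptyset = 1$. Hence the only unconstrained moment is the top one $m_{[k]}$; writing $t := m_{[k]} - \mu^k$ gives a single free scalar. Substituting $m_S = \mu^{|S|}$ into the M\"obius formula and collapsing the resulting sum with the binomial theorem should yield the closed form $p_A = \mu^{|A|}(1-\mu)^{k-|A|} + (-1)^{k-|A|} t$, which depends on $A$ only through its cardinality. In particular $p_\emptyset = (1-\mu)^k + (-1)^k t$ is an affine bijection of $t$, and hence of $m_{[k]}$; since the distribution is determined by $m_{[k]}$, this establishes the claimed one-to-one correspondence between joint laws and values of $\Pr(X_1 = \dots = X_k = 0)$.

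It then remains to characterize which values of $t$ correspond to genuine probability distributions, i.e.\ for which $t$ all atoms satisfy $p_A \ge 0$ (the normalization $\sum_A p_A = 1$ holds automatically since $m_\emptyset = 1$). Grouping the atoms by $a := |A|$ and separating according to the sign $(-1)^{k-a}$, the constraint $p_A \ge 0$ becomes $t \ge -f(a)$ when $k-a$ is even and $t \le f(a)$ when $k-a$ is odd, where $f(a) := \mu^a (1-\mu)^{k-a} = (1-\mu)^k \big(\tfrac{\mu}{1-\mu}\big)^a$. Because $f$ is monotone in $a$, with its direction set by the sign of $\mu - \tfrac12$, the binding lower and upper constraints sit at the extreme admissible $a$ within each parity class. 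For $\mu < 1/2$ the function $f$ is decreasing, so these are $a = k$ and $a = k-1$, giving $t \in [-\mu^k,\ \mu^{k-1}(1-\mu)]$; pushing this interval through $p_\emptyset = (1-\mu)^k + (-1)^k t$ and splitting on the parity of $k$ recovers the stated two-sided bound, with $k_{even}$ as the exponent on the lower side and $k_{odd}$ on the upper side. For $\mu \ge 1/2$ the monotonicity reverses, the binding $a$ slides to the low end of each class, and the interval collapses to $0 \le p_\emptyset \le (1-\mu)^{k-1}$.

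The only genuinely delicate part is the bookkeeping of two independent parity flips in the last step: one coming from the factor $(-1)^{k-a}$, which decides whether a given atom bounds $t$ from above or below, and a second from the factor $(-1)^k$ relating $t$ back to $p_\emptyset$. Their interaction is exactly what turns a symmetric-looking interval for $t$ into the asymmetric pair of exponents $k_{even}$ and $k_{odd}$, and it is where sign errors are most likely; once the two parities are tracked correctly the rest is a routine binomial-theorem calculation.
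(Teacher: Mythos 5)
Your argument is correct, and it reaches the result by a genuinely different (and arguably cleaner) route than the paper. The paper parametrizes the admissible laws by the $2^{k-1}$ half-atoms $w(t) = \Pr(X_{-k}=t, X_k=0)$, derives the linear marginal constraints in Lemma~\ref{HelperLem}, and then solves the equality constraints by an induction on Hamming weight (Claim~\ref{BigPhiClaim}) to express every $w(t)$ as $(-1)^{H(t)}w(\mathbf{0}) + (-1)^{H(t)-1}\Phi(H(t))$, before extremizing $\Phi$ over parity classes. You instead pass to moment coordinates $m_S = \Exp[\prod_{i\in S}X_i]$, where the $(k-1)$-wise independence hypothesis is expressed most naturally ($m_S = \mu^{|S|}$ for all $|S|\le k-1$, leaving only $m_{[k]}$ free), and M\"obius inversion plus the binomial theorem gives the closed form $p_A = \mu^{|A|}(1-\mu)^{k-|A|} + (-1)^{k-|A|}t$ for \emph{every} atom in one stroke, with no induction. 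I verified the key computation: the only moment that differs from the product measure is $m_{[k]}$, so the inversion $p_A = \sum_{S\supseteq A}(-1)^{|S|-|A|}m_S$ picks up exactly the correction $(-1)^{k-|A|}t$; normalization is automatic since $\sum_A p_A = m_\emptyset = 1$; and your parity bookkeeping in the final step is right (for $\mu<1/2$ the binding constraints are $a=k$ and $a=k-1$, giving $t\in[-\mu^k,\ \mu^{k-1}(1-\mu)]$, and composing with $p_\emptyset = (1-\mu)^k + (-1)^k t$ and splitting on the parity of $k$ reproduces the exponents $k_{even}$ and $k_{odd}$ exactly; for $\mu\ge 1/2$ the binding indices move to $a\in\{0,1\}$ and the interval collapses to $[0,(1-\mu)^{k-1}]$). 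What your approach buys is a uniform treatment of all $2^k$ atoms and an immediate proof of the one-to-one correspondence (the affine bijection $t\mapsto p_\emptyset$); what the paper's buys is a reusable structural lemma (Lemma~\ref{HelperLem}) characterizing \emph{all} distributions with prescribed $(k-1)$-marginals, not just exchangeable product-like ones, which it cites elsewhere. The one step you flag as delicate --- the interaction of the two parities --- is indeed the only place to slip, and you have it right.
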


Noting that $\E[\max_{i \in [k]} X_i] = 1 - \P(X_1=\dots = X_k = 0)$, Theorem~\ref{LowerBoundConverse} implies that the difference between the largest possible and smallest possible expected rewards for a set of $k$ arms where each arm has mean $\mu$ and the distribution is $k-1$-wise independent is no greater than $(1-\mu)^k$, a gap of the same order of the gaps used in our lower bounds above. This implies that, in the absence of low order correlations, very high order correlations can only have a limited effect on the expected rewards of sets.

If it were possible to make more precise statements about the degree to which high order dependencies can influence the reward of a subset, strategies could exploit this diminishing returns property to more efficiently search for subsets while also maintaining large-time horizon optimality. In particular, one could use such bounds to rule out sets that need to be considered based just on their performance using lower order dependency statistics. To be clear, such algorithms would not contradict our lower bounds, but they may perform much better than trivial approaches in favorable conditions.

\section{Best of K with Independent Arms} \label{sec:upper_bound_independent}

	While the dependent case is of considerable practical interest, the remainder of this paper investigates the best-of-$k$ problem where $\nu$ is assumed to be a product distribution of $n$ independent Bernoulli distributions. We show that even in this presumably much simpler setting, there remain highly nontrivial algorithm design challenges related to the information occlusion that occurs in the bandit and marked-bandit feedback settings. We present an algorithm and analysis which tries to mitigate information occlusion which we hope can inform strategies for favorable instances of dependent measures. 

	Under the independent Bernoulli assumption, each arm is associated with a mean $\mu_i \in [0,1)$ and the expected reward of playing any set $S \in \binom{[n]}{2}$ is equal to $1- \prod_{i\in S} (1-\mu_i)$ and hence best subset of $k$ arms is precisely the set of arms with the greatest $k$ means $\mu_{i}$.

\subsection{Results}

	Without loss of generality, suppose the means are ordered $\mu_1 \ge \dots \mu_k > \mu_{k+1} \ge \dots \mu_n$. Assuming $\mu_k \ne \mu_{k+1}$ ensures that the set of top $k$ means is unique, though our results could be easily extended to a PAC Learning setting with little effort.  Define the gaps and variances via 
	\begin{eqnarray}
		\Delta_i := \begin{cases}\mu_i - \mu_{k+1} & \text{if } i \le k\\
		\mu_k - \mu_{i} & \text{if } i > k \\
		\end{cases} 
		& \text{ and } & V_i := \mu_i(1-\mu_i)
		\end{eqnarray}
	For $\tau > 0$, introduce the transformation
	\begin{eqnarray}
		\mathcal{T}_{n,\delta}(\tau) :=  \tau \log \left(\frac{16n\log_2e }{\delta}\log\left(\frac{8n  \tau \log_2 e}{\delta}\right) \right) = \tilde{\Theta}\left(\tau \log\left(\frac{n}{\delta}\right)\right)
	\end{eqnarray}
	where $\tilde{\Theta}(\cdot)$ hides logarithmic factors of its argument. We present guarantees for the Stagewise Elimination of Algorithm~\ref{SuccElim} in our three feedback models of interest; the broad brush strokes of our analysis are addressed in Appendix~\ref{RoadMapSec}, and the details are fleshed in the Appendices~\ref{TauApp} and~\ref{GeneralAnalySec}. Our first result is holds for semi-bandits, which slightly improves upon the best known result for the $k$-batch setting \citep{batch_mab} by adapting to unknown variances:

	\begin{thm}[Semi Bandit]\label{SemiBanditFinalUpperTheorem}
	With probability $1-\delta$, Algorithm~\ref{SuccElim} with semi-bandit feedback returns the arms with the top $k$ means using no more than
	\begin{eqnarray}
	8 \mathcal{T}_{n,\delta}(\tau_{\sigma(1)}) + \frac{4 }{k}\sum_{i=k+1}^n \mathcal{T}_{n,\delta}(\tau_{\sigma(i)}) = \tilde{O}\left(\left(\tau_{\sigma(1)} + \frac{1}{k} \sum_{i=k+1}^n \tau_{\sigma(i)}\right)\log\left(\frac{n}{\delta}\right)\right)
	\end{eqnarray}
	queries where
	\begin{equation}
		\begin{aligned}
		\tau_i := \frac{56}{\Delta_i} + \frac{256 }{\Delta_i^2}\begin{cases}  \max\{V_i, \max_{j > k} V_j\} & i \le k \\
		\max\{V_i, \max_{j \le k} V_j\} & i > k 
		\end{cases} 
		\end{aligned}
	\end{equation}
	and  $\sigma$ is a permutation so that $\tau_{\sigma(1)} \ge \tau_{\sigma(2)} \ge \dots \tau_{\sigma(n)}$. 
	\end{thm}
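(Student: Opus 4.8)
The plan is to run a stagewise (geometrically-spaced) elimination analysis, but to keep the accounting of \emph{per-arm sample complexity} strictly separate from the accounting of \emph{query complexity}. The crucial structural feature of the semi-bandit model is that a single query of a size-$k$ set reveals a fresh, independent Bernoulli sample of each of its $k$ arms, so the number of queries can be far smaller than the total number of samples gathered. First I would fix a good event. Using a variance-adaptive anytime confidence bound of the empirical-Bernstein / finite law-of-the-iterated-logarithm type (as in \cite{jamieson2014lil}) together with a union bound over the $n$ arms, I would show that with probability at least $1-\delta$ the empirical mean $\hat\mu_i$ of every arm stays, at every stage simultaneously, within a confidence radius of the form $\sqrt{V_i \log(\cdot)/N_i} + \log(\cdot)/N_i$ of $\mu_i$, where $N_i$ is the number of samples of arm $i$ so far. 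The logarithmic bookkeeping needed to make this hold uniformly over arms and over all stages is precisely what the transformation $\mathcal{T}_{n,\delta}(\cdot)$ packages, and this single good event will drive both correctness and the running-time bound.

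On this good event, correctness follows from the standard top-$k$ elimination rule: an arm is accepted into (resp. eliminated from) the candidate top-$k$ set only once its lower (resp.\ upper) confidence bound separates it from the empirical decision boundary lying between the $k$-th and $(k+1)$-th empirical order statistics. Because the radii are valid on the good event, a true top-$k$ arm is never eliminated and a true bottom arm is never accepted, and since $\mu_k > \mu_{k+1}$ strictly, every arm is eventually resolved. Next I would pin down the per-arm sample complexity: arm $i$ is resolved as soon as its confidence radius falls below $\Delta_i/2$. Inverting the variance-adaptive radius against the gap $\Delta_i$ yields $N_i \lesssim \mathcal{T}_{n,\delta}(\tau_i)$, with the linear $56/\Delta_i$ term coming from the Bernstein correction and the $256\,V/\Delta_i^2$ term from the variance part. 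The appearance of $\max\{V_i,\max_{j>k}V_j\}$ for $i\le k$ (and $\max\{V_i,\max_{j\le k}V_j\}$ for $i>k$) is because resolving arm $i$ against the boundary also requires tightening the interval of the \emph{opposing} arm nearest the boundary, whose worst-case variance enters the radius.

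The $1/k$ savings and the particular shape of the stated bound emerge in the conversion from samples to queries. Order the arms so that $\tau_{\sigma(1)}\ge\cdots\ge\tau_{\sigma(n)}$; since a larger $\tau$ means resolution at a later stage, whenever an \emph{easy} arm $\sigma(i)$ with $i>k$ is still active, all $k$ harder arms $\sigma(1),\dots,\sigma(k)$ are also active, so the active set has size strictly greater than $k$ and every query can be packed with $k$ distinct undecided arms. Hence the samples spent on the $n-k$ easiest arms are batched $k$-to-a-query, contributing the term $\tfrac{4}{k}\sum_{i>k}\mathcal{T}_{n,\delta}(\tau_{\sigma(i)})$. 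Conversely, once only the $\le k$ hardest arms remain active they all fit inside a single query, so one round of sampling them costs one query regardless of how many remain; the number of such rounds is governed by the single hardest arm, which produces the \emph{unbatched} term $8\,\mathcal{T}_{n,\delta}(\tau_{\sigma(1)})$. The leading constants $8$ and $4$ then follow from summing the geometrically increasing sample budgets across stages together with the constants in the concentration bound.

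I expect the main obstacle to be the query-accounting of the third step rather than the concentration or correctness arguments, which are comparatively routine. One must carefully track how the size of the active set evolves across stages, handle the transition where the active set first drops to $k$ or fewer arms, and argue that the wasted query capacity (queries that are padded with already-decided arms) can be charged \emph{only} to the single hardest surviving arm rather than smeared across the whole sum. Getting this charging argument tight is what separates the clean $8\,\mathcal{T}_{n,\delta}(\tau_{\sigma(1)}) + \tfrac{4}{k}\sum_{i>k}\mathcal{T}_{n,\delta}(\tau_{\sigma(i)})$ form from a naive bound of $\tfrac{1}{k}\sum_i \mathcal{T}_{n,\delta}(\tau_i)$ that would lose the benefit of batching on the hardest arm.
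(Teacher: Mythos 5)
Your proposal follows essentially the same route as the paper: a variance-adaptive empirical-Bernstein good event union-bounded over arms and stages (the paper's Lemma~\ref{IteratedLog}), inversion of the radius against $\Delta_i$ with the comparator arm's variance entering via $c_{i,t}$ (Lemmas~\ref{SuccElimGuar} and~\ref{DistrChar}, specialized in Section~\ref{SemiAnalSec}), and a samples-to-queries conversion that batches the $n-k$ easiest arms $k$-to-a-query while charging the residual stages to the hardest arm (Lemma~\ref{EfficiencyLemma}, which formalizes your charging argument by splitting $\frac{1}{k}\sum_i T^*_{\sigma(i)}$ and bounding the post-$|U_t|<k$ stages by the doubling sum). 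The sketch is correct and identifies the same key difficulty; no gap.
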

	The above result also holds in the more general setting where the rewards have arbitrary distributions bounded in $[0,1]$ almost surely (where $V_i$ is just the variance of arm $i$.) 

	In the marked-bandit and bandit settings, our upper bounds incur a dependence on information-sharing terms $H^M$ (marked) and $H^B$ (bandit) which capture the extent to which the $\max$ operator occludes information about the rewards of arms in each query. 

	\begin{thm}[Marked Bandit]\label{MarkedBanditFinalUpperTheorem}
	Suppose we require each query to pull exactly $k$ arms. Then Algorithm~\ref{SuccElim} with marked bandit feedback returns the arms with the top $k$ means with probability at least $1-\delta$ using no more than
	\begin{eqnarray}
	16 \mathcal{T}_{n,\delta}\left(\frac{\tau^M_{\sigma(1)}}{H^{M}}\right) + \frac{8 }{k}\sum_{i=k+1}^n \mathcal{T}_{n,\delta}\left(\frac{\tau^M_{\sigma(i)}}{H^{M}}\right) = \tilde{O}\left(\frac{\log\left(n/\delta\right)}{H^{M}}\left(\tau^M_{\sigma(1)} + \frac{1}{k} \sum_{i=k+1}^n \tau^M_{\sigma(i)}\right)\right)
	\end{eqnarray}
	queries. Here, $\tau_i^M$ is given by
	\begin{equation}\label{TauIMarked}
		\begin{aligned}
		\tau_i^M := \frac{56}{\Delta_i} + \frac{256 }{\Delta_i^2}\begin{cases}  \mu_i & i \le k \\
		\mu_k & i > k 
		\end{cases} 
		\end{aligned}
	\end{equation}
	$\sigma$ is a permutation so that $\tau_{\sigma(1)} \ge \tau_{\sigma(2)} \ge \dots \tau_{\sigma(n)}$, and $H^M$ is an ``information sharing term'' given by
	\begin{eqnarray}
	H^{M} := \Exp_{X_1,\dots,X_{k-1}}\left[\dfrac{1}{1 + \sum_{\ell \in [k-1]} \I(X_{\ell} = 1)}\right]
	\end{eqnarray}
	If we can pull fewer than $k$ arms per round, then we can achieve 
	\begin{eqnarray}\label{FewerThanK}
	8\max_{i \in [k-1]} i  \mathcal{T}(\tau^M_{\sigma(i)}) + \frac{8 }{kH^{M}}\sum_{i=2}^n \mathcal{T}_{n,\delta}\left(\tau^M_{\sigma(i)}\right) = \tilde{O}\left(\left(\max_{i \in \{1,k-1\}} i\tau_{\sigma(1)}^M + \frac{1}{kH^{M}} \sum_{i=2}^n \tau_{\sigma(i)}^M\right)\log\left(\frac{n}{\delta}\right)\right)
	\end{eqnarray}
	\end{thm}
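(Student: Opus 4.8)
The plan is to reduce marked-bandit feedback to a \emph{sub-sampled} instance of the semi-bandit problem already analyzed for Theorem~\ref{SemiBanditFinalUpperTheorem}, and then run the resulting thinned parameters through the same stagewise-elimination machinery. The structural starting point is that whenever a query set $S$ with $|S|=k$ contains arm $i$, the event ``$i$ is the arm returned'' is Bernoulli with, by independence of the arms,
\begin{equation}
\Pr(\text{mark}=i)\;=\;\mu_i\cdot\Exp\!\left[\frac{1}{1+\sum_{j\in S\setminus\{i\}}\I(X_j=1)}\right]\;=:\;\mu_i\,c_S^{(i)}.
\end{equation}
Thus, in place of a direct read of $X_i$, the learner receives an unbiased but rarefied signal of rate $\mu_i c_S^{(i)}$. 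First I would establish the elementary monotonicity fact that $c_S^{(i)}$ is smallest when the $k-1$ co-queried arms have the largest possible means; since $\mu_1\ge\cdots\ge\mu_n$ this pins $c_S^{(i)}\ge H^M$ for every admissible query, with $H^M$ exactly the information-sharing term in the statement, so that $\mu_i H^M\le \Pr(\text{mark}=i)\le \mu_i$.

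Second, I would recompute the two quantities that drive the elimination argument in terms of this thinned signal. To decide arm $i$ one separates its mark-rate from that of its boundary competitor (arm $k+1$ when $i\le k$, arm $k$ when $i>k$); writing $\mu_{\max}=\mu_i$ for $i\le k$ and $\mu_{\max}=\mu_k$ for $i>k$, these rates differ by $\Delta_i c_S^{(i)}$ and the governing variance is that of the higher-rate Bernoulli in the pair, $\mu_{\max}c_S^{(i)}(1-\mu_{\max}c_S^{(i)})\approx \mu_{\max}c_S^{(i)}$ — the full semi-bandit variance $V_i=\mu_i(1-\mu_i)$ having collapsed to a bare mean because the sub-sampled signal is rare. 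Feeding these into the Bernstein-type sample count yields $\frac{\mu_{\max}c_S^{(i)}}{(\Delta_i c_S^{(i)})^2}+\frac{1}{\Delta_i c_S^{(i)}}=\frac{\mu_{\max}}{\Delta_i^2 c_S^{(i)}}+\frac{1}{\Delta_i c_S^{(i)}}$; exactly one power of the marking factor survives, and $c_S^{(i)}\ge H^M$ turns this into $\tau^M_i/H^M=\frac{256\,\mu_{\max}}{\Delta_i^2 H^M}+\frac{56}{\Delta_i H^M}$. This is the precise sense in which marked feedback costs a factor $1/H^M$ and swaps $V_i$ for a mean.

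Third, with these substitutions the argument tracks the stagewise-elimination analysis behind Theorem~\ref{SemiBanditFinalUpperTheorem} (Appendices~\ref{RoadMapSec},~\ref{TauApp}, and~\ref{GeneralAnalySec}) almost verbatim: maintain an active set, repeatedly query each surviving candidate against a common held-out reference set $R$ of $k-1$ arms, form anytime confidence intervals on the mark-rates through the envelope $\mathcal{T}_{n,\delta}(\cdot)$, and eliminate a candidate once its interval clears the estimated boundary; the constants $16$ and $8/k$ and the overall $1-\delta$ guarantee are inherited from $\mathcal{T}_{n,\delta}$, the doubling relative to the semi-bandit constants absorbing the factor-two loss of the reduction together with the extra union bound over the reference estimates. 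The step demanding the most care — the genuine obstacle relative to the semi-bandit case — is the occlusion itself: a query reveals at most one arm, so the per-query indicators $\{Y_i^{(S)}\}_{i\in S}$ are negatively correlated and the marking factor $c_S^{(i)}$ is an unknown nuisance depending on arms that are themselves only estimated and are eliminated over time. I would neutralize it by always pairing every tested candidate with the \emph{same} reference set $R$, so that a single common factor $c_R\ge H^M$ multiplies all the mark-rates and renders the empirical rates order-isomorphic to the true means, and by charging each query to the one candidate it is designed to test, so that candidate's mark sequence is i.i.d.\ and the standard rarefied-Bernoulli concentration applies.

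Finally, the fewer-than-$k$ bound~\eqref{FewerThanK} follows the same template while exploiting that a shorter query has fewer competitors and hence a marking rate closer to $1$: the $k-1$ hardest (top) arms are tested in sets of size $i$, which nearly removes occlusion at the price of parallelizing only $i$ tests at once — the origin of the $\max_{i\in[k-1]} i\,\mathcal{T}_{n,\delta}(\tau^M_{\sigma(i)})$ term — while the remaining bulk is still swept in full $k$-sets at the amortized $\tfrac{1}{kH^M}$ rate, yielding the stated two-term expression.
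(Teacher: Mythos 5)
Your reduction to a thinned semi-bandit signal gets the local accounting right: conditioned on arm $i$ being queried, its mark indicator is Bernoulli with rate $\mu_i$ times an information-sharing factor that is smallest when the co-queried arms are the top $k-1$, hence bounded below by $H^M$; the variance is at most that rate; and after both are fed into a Bernstein-type sample count exactly one power of the sharing factor survives, giving $\tau^M_i/H^M$. This matches the paper's Proposition~\ref{MarkedBandProp} and Equation~\ref{MarkedMutualInf}. But the device you use to make the sharing factor \emph{common} to all candidates --- querying each surviving candidate against a single held-out reference set $R$ of $k-1$ arms and charging each query to the one candidate it tests --- forfeits the theorem's quantitative claim. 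One candidate per query means a round with $|U_t|$ undecided arms costs $|U_t|$ queries, so your total is on the order of $\sum_{i=1}^n \mathcal{T}_{n,\delta}\left(\tau^M_i/H^M\right)$, a factor of $k$ larger than the stated $\frac{8}{k}\sum_{i=k+1}^n \mathcal{T}_{n,\delta}\left(\tau^M_{\sigma(i)}/H^M\right)$. The paper keeps the $1/k$ amortization by partitioning $U_t$ uniformly at random into blocks of size $k$ and testing all $k$ arms of a block with a single query; the price is that the co-queried arms are then random undecided arms, so no common factor exists. The missing idea is the conditioning argument in the proof of Proposition~\ref{MarkedBandProp}: splitting on whether the competitor $j$ falls in the same block as $i$ shows $\bar{\mu}_{i,t}-\bar{\mu}_{j,t}=(\mu_i-\mu_j)H^M_{i,j,t}$ with $H^M_{i,j,t}$ \emph{symmetric in $i$ and $j$}, which restores order-isomorphism of the empirical mark rates without any fixed reference set.

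A second, related gap: a held-out reference set is never itself tested, so the procedure as described cannot decide whether the arms of $R$ belong to the top $k$; and if $R$ is rotated or rebuilt as arms are accepted or rejected, the ``single common factor $c_R$'' argument no longer holds across rounds. (The paper's Top-Off set $\Splus$ plays a superficially similar role, but it is drawn from already-decided arms and only enters once $|U_t|<k$.) Finally, your account of the fewer-than-$k$ bound attributes the $\max_{i\in[k-1]} i\,\mathcal{T}_{n,\delta}(\tau^M_{\sigma(i)})$ term to deliberately testing the top arms in small sets; in the paper it arises passively: once $|U_t|<k$ the Top-Off set is empty, the sharing factor improves to $H^M_{i,j,t}\ge 1/|U_t|$, and the identity $\mathcal{T}_{n,\delta}(k'\tau)\le 2k'\mathcal{T}_{n,\delta}(\tau)$ converts this into the stated term.
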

	We remark that as long as the means are at no more than  $1 - c$, $\tau_i \le \frac{1}{c}\tau_{i}^M$, and thus the two differ by a constant factor when the means are not too close to $1$ (this difference comes from loosing $(1-\mu)$ term in a Bernoulli variance in the marked case). Furthermore, note that $H^M \ge \frac{1}{k}$. Hence, when we are allowed to pull fewer than $k$ arms per round, Stagewise Elimination with marked-bandit feedback does no worse than a standard LUCB algorithms for stochastic best arm identification. 

	When the means are on the order of $1/k$, then $H^M = \Omega(1)$, and thus Stagewise Eliminations gives the same guarantees for marked bandits as for semi bandits. The reason is that, when the means are $O(1/k)$, we can expect each query $S$ to have only a constant number of arms $\ell \in S$ for which $X_{\ell} = 1$, and so not much information is being lost by observing only one of them. 

	Finally, we note that our guarantees depend crucially on the fact that the marking is uniform. We conjecture that adversarial marking is as challenging as the bandit setting, whose guarantees are as follows:

	\begin{thm}[Bandit]\label{BanditFinalUpperTheorem}
	Suppose we require each query to pull exactly $k$ arms, $n \ge 7k/2$, and $\forall i: \mu_i < 1$. Then Algorithm~\ref{SuccElim} with bandit feedback returns the arms with the top $k$ means with probability at least $1-\delta$ using no more than
	\begin{eqnarray}
	20 \mathcal{T}_{n,\delta}\left(\frac{\tau^B_{\sigma(1)}}{H^{B}}\right) + \frac{5 }{k}\sum_{i=k+1}^n \mathcal{T}_{n,\delta}\left(\frac{\tau^B_{\sigma(i)}}{H^{B}}\right) = \tilde{O}\left(\frac{\log\left(n/\delta\right)}{H^{B}}\left(\tau^B_{\sigma(1)} + \frac{1}{k} \sum_{i=k+1}^n \tau^B_{\sigma(i)}\right)\right)
	\end{eqnarray}
	queries where $H^B:=\prod_{\ell \in [k-1]} (1-\mu_\ell)$ is an ``information sharing term'',
	\begin{align*}
	\tau_{i}^B &\leq \frac{66}{\Delta_i } +   \frac{2560}{\Delta_i^2}\begin{cases} 2(1-\mu_{k+1})\mu_i + (1-\mu_{k+1})^2 (1- H^{B} ) & i \le k \\
	2(1-\mu_{i})\mu_{k+1} + (1-\mu_{i})^2 (1- H^{B}) & i > k \end{cases}
	\end{align*}
	 and $\sigma$ is a permutation so that $\tau^B_{\sigma(1)} \ge \tau^B_{\sigma(2)} \ge\dots \tau^B_{\sigma(n)}$.
	 \end{thm}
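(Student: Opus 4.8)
The plan is to obtain Theorem~\ref{BanditFinalUpperTheorem} as an instantiation of the same general Stagewise-Elimination analysis (Appendix~\ref{GeneralAnalySec}) that already produced Theorems~\ref{SemiBanditFinalUpperTheorem} and~\ref{MarkedBanditFinalUpperTheorem}. That framework reduces a feedback model to two ingredients: (a) a statistic computable from the observations whose expectation is monotone in the arm means and whose pairwise differences recover the gaps $\Delta_i$ up to a known, fixed scaling; and (b) a variance bound for that statistic, which is then fed into the generic $\mathcal{T}_{n,\delta}(\cdot)$ sample-complexity guarantee. My task is therefore to supply these two ingredients for bandit feedback and to check that the combinatorial book-keeping (reference sets, union bounds, stagewise summation) goes through. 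The constants $20$ and $\tfrac{5}{k}$ and the $\mathcal{T}_{n,\delta}$ form will then emerge from the generic machinery exactly as in the other two feedback models.

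First I would fix the de-occlusion primitive. To probe a single surviving arm $i$, I query $S=R\cup\{i\}$ for a common reference set $R$ of $k-1$ surviving arms disjoint from $i$; the observation is the Bernoulli variable $Y_S = X_i \vee \max_{\ell\in R}X_\ell$ with mean $1-(1-\mu_i)H_R$, where $H_R:=\prod_{\ell\in R}(1-\mu_\ell)$. The key mean identity is that two queries sharing $R$ and differing in one slot satisfy $\Exp[Y_{R\cup\{i\}}]-\Exp[Y_{R\cup\{j\}}]=(\mu_j-\mu_i)H_R$, so the observable signal separating two arms is the true gap attenuated by $H_R$. Intuitively, $Y_S$ reveals $X_i$ only on the event that every reference arm equals $0$, which has probability $H_R$; this is the precise origin of the information-sharing penalty. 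Crucially, since $H^{B}=\prod_{\ell\in[k-1]}(1-\mu_\ell)$ is the minimum of $\prod_{\ell\in R}(1-\mu_\ell)$ over all size-$(k-1)$ subsets (the top $k-1$ arms carry the smallest $1-\mu_\ell$ factors), every admissible reference obeys $H_R\ge H^{B}$, and the per-arm sample count will turn out to be monotone decreasing in $H_R$; evaluating it at the worst case $H_R=H^{B}$ yields the $1/H^{B}$ inflation in the statement for free.

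Next I would compute the variance ingredient. Because $Y_S$ is Bernoulli with $1-\Exp[Y_S]=(1-\mu_i)H_R$, its variance is $(1-\mu_i)H_R\big(1-(1-\mu_i)H_R\big)$, and a Bernstein/empirical-Bernstein interval on the empirical mean of $Y_S$ gives, through the generic bound, a per-arm count scaling like $\mathrm{Var}(Y_S)/(\Delta_i H_R)^2$. Factoring out $1/H_R\le 1/H^{B}$ and expanding $1-(1-\mu_i)H_R=(1-H_R)+\mu_i H_R$ splits the numerator into a piece proportional to $\mu_i$ (arm $i$'s own Bernoulli fluctuation) and a piece proportional to $(1-H^{B})$ (the saturation noise injected by the reference arms, which does not vanish even as $\mu_i\to 0$). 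This saturation piece is exactly the new phenomenon absent from the semi-bandit case, and it is what produces the $(1-\mu_{k+1})^2(1-H^{B})$ summand in $\tau_i^B$; the $2(1-\mu_{k+1})\mu_i$ summand comes from the own-arm term after bounding $\mu_i$-dependent factors by the boundary value $\mu_{k+1}$ (using $\mu_i\ge\mu_{k+1}$ for $i\le k$ and symmetrically for $i>k$) so that a single uniform $\tau_i^B$ holds over a stage. Matching these two pieces and the linear-in-$1/\Delta_i$ Bernstein correction against the stated bound is where the explicit constants $66$ and $2560$ are absorbed; I would treat this as routine-but-careful algebra rather than grind through it.

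The main obstacle, and what distinguishes the bandit proof from the semi- and marked-bandit ones, is controlling the \emph{adaptivity and feasibility of the reference set} together with this variance saturation. Since arms are never observed in isolation, every estimate of arm $i$ is entangled with the current reference, which is itself being pruned as the algorithm runs; I must show that (i) at every stage enough untested arms remain to assemble a valid size-$(k-1)$ reference disjoint from the batch under test, which is exactly what the hypothesis $n\ge 7k/2$ buys, and (ii) the resulting confidence statements hold uniformly over the data-dependent sequence of reference sets via the union bound, so that neither the attenuation bound $H_R\ge H^{B}$ nor the variance decomposition can degrade beyond the claimed $1/H^{B}$ factor. The requirement $\mu_i<1$ enters here to keep $H^{B}>0$: if any of the top $k-1$ arms had mean $1$, the reference would be saturated almost surely and no information about the probed arm could leak through the max. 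Once the reference is shown to be well-behaved, the remaining elimination and summation steps coincide with the general analysis and deliver the stated query bound.
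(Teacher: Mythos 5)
Your proposal replaces the algorithm's actual sampling scheme with a different one, and this creates two concrete problems. Algorithm~\ref{SuccElim} does \emph{not} probe one arm at a time against a fixed common reference $R$ of size $k-1$; UniformPlay partitions the undecided set $U_t'$ uniformly at random into cells of size $k^{(1)}$ and, on a reward of $1$, marks \emph{all} $k$ arms in the cell. Each query therefore tests $k$ undecided arms simultaneously, which is precisely where the $\frac{5}{k}\sum_{i=k+1}^n$ term comes from (via Lemma~\ref{EfficiencyLemma}). Your scheme tests one undecided arm per query, so even if carried out correctly it would yield $\sum_i \mathcal{T}_{n,\delta}(\cdot)$ rather than $\frac{1}{k}\sum_i \mathcal{T}_{n,\delta}(\cdot)$ — a factor-$k$ loss that makes the stated bound unreachable. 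It also leaves unexplained how the $k-1$ reference arms themselves are ever estimated, a problem the paper's symmetric random-partition design avoids by construction.

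Second, because you never place two probe arms in the same query, your sketch silently bypasses the parts of the argument that are actually hard for this feedback model. In the paper the gap identity is $\bar{\mu}_{i,t}-\bar{\mu}_{j,t}=\kappa_1 H^B_{i,j,t}(\mu_i-\mu_j)$ (Corollary~\ref{BanditGaps}), where $\kappa_1=\Prt{j\notin S\,|\,i\in S}$ accounts for the loss of all relative information when $i$ and $j$ co-occur; and $1-\bar{\mu}_{i,t}=(1-\mu_i)\kappa_1 H^B_{i,j,t}\left(1+(1-\mu_j)\Err_{i,j,t}\right)$ carries a multiplicative error term that must be bounded by $\kappa_2$ (Lemma~\ref{kappaClaim}) before the variance can cancel one factor of $\kappa_1 H^B$ from the squared gap. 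The Balance procedure exists solely to force $\kappa_1\ge 1/2$ and $\kappa_2\le 2$, and the hypothesis $n\ge 7k/2$ is what guarantees $|R_t|\ge|B|$ so the balancing set can be drawn — not, as you claim, the ability to assemble a size-$(k-1)$ reference (which would only need $n\ge k$). The remaining delicate step you also omit is the comparator case analysis in the proof of Lemma~\ref{VarLem1} (splitting on whether $1-\mu_{c}\le 2(1-\mu_i)$), needed because the nearest competitor $c_{i,t}$ may have a mean far from $\mu_{k}$ or $\mu_{k+1}$. Your variance decomposition $1-(1-\mu_i)H_R=(1-H_R)+\mu_i H_R$ does capture the right qualitative shape of $\tau_i^B$, but as written the proposal neither proves the theorem for the stated algorithm nor recovers its query count.
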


	 The condition that $\mu_i < 1$ ensures identifiability (see Remark~\ref{MuConditionRemark}). The condition $n \ge 7k/2$ is an artifact of using a  Balancing Set $B$ defined in Algorithm~\ref{Balance}; without $B$, our algorithm succeeds for all $n \ge k$, albeit with slightly looser guarantees (see Remark~\ref{NConditionRemark}).

	 \begin{rem}\label{PostBanditThmRemark}
	 Suppose the means are greater than $\alpha(k)/k$ where $\alpha(k) \ge C\log k$ and $C$ is a  constant; for example, think $\alpha(k) = k/2$. Then $H^B \le (1 - \frac{\alpha(k)}{k})^k = O(\exp(-\alpha(k))) \ll 1/k$. Hence, Successive Elimination requires on the order of $\frac{1}{k} \cdot \frac{1}{ H^B} = \frac{\exp(\Omega(\alpha(k))}{k}$ more queries to identify the top $k$-arms than the classic stochastic MAB setting where you get to pull $1$-arm at a time, \emph{despite the seeming advantage that the bandit setting lets you pull $k$ arms per query}. When $\alpha(k) \ge C \log k$, then $\frac{\exp(\Omega(\alpha(k)))}{k}$ is at least polynomially large in $k$, and when $\alpha = \Omega(k)$, is exponentially large in $k$ (e.g, $\alpha(k) = k/2$). 

	 On the other hand, when the means are all on the order of $\alpha/k$ for $\alpha = O(1)$, then $H^B = \Omega(1)$, but the term $1-H^{B}$ is  at least $\Omega(\alpha)$. For this case, our sample complexity looks like
	 \begin{eqnarray}
	 \tilde{O}( \frac{\log(n/\delta) }{k}\sum_i \frac{\alpha/k + \alpha}{\Delta_i^2} + \frac{1}{\Delta_i} ) = \tilde{O}( \log(n/\delta) \sum_i \frac{\alpha }{\Delta_i^2}) 
	 \end{eqnarray}
	 which matches, but does not out-perform, the standard $1$-arm-per-query MAB guarantees, with variance adaptation (e.g., Theorem~\ref{SemiBanditFinalUpperTheorem} with $k=1$, note that $\alpha$ captures the variance). Hence, when the means are all roughly on the same order, it's never worse to pull $1$ arm at a time and observe its reward, than to pull $k$ and observe their max. Once the means vary wildly, however, this is certainly not true; we direct the reader to Remark~\ref{WorstCaseHBRemark} for further discussion. 
	 \end{rem}

\subsection{Algorithm}

	At each stage $t \in \{0,1,2,\dots\}$, our algorithm maintains an accept set $A_t \subset [n]$ of arms which we are are confident lie in the top $k$, a reject set $R_t \subset [n]$ of arms which we are confident lie in the bottom $n-k$, and an undecided set $U_t$ containing arms for which we have not yet rendered a decision. The main obstacle is to obtain estimates of the relative performance of $i \in U_t$, since the bandit and marked bandit observation models occlude isolated information about any one given arm in a pull. The key observation is that, if we sample $S \sim \Unif[U_t,k]$, then for $i,j \in U_t$, the following differences have the same sign as $\mu_i - \mu_j$ (stated formally in Lemma~\ref{DistrChar}): 
	\begin{equation}
	\begin{aligned}
	&\Exp[\max_{\ell \in S} X_{\ell} =1 \big{|} i \in S] - \Exp[\max_{\ell \in S} X_{\ell} \big{|} j \in S] \quad \text{ (bandits) }\quad \quad \text{and}  \\
	&\Pr(\text{ observe } X_{i} =1 \big{|} i \in S) - \Pr(\text{ observe } X_j = 1 \big{|} j \in S) \text{ (marked/semi-bandits)}
	\end{aligned}
	\end{equation}
	This motivates a sampling strategy where we partition $U_t$ uniformly at random into subsets $S_1,S_2,\dots,S_p$ of size $k$, and query each $S_{q}$, $q \in \{1,\dots,p\}$.  We record all arms $\ell \in S_{q}$ for which $X_{\ell} = 1$ in the semi/marked-bandit settings (Algorithm~\ref{PlayAndRecord}, Line~\ref{PMarkSemiLine}), and, in the bandit setting, mark down all arms in $S_q$ if we observe $\max_{\ell \in S_q}X_{\ell} = 1$ - i.e, we observe a reward of 1 (Algorithm~\ref{PlayAndRecord}, Line~\ref{PMarkBanditLine}). This recording procedure is summarized in Algorithm~\ref{PlayAndRecord}:

	\begin{algorithm}[ht]
	 	\textbf{Input} $S,\Splus \subset [n]$, $Y \in \R^n$\\
	 	 \textbf{Play} $S \cup \Splus$ \\
 		\Indp Semi/Marked Bandit Setting: $Y_{\ell} \leftarrow 1$ for all $\ell \in S$ for which we observe $X_{\ell} = 1$ \label{PMarkSemiLine} \\
	 	 Bandit Bandit Setting: If $A$ returns a reward of $1$, $Y_{\ell} \leftarrow 1$ for all $\ell \in S$\label{PMarkBanditLine}  \\
	 	\Indm \textbf{Return} $Y$ \\
	 	\caption{PlayAndRecord$(S,\Splus,Y)$ \label{PlayAndRecord}}
	 \end{algorithm}

	Note that PlayAndRecord$[S,\Splus,Y]$ plays a the union of $S$ and $\Splus$, but only records entries of $Y$ whose indices lie in $S$. UniformPlay (Algorithm~\ref{UniformPlay}) outlines our sampling strategy. Each call to UniformPlay$[U,A,R,k^{(1)}]$ returns a vector $Y \in \R^{n}$, supported on entries $i \in U$, for which 
	\begin{eqnarray}\label{ExpY}
	\Exp[Y_i] =  \begin{cases} \Pr_{S, \Splus}(\text{ observe } X_{i} =1 \big{|} i \in S \cup \Splus) & \text{marked/semi-bandit} \\
	 \Pr_{S,\Splus}(\max_{\ell \in S \cup \Splus} X_{\ell} =1  \big{|} i \in S \cup \Splus) & \text{bandit}
	\end{cases}
	\end{eqnarray}
	where $S \sim \Unif[U,k^{(1)}]$ and $\Splus$ is empty unless $|U_t|<k$ or we are allowed to pull fewer than $k$ arms per query in which case elements of $\Splus$ are drawn from $A \cup R$ as outlined in Algorithm~\ref{UniformPlay}, Line~\ref{TopOffStart} otherwise.

	There are a couple nuances worth mentioning. When $|U| < k$, we cannot sample $k$ arms from the undecided set $U$; hence UniformPlay pulls only $k^{(1)}$ from $U$ per query. If we are forced to pull exactly $k$ arms per query, UniformPlay adds in a ``Top-Off'' set of an additional $k - k^{(1)}$ arms, from $R$ and $A$ (Lines~\ref{TopOffStart}-\ref{TopOffEnd}). Furthermore, observe that lines~\ref{RemainderStart}-\ref{RemainderEnd} in UniformPlay carefully handle divisibility issues so as to not ``double mark'' entries $i \in U$, thus ensuring the correctness of Equation~\ref{ExpY}. Finally, note that each call to UniformPlay makes exactly $\ceil{|U|/k^{(1)}}$ queries.

	\begin{algorithm}[ht]
		\textbf{Inputs}: $U$, $A$, $R$, sample size $k^{(1)}$ \\
		\hangindent=0.7cm  Uniformly at random, partition $U$ into $p:= \floor{|U|/k^{(1)}}$ sets $S^{(1)},\dots,S^{(p)}$ of size $k^{(1)}$ and place remainders in $S^{(0)}$ // thus $S^{(1)},\dots,S^{(p)} \sim \Unif[U,k^{(1)}]$, but not indep \label{Partition} \\
		 \textbf{If} Require $k$ Arms per Pull \textbf{and} $k^{(1)} < k$  // Construct Top-Off Set $S^+$ \label{TopOffStart} \\
		 \Indp $k^{(2)} \leftarrow k - k^{(1)}$ // $k^{(2)} = |S^+|$ \label{k2Line1}\\
		 $\Splus \leftarrow\Unif[R, \min\{|R|,k^{(2)}\}]$ //sample as many items from reject as possible \\
		 \textbf{If} $|R| < k^{(2)}$: // sample remaining items from accept \\
			\Indp $\Splus \leftarrow R \cup \Unif[A, k^{(2)} - |R|]$ \\
		\Indm \Indm \textbf{Else} // Top-Off set unnecessary\\
		\Indp $S^+ \leftarrow \emptyset$, $k^{(2)} \leftarrow 0$  \label{TopOffEnd}\\
		\Indm  Initalize rewards vector $Y \leftarrow \mathbf{0} \in \R^{n}$  \\
		 \textbf{For} $q = 1,\dots,p$ \\ 
			\Indp $Y \leftarrow \text{PlayAndRecord}[S^{(q)},\Splus,Y]$ // only mark $S^{(q)}$ \\
		\Indm \textbf{If} $|S^{(0)}| > 0$    // if remainder \label{RemainderStart}\\
			\Indp Draw $S^{(0,+)} \sim \text{Unif}[U - S^{(0)}, k^{(1)} - |S^{(0)}|]$ // thus $S^{(0)} \cup S^{(0,+)} \sim \Unif[U,k^{(1)}]$ \\
			$Y \leftarrow \text{PlayAndRecord}[S^{(0)},  S^{(0,+)} \cup \Splus,Y]$ // only mark $S^{(0)}$ to avoid duplicate marking \label{RemainderEnd}\\
		\Indm \textbf{Return} $Y$
		\caption{UniformPlay$(U,A,R,k^{(1)})$}\label{UniformPlay}
	\end{algorithm}

	We deploy the passive sampling in UniformPlay in a stagewise successive elimination procedure formalized in Algorithm~\ref{SuccElim}. At each round $t = \{1,2,\dots\}$, use a doubling sample size to $T(t) := 2^t$, and set the $k^{(1)}$ parameter for UniformPlay to be $\min \{|U_t|,k\}$ (line~\ref{Tkline}). Next, we construct the sets $(U_t',R_t')$ from which UniformPlay samples: in the marked and semi-bandit setting, these are just $(U_t,A_t,R_t)$ (Line~\ref{UprimeSemi}), while in the bandit setting, they are obtained by from Algorithm~\ref{Balance} which transfers a couple low mean arms from $R_t$ into $U_t'$ (Line~\ref{UprimeBandit}). This procedure ameliorates the effect of information occlusion for the bandit case.

	Line~\ref{ForLoop} through~\ref{AvgEnd} average together $T(t) := 2^t$ independent, and identically distributed samples from $\text{UniformPlay}[U_t',R_t',A_t,k^{(1)}]$  to produce unbiased estimates $\hat{\mu}_{i,t}$ of the quantity $\Exp[Y_i]$ defined in Equation~\ref{ExpY}. $\hat{\mu}_{i,t}$ are Binomial, so we apply an empirical Bernstein's inequality from~\cite{maurer2009empirical} to build tight $1-\delta$ confidence intervals \begin{eqnarray}\label{empiricalConfidence}
		\widehat{C}_{i,t} := \sqrt{\frac{2 \hat{V} \log(8nt^2/\delta)}{T(t)}} + \frac{8 \log (8 nt^2/\delta)}{3 (T(t)-1)} & \text{where } & \hat{V}_{i,t} := \frac{T(t)\hat{\mu}_{i,t}(1 - \hat{\mu}_{i,t})}{T(t)-1}
	\end{eqnarray}
	Note that $\hat{V}_{i,t}$ coincide with the canonical definition of \emph{sample} variance. The variance-dependence of our confidence intervals is crucial; see Remarks~\ref{VarianceConfIntervalRemarkMarked} and~\ref{VarianceConfIntervalRemarkBandit} for more details. For any $\ell \leq |U_t|$ let
	\begin{eqnarray}
	\max_{j \in U_t}^{\ell} =  \ell\text{-th largest element} 
	\end{eqnarray}
	As mentioned above, Lemma~\ref{DistrChar} ensures $\Exp[\hat{\mu}_{i,t}] > \Exp[\hat{\mu}_{j,t}]$ if and only if $\mu_i > \mu_j$. Thus, accepting an arm for $\hat{\mu}_{i,t}$ is in the top $k$.

	\begin{algorithm}[ht]
		\textbf{Input} $S_1 = [n]$, Batch Size $k$\\
		\textbf{While} $|A_t| < k$ // fewer than k arms accepted \\
		\Indp Sample Size $T(t) \leftarrow 2^t$, Rewards Vector $Y^{(t)} \leftarrow \mathbf{0} \in \R^n$, $k^{(1)} \leftarrow \min \{|U_t|,k\}$ \label{Tkline}\\
		\hangindent=0.7cm $(U_t',R_t') \leftarrow (U_t,R_t)$ \quad// Sampling Sets for UniformPlay, identical to $U_t$ and $R_t$ in marked/semi bandits \label{UprimeSemi}\\
		\textbf{If} Bandit Setting  \quad// Add low mean arms from $R_t$ to $U_t$ \label{UprimeBandit}\\
		\Indp  $(U'_t,R'_t) \leftarrow \text{Balance}(U_t,R_t)$   \\
		 \Indm \textbf{For} $s = 1,2,\dots, T(t)$ \label{ForLoop} \\
			\Indp $Y^{(t)} \leftarrow Y^{(t)} + \text{UniformPlay}[U'_t,R'_t,A_t,k^{(1)}]$ \quad// get fresh samples\label{UnifPlay} \\
		\Indm $\hat{\mu}_{i,t} \leftarrow \frac{1}{T(t)} \cdot Y^{(t)}$ \label{hatmu} \quad// normalize \label{AvgEnd}\\
		$k_t \leftarrow k - |A_t|$ \\
		$A_{t+1} \leftarrow A_t \cup \{i \in U_t : \hat{\mu}_{i,t} - \hat{C}_{i,t} > \max_{j \in U_t}^{k_t + 1} \hat{\mu}_{j,t} + \hat{C}_{j,t}\}$ \quad// Equation~\ref{empiricalConfidence} \\  
		$R_{t+1} \leftarrow R_t \cup \{i \in U_t : \hat{\mu}_{i,t} + \hat{C}_{i,t} < \max_{j \in U_t}^{k_t } \hat{\mu}_{j,t} - \hat{C}_{j,t}\}$ \\
		\textbf{If} $|R_t| = n-k$ //$n-k$ arms rejected\\
			\Indp $A_{t+1} \leftarrow A_{t+1}  \cup U_t$ \label{RtTerminate} \\ 
		\Indm $U_{t+1} \leftarrow U_t - \{A_{t+1} \cup R_{t+1}\}$\\
		$t \leftarrow t + 1$\\
		\caption{Stagewise Elimination$(S,k,\delta)$ \label{SuccElim}}
	\end{algorithm}
	The Balance Procedure is described in Algorithm~\ref{Balance}, and ensures that $U_t'$ contains sufficiently many arms that don't have very high (top $k+1$) means. The motivation for the procedure is somewhat subtle, and we defer its discussion to the analysis in Appendix~\ref{BanditAnalSec}, following Remark~\ref{VarianceConfIntervalRemarkBandit}:

	\begin{algorithm}[ht]
		\textbf{Input} $U,R$ \\
		$B \sim \Unif[R,\max\{0,\ceil{\frac{5k^{(1)}}{2} - |U| - \frac{1}{2}}\}]$ //Balancing Set\\
		$U' \leftarrow U \cup B$ , $R' \leftarrow R -  B$ // Transfer $B$ from $R$ to $U$ \\
		\textbf{Return} $(U',R')$
		\caption{Balance($U,R$)\label{Balance}}
	\end{algorithm}

\section{Lower bound for Independent Arms}
In the bandit and marked-bandit settings, the upper bounds of the previous section depended on ``information sharing'' terms that quantified the degree to which  other arms occlude the performance of a particular arm in a played set. Indeed, great care was taken in the design of the algorithm to minimize impact of this information sharing. The next theorem shows that the upper bounds of the previous section for bandit and semi-bandit feedback are nearly tight up to a similarly defined information sharing term. 

\begin{theorem}[Independent]\label{lower_bound_independent}
Fix $1 \leq p \leq k \leq n$. Let $\nu = \prod_{i=1}^n \nu_i$ be a product distribution where each $\nu_i$ is an independent Bernoulli with mean $\mu_i$. Assume $\mu_1 \geq \dots \geq \mu_k > \mu_{k+1} \geq \dots \geq \mu_n$ (the ordering is unknown to any algorithm). At each time the algorithm queries a set $S' \in \binom{[n]}{p}$ and observes $\E[ \max_{i \in S'} X_i ]$. Then any algorithm that identifies the top $k$ arms with probability at least $1-\delta$ requires, in expectation, at least 
\begin{align*}
\Big( \max_{j = 1,\dots,n} \tau_j + \frac{1}{p} \sum_{j=1}^n \tau_j \Big) \log(  \tfrac{1}{2\delta} ) 
\end{align*}
observations where
\begin{align*}
&(i) \quad\tau_j = \begin{cases} \frac{(1-\mu_j-\Delta_j)}{\Delta_j^2} \frac{1-h_j + \mu_j h_j}{h_j} & \text{if $j > k$} \\ 
 											     \frac{(1-\mu_j)}{\Delta_j^2} \frac{1-h_j + (\mu_j-\Delta_j)h_j}{h_j} & \text{if $j \leq k$}
									 \end{cases} \quad \text{for bandit observations, and} \\
&(ii) \quad  \tau_j =  \begin{cases} \frac{ (1-\mu_j-\Delta_j) \mu_j}{\Delta_j^2} & \text{if $j > k$} \\ 
 											     \frac{(1-\mu_j) (\mu_j-\Delta_j) }{\Delta_j^2}& \text{if $j \leq k$}
									 \end{cases} \quad \text{for semi-bandit observations.}
\end{align*}
where $h_j = \max_{S \in \binom{[n]-j}{p-1}} \prod_{i \in a \setminus j} (1-\mu_i)$.
\end{theorem}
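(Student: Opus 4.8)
The plan is to prove both parts by the change-of-measure (transportation) method standard in pure-exploration lower bounds, as in \cite{kaufmann2014complexity}. I regard each admissible query $S' \in \binom{[n]}{p}$ as an arm of a meta-bandit whose observation law under a product instance is fixed by the means: in the bandit model, querying $S'$ returns $\mathrm{Bernoulli}\big(1 - \prod_{i\in S'}(1-\mu_i)\big)$, while in the semi-bandit model it returns the product law $\prod_{i\in S'}\mathrm{Bernoulli}(\mu_i)$. Writing $N_{S'}$ for the number of times $S'$ is queried and $T = \sum_{S'} N_{S'}$ for the total number of observations, the transportation lemma guarantees that for any $\delta$-correct algorithm, any alternative instance $\nu'$, and any event $\mathcal{E}$ measurable at the (a.s.\ finite) stopping time, $\sum_{S'}\E_\nu[N_{S'}]\,\mathrm{KL}(P_{S'},P'_{S'}) \ge \mathrm{kl}\big(\Pr_\nu(\mathcal{E}),\Pr_{\nu'}(\mathcal{E})\big)$, where $P_{S'},P'_{S'}$ are the observation laws of $S'$ under $\nu,\nu'$.

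I would then build, for each arm $j$, a single alternative $\nu^{(j)}$ that perturbs only $\mu_j$ so as to flip the top-$k$ set, and take $\mathcal{E} = \{\widehat{S} = [k]\}$. For $j \le k$ I push $\mu_j$ just below $\mu_{k+1}$, and for $j > k$ just above $\mu_k$; in both cases the perturbation magnitude tends to $\Delta_j$ and the optimal set changes, so $\Pr_\nu(\mathcal{E}) \ge 1-\delta$ and $\Pr_{\nu^{(j)}}(\mathcal{E}) \le \delta$, whence $\mathrm{kl}(\Pr_\nu(\mathcal{E}),\Pr_{\nu^{(j)}}(\mathcal{E})) \ge \mathrm{kl}(1-\delta,\delta) \ge \log\frac{1}{2\delta}$. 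Because $\nu^{(j)}$ and $\nu$ differ only in coordinate $j$, the summand $\mathrm{KL}(P_{S'},P'_{S'})$ vanishes for every $S' \not\ni j$, so the lemma collapses to $\sum_{S'\ni j}\E_\nu[N_{S'}]\,\mathrm{KL}(P_{S'},P'_{S'}) \ge \log\frac{1}{2\delta}$. Upper-bounding each summand by $K_j := \max_{S'\ni j}\mathrm{KL}(P_{S'},P'_{S'})$ yields $M_j := \sum_{S'\ni j}\E_\nu[N_{S'}] \ge K_j^{-1}\log\frac{1}{2\delta}$; taking the max reflects that the algorithm is free to route all of its $j$-containing queries through the single most informative set.

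The core computation is $K_j$, for which I would invoke a variance-normalized Bernoulli-KL estimate of the form $\mathrm{kl}(a,b) \le \frac{(a-b)^2}{\min(a,b)\,(1-\max(a,b))}$. In the semi-bandit model the product structure forces $\mathrm{KL}(P_{S'},P'_{S'}) = \mathrm{kl}(\mu_j,\mu_j')$ for every $S' \ni j$, independent of $S'$; substituting the two perturbations and this estimate reproduces the stated semi-bandit $\tau_j$. In the bandit model the query reports a single Bernoulli ``or'', so $\mathrm{KL}(P_{S'},P'_{S'}) = \mathrm{kl}\big(1-(1-\mu_j)r_{S'},\,1-(1-\mu_j')r_{S'}\big)$ with $r_{S'} := \prod_{i\in S'\setminus j}(1-\mu_i)$; the two success probabilities differ by $r_{S'}\Delta_j$ and the estimate is increasing in $r_{S'}$, so the maximizing co-query pairs $j$ with the $p-1$ lowest-mean arms, i.e.\ $r_{S'} = h_j$. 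Plugging $r_{S'} = h_j$ and the perturbed means into the estimate recovers precisely the bandit $\tau_j$ (the placement of $\mu_j$ versus $\mu_j'=\mu_j\pm\Delta_j$ in the two factors is dictated by the $\min/\max$ in the denominator).

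Finally I would combine the per-arm bounds. Each observation is a single query, so $\E_\nu[T] \ge M_j \ge \tau_j\log\frac{1}{2\delta}$ for every $j$, giving the $\max_j\tau_j$ term; and each query pulls exactly $p$ arms, so $\sum_j M_j = p\,\E_\nu[T]$, giving the averaged term $\E_\nu[T] \ge \frac{1}{p}\sum_j\tau_j\log\frac{1}{2\delta}$. These two simultaneously-valid lower bounds yield the stated two-term bound. I expect the main obstacle to be the bandit $K_j$ step: recognizing that a query only reveals the disjunction of its arms, that a single perturbed arm is hardest to detect when co-sampled with the lowest-mean arms (which is exactly what produces $h_j$), and selecting the Bernoulli-KL surrogate whose $\min/\max$ structure reproduces the exact form of $\tau_j$. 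The semi-bandit case and the transportation machinery are routine by comparison.
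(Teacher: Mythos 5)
Your proposal follows the paper's proof essentially step for step: the same single-coordinate perturbations $\nu^{(j)}$ (pushing $\mu_j$ across the $\mu_k/\mu_{k+1}$ boundary by $\Delta_j$ in the limit $\epsilon\to 0$), the same transportation inequality from \cite{kaufmann2014complexity} collapsing to the sets containing $j$, the same observation that the bandit KL bound is monotone in $r_{S'}=\prod_{i\in S'\setminus j}(1-\mu_i)$ so the worst co-query set realizes $h_j$, and the same $\max$-versus-$\frac{1}{p}\sum$ accounting at the end. The semi-bandit computation and the identification of the $\min/\max$ placement in the denominator also match the paper exactly.

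There is one concrete slip in the final combination. From the two simultaneously valid bounds $\E[T]\ge A:=\max_j\tau_j\log\frac{1}{2\delta}$ and $\E[T]\ge B:=\frac{1}{p}\sum_j\tau_j\log\frac{1}{2\delta}$ you can only conclude $\E[T]\ge\max\{A,B\}$, not $\E[T]\ge A+B$ as the theorem states; the step ``these two simultaneously-valid lower bounds yield the stated two-term bound'' is not valid as written. The paper closes this gap by carrying a factor $\tfrac12$ in the Bernoulli-KL upper bound (its Lemma~\ref{bernoulli_kl_bounds} has $(y-x)^2/2$ in the numerator), which yields $\sum_{S'\ni j}\E[N_{S'}]\ge 2\tau_j\log\frac{1}{2\delta}$, and then uses $\max\{2A,2B\}\ge A+B$. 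Your surrogate $\mathrm{kl}(a,b)\le\frac{(a-b)^2}{\min(a,b)(1-\max(a,b))}$ is valid but has already given away that factor of two, so with it you would only recover the theorem's bound up to a factor of $2$. Restoring the $\tfrac12$ in the KL estimate (and noting $\min\{a(1-a),b(1-b)\}\ge\min(a,b)(1-\max(a,b))$) makes your argument deliver the stated constant.
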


Our lower bounds apply to our upper bounds when $p=k$. In the bandit setting, considering $p<k$ reveals a trade-off between the information sharing term, which decreases with larger $p$, with the benefit of a $\frac{1}{p}$ factor gained from querying $p$ arms at once. One can construct different instances that are optimized by the entire range of $1\leq p \leq k$. Future research may consider varying the subset size in an adaptive setting to optimize this trade off. 

The information sharing terms defined in the upper and lower bounds correspond to the most pessimistic and optimistic scenarios, respectively, and result from applying coarse bounds in exchange for simpler proofs. Thus, our algorithm may fare considerably better in practice than is predicted by the upper bounds. Moreover, when $\max_{i} \mu_i - \min_i \mu_i$ is dominated by $\min_i \mu_i$ our upper and lower bounds differ by constant factors. 

Finally, we note that our upper and lower bounds for independent measures are tailored to Bernoulli payoffs, where the best $k$-subset corresponds to the top $k$ means. However, for general product distributions $\nu$ on $[0,1]^n$, this is no longer true (see Remark~\ref{NonBernoulliRemark}). This leaves open the question: how difficult is Best-of-K for general, independent bounded product measures? And, in the marked feedback setting (where one receives an index of the best element in the query), is this problem even well-posed? 
\section*{Acknowledgements}
We thank Elad Hazan for illuminating discussions regarding the computational complexity of the Best-of-K problem, and for pointing us to resources adressing online submodularity and approximate regret. Max Simchowitz is supported by an NSF GRFP award. Ben Recht and Kevin Jamieson are generously supported by ONR awards , N00014-15-1-2620, and  N00014-13-1-0129. BR is additionally generously supported by ONR award N00014-14-1-0024 and NSF awards CCF-1148243 and CCF-1217058.  This research is supported in part by gifts from Amazon Web Services, Google, IBM, SAP, The Thomas and Stacey Siebel Foundation, Adatao, Adobe, Apple Inc., Blue Goji, Bosch, Cisco, Cray, Cloudera, Ericsson, Facebook, Fujitsu, Guavus, HP, Huawei, Intel, Microsoft, Pivotal, Samsung, Schlumberger, Splunk, State Farm, Virdata and VMware.

\clearpage
\bibliographystyle{unsrt}
\bibliography{bestofkbandits}
\clearpage

\appendix
\section{Reduction from Max-K-Coverage to Best-of-K\label{AppCov}}

As in the main text, let $X = (X_1,\dots,X_n) \in \{0,1\}^n$ be a binary reward vector, let $\mathcal{S}^* = \{\arg\max_{S \in \binom{[n]}{k}}\Exp[\max_{i \in S} X_i]\}$ be set of all optimal $k$-subsets of $[n]$ (we allow for non-uniqueness), and define the gap $\Delta :=  \E_\nu \left[ \max_{i \in S^*} X_{i} \right] - \max_{S \in \mathcal{S} \setminus \mathcal{S}^*}  \E_\nu \left[ \max_{i \in S} X_{i} \right]$ as the minimum gap between the rewards of an optimal and sub-optimal $k$-set. We say $\tilde{S}$ is $\alpha-$optimal for $\alpha \le 1$ if $\Exp[\max_{i \in \tilde{S}} X_i ] \ge \alpha \Exp[\max_{i \in S^*} X_i]$, where $S^* \in \mathcal{S}^*$. We formally introduce the classical Max-K-Coverage problem:

\begin{defn}[Max-K-Coverage$(m,k,\mathcal{V})$] A Max-K-Coverage instance is a tuple $(m,k,\mathcal{V})$, where $\mathcal{V}$ is a collection of subsets $ V_1,\dots,V_n \in 2^{[m]}$. We say $S \subset \mathcal{V}$ is a solution to Max-K-Coverage if $|S| = k$ and $S$ maximizes $|\bigcup_{V_i \in S} V_i|$. Given $\alpha \le 1$, we say $S$ is an $\alpha$ approximation if $|\bigcup_{V_i \in S} V_i| \ge \alpha \max_{S' \in \binom{\mathcal{V}}{k}}|\bigcup_{V_i \in S'} V_i|$. 
\end{defn}
It is well known that Max-K-Coverage in NP-Hard, and cannot be approximated to within $\alpha = 1 - \frac{1}{e} + o(1)$ under standard hardness assumptions \cite{vazirani2013approximation}. The following theorem gives a reduction from Best of K Indentification (under any feedback model) to Max-K-Coverage:
\begin{thm}\label{Reduction}
Fix $\alpha \le 1$, and let $\mathcal{A}$ be an algorithm which indentifies an $\alpha$-optimal $k$-subset of $n$ arms probability in time polynomial in $n$, $k$, and $1/\Delta$, with probability at least $\eta$ (under any feedback model). Then there is a polynomial time $\alpha$-approximation algorithm for Max-K-Coverage$[m,k,\mathcal{V}]$which succeeds with probability at least $\eta$. When $\alpha = 1$, this implies a polynomial time algorithm for exact $\mathrm{Max-K-Coverage}[m,k,\mathcal{V}]$.
\end{thm}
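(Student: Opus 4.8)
The plan is to embed a Max-K-Coverage instance into a Best-of-K identification instance so that the expected reward of each $k$-subset is exactly proportional to its coverage, and then run $\mathcal{A}$ as a black box. Given a Max-K-Coverage instance $(m,k,\mathcal{V})$ with $\mathcal{V} = \{V_1,\dots,V_n\} \subseteq 2^{[m]}$, I would define the reward distribution $\nu$ over $\{0,1\}^n$ as the pushforward of the uniform distribution on the universe: draw a single element $u \sim \Unif[m]$ and set $X_i = \I(u \in V_i)$ for each $i \in [n]$. Then for any $S \in \calS$ we have $\max_{i \in S} X_i = \I(u \in \bigcup_{V_i \in S} V_i)$, so that
\[
\Exp_\nu\!\left[\max_{i \in S} X_i\right] = \Pr\!\left(u \in \textstyle\bigcup_{V_i \in S} V_i\right) = \frac{\left|\bigcup_{V_i \in S} V_i\right|}{m}.
\]
Maximizing expected reward over $\calS$ is therefore identical to maximizing coverage, so the optimal Best-of-K set $S^*$ coincides with an optimal cover.

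First I would verify that approximation quality transfers. An $\alpha$-optimal set $\tilde S$ for the Best-of-K instance satisfies $\Exp_\nu[\max_{i \in \tilde S} X_i] \ge \alpha\, \Exp_\nu[\max_{i \in S^*} X_i]$; multiplying through by $m$ gives $|\bigcup_{V_i \in \tilde S} V_i| \ge \alpha \max_{S'}|\bigcup_{V_i \in S'} V_i|$, i.e.\ $\tilde S$ is an $\alpha$-approximate cover. Thus running $\mathcal{A}$ on $\nu$ and returning its output $\widehat S$ yields an $\alpha$-approximation whenever $\mathcal{A}$ succeeds, which occurs with probability at least $\eta$; when $\alpha = 1$ the returned set attains maximum coverage, solving Max-K-Coverage exactly.

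Next I would check that the simulation is polynomial-time, which needs two ingredients. The feedback oracle is straightforward to implement for every model: at each queried set $S_t$, sample $u \sim \Unif[m]$ and return $\I(u \in \bigcup_{i \in S_t} V_i)$ (bandit), a uniform index among $\{i \in S_t : u \in V_i\}$ or $\emptyset$ (marked bandit), or the full vector $(\I(u \in V_i))_{i \in S_t}$ (semi-bandit), each at cost $O(nm)$. The more delicate ingredient, and the main obstacle, is controlling the gap $\Delta$, since $\mathcal{A}$'s runtime scales with $1/\Delta$: I would argue that because each coverage $|\bigcup_{V_i \in S} V_i|$ is an integer in $\{0,\dots,m\}$, the reward of any suboptimal set lies at least $1/m$ below that of an optimal set, so $\Delta \ge 1/m$ and hence $1/\Delta \le m$. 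Consequently $\mathcal{A}$ terminates in time polynomial in $n$, $k$, and $m$, and the entire reduction is polynomial.

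Finally I would dispatch the degenerate case in which all $k$-subsets attain the same coverage, so that $\calS = \mathcal{S}^*$ and $\Delta$ is vacuous: here every output is simultaneously optimal and $\alpha$-optimal, so the claim holds trivially, while in every other case the gap bound $\Delta \ge 1/m$ applies. The advertised hardness consequence then follows by contraposition with the known inapproximability of Max-K-Coverage beyond the factor $1 - \tfrac{1}{e}$.
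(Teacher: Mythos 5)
Your proposal is correct and follows essentially the same route as the paper's proof: the identical pushforward construction $X_i = \I(u \in V_i)$ for $u \sim \Unif[m]$, the same transfer of $\alpha$-optimality via $\Exp_\nu[\max_{i\in S}X_i] = |\bigcup_{V_i \in S}V_i|/m$, and the same runtime argument using that rewards are multiples of $1/m$ so $1/\Delta \le m$. Your handling of the degenerate all-equal-coverage case and your correctly oriented gap bound $\Delta \ge 1/m$ (the paper's text writes $\Delta \le 1/m$, evidently a typo given how it is then used) are minor refinements of the same argument.
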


\begin{proof}
Consider an instance of $\mathrm{Max-K-Coverage}[m,k,\mathcal{V}]$, and set $n = |\mathcal{V}|$. We construct a reward vector $X \in \{0,1\}^{n}$ as follows: At each time $t$, draw $\omega$ uniformly from $[m]$, and set $X_i := \I(\omega \in V_i)$. We run $\mathcal{A}$ on the reward vector $X$, and it returns a candidate set $\widehat{S} \in \binom{n}{[k]}$ which is $\alpha$-optimal with probability $\eta$. We then return the sets $V_i \in \mathcal{V}$ whose indicies lie in $\widehat{S}$. We show this reduction completes in polynomial time, and if $\widehat{S}$ is $\alpha$-optimal, then $\{V_i\}_{i \in \widehat{S}}$ is an $\alpha$-approximation for the Max-K-Coverage instance. 

\textbf{Correctness:} Since $\omega$ is uniform from $[m]$, the reward of a subset $S \subset [n]$ is $\Exp[\max_{i \in S}\I(\omega \in V_i)] = \Exp[\I( \omega \in \bigcup_{i \in S} V_i)] = \frac{|\bigcup_{i \in S} V_i|}{m} \propto |\bigcup_{i \in S} V_i|$. Hence, an $\alpha$-optimal subset $S$ corresponds to an $\alpha$-approximation to the Max-K-Coverage instance. 

\textbf{Runtime:} Let $R(n,k,\Delta) = O(\mathrm{poly}(n,k,1/\Delta))$ denote an upper bound runtime of $\mathcal{A}$, and let $T(n,k,\Delta) = O(\mathrm{poly}(n,k,1/\Delta))$ be an upper bound on the number of queries required by Algorithm $\mathcal{A}$ to return to $\alpha$-optimal $k$-subset. Note that sampling $\omega$ takes $O(m)$ time, and setting each $X_i(\omega)$ completes in time $O(mn)$. Moreover, the expected reward of any $S \in \binom{[n]}{k}$ lies in $\{0,\frac{1}{m},\dots,  1\}$, so $\Delta \le 1/m$. Thus, the runtime of our reduction is  $R(n,k,\Delta) + O(mn) \cdot T(n,k,\Delta)) \le R(n,k,1/m) + O(mn )  \cdot T(n,k,1/m)) = O(\mathrm{poly}(n,k,m))$.

\end{proof}

\begin{rem} Note that the parameter $m$ in the Max-K-Coverage instance shows up in the gap $\Delta$ in the runtime of the Max-K-Coverage instance. Our lower bound construction holds in the regime where $\Delta = \exp(-O(k))$, which morally corresponds to Max-K-Coverage instances in the regime where $m = \exp(\Omega(k))$.
\end{rem}

\section{High Level Analysis for Independent Upper Bound} \label{RoadMapSec}
	\subsection{Preliminaries}
		At each stage $t$ of Algorithm~\ref{SuccElim}, there are three sources of randomness we need to account for. First, there is the randomness over all events that occurred before we start sampling from UniformPlay: this randomness determines the undecided, accept, and rejected sets $U_t$, $A_t$, and $R_t$, as well as their modifications $U_t'$, and $R_t'$.  In what follows, we will define a so-called ``Data-Tuple'' $\mathcal{D}_t := (U_t,A_t,R_t,U_t',R_t')$ which represents the state of our algorithm, in round $t$, before collecting samples.

		The second source of randomness comes from the uniform partitioning of $U_t'$ into the sets $S^{(0)}, S^{(1)},\dots,S^{(q)}$ (Algorithm~\ref{UniformPlay}, Line~\ref{Partition}) and the draw of the Top-Off set $\Splus$ (Lines~\ref{TopOffStart}-\ref{TopOffStart}), at each call to UniformPlay. Finally, there is randomness over the values that the arms $X_{\ell} \in S \cup \Splus$ take, when pulled in PlayAndMark. To clear up any confusion, we define the probability and expectation operators 
		\begin{eqnarray}
		\Prt{\cdot} := \Pr[ \cdot \big{|} \mathcal{D}_t] & \text{and} & \Expt{\cdot} := \Exp[ \cdot \big{|} D_t]
		\end{eqnarray}
		$\Prt{\cdot}$ and $\Expt{\cdot}$ condition on the data in $\mathcal{D}_t$, and take expectations over the randomness in the partitioning of $U_t'$, draw of $\Splus$, and the values of each arm pulled. 

		Treating $\mathcal{D}_t$ as fixed, we will let $S$ denote a set with same distribution of one of the randomly partitioned subsets $S^{(1)},\dots,S^{(q)}$ of $U_t'$ in UniformPlay, $\Splus$ to denote a set with the distribution of the Top-Off set chosen in UniformPlay. Recall that the purpose of $\Splus$ is simply to ensure that we pull exact $k$ arms per query. If either $k^{(1)}  = k$, or we do not enforce exactly $k$-pulls per round, then $\Splus = \emptyset$. We remark that the distributions of $S$ and $\Splus$ are explicitly
		 \begin{eqnarray}
		S \sim \Unif[U_t',k^{(1)}] & \text{and} & \Splus \sim \begin{cases} \Unif[R_t',k^{(2)}] & |R_t'| \ge k^{(2)}\\
		R_t' \cup \Unif[A_t,k^{(2)} - |R_t'|]  & |R_t'| < k^{(2)}
		\end{cases}
		 \end{eqnarray}
		 Note that $\mathcal{D}_t$ exactly determines $k^{(1)} := |S|$, which we recall is defined at each round as $\min\{|U_t|,k\}$ (Algorithm~\ref{UniformPlay}, Line~\ref{Tkline}). It also determines the size of the Top-Off set $k^{(2)}$ (Algorithm~\ref{UniformPlay}, Lines~\ref{k2Line1} and~\ref{TopOffEnd}). We further note that the play $S^{(0)} \cup S^{(0,+)}$ (Algorithm~\ref{UniformPlay}, Lines~\ref{RemainderStart}-\ref{RemainderEnd} ) is also uniformly drawn as $\Unif[U_t',k^{(1)}]$, and hence has the same distribution of $S$. We also remark that
		 \begin{claim}
		 The sets $S$ and $\Splus$ are independent and disjoint under $\Pr_t$. In the marked and semi-bandit setting, there are always enough accepted/rejected arms in $|A_t \cup R_t|$ to ensure that we can fill $\Splus$ with $k^{(2)}$ arms.  In the bandit setting, there are sufficiently many accepted/rejected arms in $|A_t \cup R'_t|$ as long as $n \ge 7k/2$.
		 \end{claim}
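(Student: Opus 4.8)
The plan is to handle the three assertions separately, in increasing order of difficulty: first the two structural facts (disjointness and independence), then the marked/semi-bandit counting bound, and finally the bandit counting bound, which will be the crux.

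First I would establish the partition invariant that $A_t$, $R_t$, $U_t$ are disjoint and cover $[n]$ at every stage. This is immediate by induction on $t$: initially $U_1 = [n]$ and $A_1 = R_1 = \emptyset$, and the update $U_{t+1} = U_t \setminus (A_{t+1} \cup R_{t+1})$ only moves arms out of $U_t$ into the (disjoint) accept and reject sets. In the marked/semi-bandit case $(U_t', R_t') = (U_t, R_t)$, while in the bandit case $U_t' = U_t \cup B$ and $R_t' = R_t \setminus B$ with $B \subseteq R_t$; in both cases $A_t$, $U_t'$, $R_t'$ still partition $[n]$. Since $S \sim \Unif[U_t', k^{(1)}]$ is supported on subsets of $U_t'$ while $\Splus$ is built only from $R_t'$ and $A_t$ (Lines~\ref{k2Line1}--\ref{TopOffEnd}), their ground sets are disjoint, so $S \cap \Splus = \emptyset$ holds deterministically under $\Pr_t$. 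For independence, I would note that, conditioned on $\mathcal{D}_t$, the set $S$ is a function solely of the randomness that partitions $U_t'$ (Line~\ref{Partition}), whereas $\Splus$ is a function solely of the separate uniform draws from $R_t'$ and $A_t$; because these two randomization steps use fresh, independent random bits, the conditional law of $(S,\Splus)$ factorizes. Note also that although the partition blocks are not mutually independent, each individually has law $\Unif[U_t',k^{(1)}]$, which is all that the representative $S$ requires.

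For the marked/semi-bandit count, I would observe that $\Splus$ is nonempty only when $k^{(1)} = |U_t| < k$, where $k^{(2)} = k - |U_t|$. The partition identity gives $|A_t \cup R_t| = n - |U_t|$, so filling $\Splus$ only demands $n - |U_t| \ge k - |U_t|$, i.e. $n \ge k$, which always holds; hence $\Splus$ can always be completed.

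The bandit count is the main obstacle, and I would treat it by an explicit case analysis. Writing $u := |U_t|$, $a := |A_t|$, $r := |R_t|$ with $u + a + r = n$, I would first evaluate $|B| = \max\{0, \lceil \tfrac{5k^{(1)}}{2} - u - \tfrac12 \rceil\}$, checking that whenever $|B| > 0$ it yields $|U_t'| = \lfloor 5k^{(1)}/2 \rfloor$ (a short parity computation on $k^{(1)}$). Two things must then be verified: (a) the Balance draw is well-defined, $|B| \le r$; and (b) the Top-Off set fits, $|A_t \cup R_t'| = (n-u) - |B| \ge k^{(2)}$. Splitting on $u \ge k$ (so $k^{(2)} = 0$ and only (a) binds, with $k^{(1)} = k$) versus $u < k$ (so $k^{(1)} = u$, $|B| = \lceil \tfrac{3u}{2} - \tfrac12 \rceil$, and both (a), (b) must be checked), and using the loop invariant $a \le k-1$ together with the constraint $a \ge k - u$ when $u < k$ (so that at least $k_t = k - a$ undecided arms survive to be accepted), I expect the binding scenario to be $k \le u < \tfrac{5k}{2}$ with $a = k-1$: there $|B| = \lfloor 5k/2 \rfloor - u$ and (a) reduces to $n \ge \lfloor 5k/2 \rfloor + a$, i.e. $n \ge \tfrac{7k}{2} - O(1)$, while (b) is comfortably slack. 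Thus $n \ge 7k/2$ secures both draws, and this threshold is essentially tight for the argument. The delicate parts will be the parity-dependent evaluation of the ceiling and correctly identifying the worst-case $(u,a)$ regime, so I would lay the cases out in a short table rather than argue in the abstract.
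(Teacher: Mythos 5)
Your proposal is correct and follows essentially the same route as the paper: disjointness and independence come from the partition invariant on $(A_t,U_t',R_t')$ together with the fact that the partition of $U_t'$ and the draw of $\Splus$ use separate randomness, and the $n \ge 7k/2$ threshold comes from requiring $|R_t| \ge |B|$ with $|A_t| \le k-1$ and $|U_t'| \le \lfloor 5k/2\rfloor$, exactly as in the paper's proof of Claim~\ref{BanditBalanceClaim}. You are in fact slightly more complete than the paper, which only verifies that $B$ can be sampled from $R_t$ and leaves implicit your check (b) that the Top-Off set of size $k^{(2)}$ still fits inside $A_t \cup R_t'$ after $B$ has been removed.
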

		 This condition $n \ge 7k/2$ is an artifact of the balancing set in our algorithm, and is discussed in more detail in Section~\ref{BanditAnalSec}.

	\subsection{Guarantees for General Feedback Models\label{GeneralAnalySec}}

		The core of our analysis is common to the three feedback models. To handle bandits and marked/semi bandits settings simultaneous, we define a win function $\calW:  [n] \times 2^{[n]} \to \{0,1\}$ which reflects the recording strategy in PlayAndRecord
		\begin{eqnarray}
		\calW(i, S') = \begin{cases} 1 &\text{if bandit setting and } \max_{\ell \in S'}X_{\ell} = 1\\
		1 &\text{if marked/semi-bandit setting and observe } X_{i} = 1\\
		0 & \text{otherwise}
		\end{cases}
		\end{eqnarray}
		That is, $\text{PlayAndRecord}[S,\Splus,Y]$ sets $Y_i = 1 $ $\forall i \in S: \calW(i,S \cup \Splus) = 1$. The following lemma characterizes the distribution of our estimations $\hat{\mu}_{i,t} $
		\begin{lem}\label{DistrChar}
		\begin{eqnarray}
		\hat{\mu}_{i,t} \sim \frac{1}{T(t)}\mathrm{Binomial}(\bar{\mu}_{i,t}, T(t)) & \mathrm{and} & \Exp[\hat{V}_{i,t}] = V_{i,t}
		\end{eqnarray}
		where \begin{eqnarray}
		\bar{\mu}_{i,t} = \Exp_t[\calW(i, S \cup \Splus) \big{|} i \in S] & \text{and} & V_{i,t} := \bar{\mu}_{i,t}(1-\bar{\mu}_{i,t})
		\end{eqnarray}
		Moreover, in semi-bandit and marked bandit settings, and if $\mu_1 \le 1$ in the bandit setting, then given $i,j \in S_t$, $\bar{\mu}_{i,t} > \bar{\mu}_{j,t}$ if and only if $\mu_{i} > \mu_{j}$.
		\end{lem}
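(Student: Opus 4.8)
The plan is to dispatch the three assertions in increasing order of difficulty, treating the distributional and variance claims as bookkeeping and reserving the bulk of the effort for the monotonicity statement.

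For the distributional claim, I would first argue that within a single call to UniformPlay each undecided arm $i \in U_t'$ is marked exactly once. The partition of $U_t'$ together with the remainder handling in Lines~\ref{RemainderStart}--\ref{RemainderEnd} guarantees that $i$ lands in precisely one played subset, and the preliminaries already established that this subset is distributed as $\Unif[U_t',k^{(1)}]$ conditioned on containing $i$, with the disjoint Top-Off set $\Splus$ drawn from $A_t \cup R_t'$. Hence the single-call indicator $Y_i = \calW(i, S \cup \Splus)$ is Bernoulli with success probability exactly $\bar{\mu}_{i,t} = \Exp_t[\calW(i, S\cup\Splus) \mid i \in S]$. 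Since Algorithm~\ref{SuccElim} sums $T(t)$ independent UniformPlay calls and divides by $T(t)$, the variable $T(t)\hat{\mu}_{i,t}$ is a sum of i.i.d. Bernoulli$(\bar\mu_{i,t})$'s, giving the claimed $\frac{1}{T(t)}\mathrm{Binomial}(\bar\mu_{i,t},T(t))$ law. The identity $\Exp[\hat V_{i,t}] = V_{i,t}$ is then the standard unbiasedness of the Bessel-corrected sample variance: using $\Exp[\hat\mu_{i,t}] = \bar\mu_{i,t}$ and $\Var(\hat\mu_{i,t}) = \bar\mu_{i,t}(1-\bar\mu_{i,t})/T(t)$, a one-line moment computation yields $\Exp[\hat V_{i,t}] = \bar\mu_{i,t}(1-\bar\mu_{i,t})$.

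The heart of the lemma is the sign correspondence, and here the plan is a \emph{swap coupling}. Fix $i,j \in U_t'$ and couple a draw $S \sim \Unif[U_t',k^{(1)}]$ conditioned on $i \in S$ with a draw $\tilde S$ conditioned on $j \in S$ by exchanging the labels $i$ and $j$: if $j \notin S$ set $\tilde S := (S\setminus\{i\})\cup\{j\}$, and if $j \in S$ set $\tilde S := S$. This is a measure-preserving bijection, so $\tilde S$ carries the correct conditional law, and crucially it reuses the \emph{same} Top-Off set $\Splus$ and the \emph{same} realized arm values $\{X_\ell\}$. Writing $\I_A$ for the event that exactly one of $i,j$ lies in the query (``case A'') and $\I_B$ for the event that both do (``case B''), on case A the background arms $(S\cup\Splus)\setminus\{i\}$ and $(\tilde S\cup\Splus)\setminus\{j\}$ coincide as random sets, whereas on case B they differ only in that one contains $j$ and the other contains $i$.

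With the coupling in hand, each feedback model follows by a short computation. For semi-bandits $\calW(i,\cdot)=\I(X_i=1)$, so independence gives $\bar\mu_{i,t}=\mu_i$ and the claim is immediate. For bandits, $\bar\mu_{i,t} = 1 - (1-\mu_i)h_i$ with $h_i := \Exp_t[\prod_{\ell\in(S\cup\Splus)\setminus\{i\}}(1-\mu_\ell)\mid i\in S]$; the coupling yields $h_i = a + (1-\mu_j)b$ and $h_j = a + (1-\mu_i)b$ for shared nonnegative quantities $a := \Exp[\I_A\prod_{\ell}(1-\mu_\ell)]$ and $b:=\Exp[\I_B\prod_{\ell}(1-\mu_\ell)]$ (products over the common background), and substituting produces the clean cancellation $\bar\mu_{i,t}-\bar\mu_{j,t} = a(\mu_i-\mu_j)$. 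For marked bandits, $\bar\mu_{i,t} = \mu_i g_i$ with $g_i := \Exp_t[(1+\sum_{\ell\in(S\cup\Splus)\setminus\{i\}}\I(X_\ell=1))^{-1}\mid i\in S]$; on case A the integrands agree, while on case B averaging the monotone map $x\mapsto 1/(1+x)$ over the independent indicators $X_i,X_j$ shows the case-B integrand is larger for $g_i$ than for $g_j$, whence $g_i \ge g_j$. Either way $\mu_i > \mu_j$ forces $\bar\mu_{i,t} > \bar\mu_{j,t}$, and the reverse and equality cases follow by symmetry.

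I expect the main obstacle to be the monotonicity step, and specifically two points of care. First, the coupling must be arranged so that \emph{all} randomness other than the membership of $i$ versus $j$ cancels; the bandit identity $\bar\mu_{i,t} - \bar\mu_{j,t} = a(\mu_i-\mu_j)$ is what makes the sign transparent, and verifying that the case-A and case-B contributions line up under the bijection is the delicate bookkeeping. Second, the conclusion needs $a>0$, which requires both that $\mu_\ell < 1$ for the background arms—so that $\prod(1-\mu_\ell)$ is strictly positive, explaining the identifiability hypothesis of Remark~\ref{MuConditionRemark}—and that case A has positive probability, i.e. $|U_t'| > k^{(1)}$; the degenerate situation $|U_t'| = k^{(1)}$, where every query is the entire undecided set and no comparison is possible, must be excluded.
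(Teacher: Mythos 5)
Your proposal is correct and follows essentially the same route as the paper: the distributional and unbiasedness claims are the same bookkeeping, and your swap coupling with the case split on whether $i$ and $j$ co-occur in a query is exactly the paper's decomposition into $\Siorj$ and $\Siandj$ (with your $a$ and $b$ playing the roles of $\kappa_1\Expt{\calL(\Siorj\cup\Splus)}$ and $(1-\kappa_1)\Expt{\calL(\Siandj\cup\Splus)}$), yielding the same identity $\bar{\mu}_{i,t}-\bar{\mu}_{j,t}=\kappa_1 H^B_{i,j,t}(\mu_i-\mu_j)$ in the bandit case via Lemma~\ref{MuCompLem} and Corollary~\ref{BanditGaps}. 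The only (harmless) deviation is in the marked case, where you settle for the stochastic-dominance inequality $g_i\ge g_j$ while the paper's Proposition~\ref{MarkedBandProp} pushes the same conditioning on $X_j$ one step further to the exact cancellation $\bar{\mu}_{i,t}-\bar{\mu}_{j,t}=(\mu_i-\mu_j)H^M_{i,j,t}$, which it needs later anyway; you also correctly isolate the two degeneracies ($\mu_\ell<1$ and $\kappa_1>0$, the latter guaranteed by the Balance procedure) that make the proportionality constant strictly positive.
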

		\begin{rem}\label{NonBernoulliRemark} In the partial feedback models, the property that $\bar{\mu}_{i,t} > \bar{\mu}_{j,t}$ if and only if $\mu_{i} > \mu_{j}$ is quite particular to independent Bernoulli observations. The case of dependent Bernoullis measures is adressed by Theorem~\ref{lower_bandit_identification}. For independent, non-Bernoulli distributions, consider the setting where $n = 3$, $k = 2$, and let $X_1,X_2,X_3$ be independent, where $X_1 \overset{d}{=} X_2 \overset{a.s.}{=} 2/3$, and $X_3 \sim \text{Bernoulli}(1/2)$. Then, $\Exp[\max(X_1,X_2)] = 2/3$, while $\Exp[\max(X_1,X_3)] = \Exp[\max(X_2,X_3)] = \frac{1}{2} + \frac{1}{3} = 5/6$. Hence, if $S \sim \Unif[\{1,2,3\},2]$, $\Exp[\max_{\ell \in S} X_{\ell} \big{|} 3 \in S] > \Exp[\max_{\ell \in S} X_{\ell} \big{|}  2 \in S] = \Exp[\max_{\ell \in S} X_{\ell} \big{|} 1 \in S]$.
		\end{rem}
		\noindent  The last preliminary is to define the stage-wise comparator arms $c_{i,t}$ for $i \in U_t$:
		\begin{eqnarray}
		c_{i,t} := \begin{cases}  \min\{j \in U_t: j > k\}  & i \le k \\
		\max\{j \in U_t: j \le k\} & i > k \\
		\end{cases}
		\end{eqnarray}
		Intuitively, the comparator arm is the arm we are mostly to falsely accept instead of $i$ when $i \le k$, and falsely reject instead of $i$ when $i > k$. 
		\begin{rem} As long as the accept set $A_t$ only consists of arms $i \le k$, and $R_t$ only consists of arms $i > k$, $c_{i,t}$ is guaranteed to exists. Indeed, fix $i \in U_t$, and suppose $c_{i,t}$ does not exist. If $i \le k$, then this would mean that $U_t$ doesn't contain any rejected arms, but since $A_t$ only contains accepted arms, all rejected arms are in $R_t'$, in which case Algorithm~\ref{SuccElim} will have already terminated (Line~\ref{RtTerminate}). A similar contradiction arises when $i > k$. 
		\end{rem}
		Finally, we define the stagewise effective gaps
		\begin{equation}
		\begin{aligned}
		\Delta_{i,t} := |\bar{\mu}_{i,t} - \bar{\mu}_{c_{i,t},t}|  
		\end{aligned}
		\end{equation}
		
		Observe that, conditioned on the data in $\mathcal{D}_t$, the means $\bar{\mu}_{i,t}$, gaps $\Delta_{i,t}$ and the variances $V_{i,t}$ are all deterministic quantities. We now have the following guarantee for Algorithm~\ref{SuccElim}, which holds for the bandit, marked-bandit, and semi-bandit regimes:
		\\
		\begin{lem}[General Performance Guarantee for Successive Elimination]\label{SuccElimGuar} In the bandit, marked-bandit, and semi-bandit settings, the following is true for all $t \in \{0,1,\dots \}$ simultaneously with probability $1-\delta$: Algorithm~\ref{SuccElim} never rejects $i$ if $i \le k$ and never accepts $i$ if $i > k$. Furthermore, if for a stage $t$ and arm $i \in U_t$, the number of sample $T(t) := 2^t$ satisfies
		\begin{eqnarray}
		T(t) \ge T_{n,\delta}(\tau_{i,t}) := \tau_{i,t} \log \left(\frac{24 n }{\delta}\log\left(\frac{12 n  \tau_{i,t} }{\delta}\right) \right) 
		\end{eqnarray}
		where 
		\begin{eqnarray}
		\tau_{i,t} := \frac{56}{\Delta_{i,t}} + \frac{256 \max\{V_{i,t},V_{i,c_{i,t}}\}}{\Delta_{i,t}^2} 
		\end{eqnarray}
		then by the end of stage $t$, $i$ is accepted if $i \le k$ and rejected if $i > k+1$.
		\end{lem}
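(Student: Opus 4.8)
The plan is to reduce everything to a single high-probability \emph{concentration event} and then argue deterministically on that event. Define the event $\mathcal{E}$ that for every stage $t$ and every arm $i \in U_t$ the empirical mean stays within its confidence interval, $|\hat{\mu}_{i,t} - \bar{\mu}_{i,t}| \le \hat{C}_{i,t}$, with $\hat{C}_{i,t}$ as in Equation~\ref{empiricalConfidence}. By Lemma~\ref{DistrChar}, $\hat{\mu}_{i,t}$ is a normalized $\mathrm{Binomial}(\bar{\mu}_{i,t},T(t))$ and $\hat{V}_{i,t}$ is an unbiased estimate of $V_{i,t}$, so the empirical Bernstein inequality of~\cite{maurer2009empirical} applies with $T(t)=2^t$ samples. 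First I would fix $i$ and $t$, invoke the two-sided empirical Bernstein bound so that the failure probability at stage $t$ is at most a constant times $\delta/(n t^2)$ (this is exactly what the $8nt^2/\delta$ scaling inside $\hat{C}_{i,t}$ buys), and then take a union bound over the $n$ arms and all stages. Since $\sum_{t\ge 1} t^{-2}$ converges, the total failure probability is at most $\delta$, giving $\Pr(\mathcal{E}) \ge 1-\delta$. All three remaining claims are then proved on $\mathcal{E}$.

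Next I would establish the correctness statements (never reject $i\le k$, never accept $i>k$) by induction on $t$, carrying the invariant that $A_t \subseteq \{i : i \le k\}$ and $R_t \subseteq \{i : i > k\}$; this invariant is precisely what makes the comparators $c_{i,t}$ well-defined and what justifies the counting in the sample-complexity step. The key tool is the monotonicity half of Lemma~\ref{DistrChar}: $\bar{\mu}_{i,t} > \bar{\mu}_{j,t}$ iff $\mu_i > \mu_j$, so on $\mathcal{E}$ the confidence intervals order arms consistently with the true top-$k$ structure. If the acceptance rule fires for some $i$, i.e. $\hat{\mu}_{i,t} - \hat{C}_{i,t}$ exceeds the $(k_t{+}1)$-th largest upper bound, then on $\mathcal{E}$ at least $|U_t| - k_t$ undecided arms have strictly smaller $\bar{\mu}$ than $i$, hence (by monotonicity and the inductive invariant on $A_t$) strictly smaller true mean, forcing $i \le k$; the rejection case is symmetric. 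This preserves the invariant and yields both correctness guarantees.

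For the quantitative claim (3) I would show that once $T(t) \ge T_{n,\delta}(\tau_{i,t})$ the confidence widths collapse below a fixed fraction of the effective gap, so the accept/reject test is forced to trigger. On $\mathcal{E}$ the empirical variance $\hat{V}_{i,t}$ is controlled by the true $V_{i,t}$ (up to a constant and a lower-order term), so each of $\hat{C}_{i,t}$ and $\hat{C}_{c_{i,t},t}$ is bounded by a variance term scaling like $\sqrt{V_{\cdot,t}\log(nt^2/\delta)/T(t)}$ plus a range term scaling like $\log(nt^2/\delta)/T(t)$. The two pieces of $\tau_{i,t} = \frac{56}{\Delta_{i,t}} + \frac{256\max\{V_{i,t},V_{c_{i,t},t}\}}{\Delta_{i,t}^2}$ are calibrated exactly so that substituting $T(t) = T_{n,\delta}(\tau_{i,t})$ makes $\hat{C}_{i,t} + \hat{C}_{c_{i,t},t} \le \Delta_{i,t}/2$. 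Combined with a counting argument showing the $(k_t{+}1)$-th order statistic in $U_t$ is governed by the comparator $c_{i,t}$ (since $k_t = k - |A_t|$ undecided good arms sit above the best undecided bad arm), the lower bound for $i$ and the upper bound for the threshold separate, so $i\le k$ is accepted and $i>k+1$ is rejected. Note arm $k+1$ is excluded because the algorithm halts once $|A_t|=k$, so the last boundary arm never needs an explicit rejection.

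The main obstacle will be step (3): handling the \emph{self-referential} definition of $T_{n,\delta}(\tau)$. Because $\hat{C}_{i,t}$ carries a $\log(8nt^2/\delta)$ term while $t = \log_2 T(t)$, the condition $T(t) \ge T_{n,\delta}(\tau_{i,t})$ must be shown to imply $\log(8nt^2/\delta)/T(t) \lesssim 1/\tau_{i,t}$, which is exactly the nested-logarithm solution of $T \approx \tau\log(nt^2/\delta)$ that the formula for $T_{n,\delta}$ encodes; verifying this inversion with the stated constants, together with transferring the empirical-variance confidence width into a bound phrased in the true variances $V_{i,t}, V_{c_{i,t},t}$, is the delicate part. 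Everything else is bookkeeping around the induction and the order-statistic counting.
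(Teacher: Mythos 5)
Your overall route is the same as the paper's: build the concentration event $\mathcal{E}$ from the empirical Bernstein inequality of \cite{maurer2009empirical} with a union bound over arms and stages (this is exactly Lemma~\ref{IteratedLog}), argue correctness deterministically on $\mathcal{E}$ via the monotonicity in Lemma~\ref{DistrChar}, and then invert the nested logarithm to show the accept/reject test fires once $T(t) \ge \mathcal{T}_{n,\delta}(\tau_{i,t})$ (the paper's Lemma~\ref{InvLem}). Your identification of the self-referential definition of $\mathcal{T}_{n,\delta}$ as the delicate inversion step is accurate, and your induction carrying the invariant $A_t \subseteq [k]$, $R_t \subseteq [n]\setminus[k]$ is a slightly more explicit version of what the paper dismisses as immediate.

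There is, however, one concrete gap in your step (3). For $i \le k$ to be accepted, the lower confidence bound of $i$ must clear the $(k_t{+}1)$-th largest \emph{upper} confidence bound in $U_t$, which means $i$ must beat \emph{every} $j \in \mathcal{C}(i) = \{j \in U_t : j > k\}$, not just the closest competitor $c_{i,t}$. Your counting argument handles the cardinality side correctly, but you then control only $\hat{C}_{i,t} + \hat{C}_{c_{i,t},t} \le \Delta_{i,t}/2$. A farther competitor $j$ has a larger gap $\Delta_{i,j,t}$ but also a different variance $V_{j,t}$, and its confidence width must be shown to satisfy $C_{j,t} \le \Delta_{i,j,t}/4$ using only the budget $\tau_{i,t}$, which references $V_{c_{i,t},t}$ alone. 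This requires the inequality $V_{j,t}/\Delta_{i,j,t}^2 \le V_{c_{i,t},t}/\Delta_{i,t}^2$ for all $j \in \mathcal{C}(i)$, which is not a pure cardinality fact: it is the Bernoulli-specific monotonicity statement the paper isolates as Lemma~\ref{VarianceTechnicalLemma} (for ordered Bernoulli means $p_1,p_2,p_3$, the ratio $\Var[Z_2]/(\E[Z_1-Z_2])^2$ dominates $\Var[Z_3]/(\E[Z_1-Z_3])^2$), and it is where the fact that $\hat{\mu}_{j,t}$ is a scaled Binomial, so that $V_{j,t} = \bar{\mu}_{j,t}(1-\bar{\mu}_{j,t})$ is tied to the mean, gets used. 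Without this step your claim that the threshold is ``governed by'' $c_{i,t}$ is unjustified; with it, your argument closes and matches the paper's.
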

		\begin{rem} The above theorem holds quite generally, and its proof abstracts out most details of best-of-k observation model. In fact, it only requires that (1) for each $i \in U_t$, $\hat{\mu}_{i,t} \sim \frac{1}{T(t)} \text{Binomial}(\bar{\mu}_{i,t}, T(t))$ and (2) $\bar{\mu}_{i,t} > \bar{\mu}_{j,t} \iff \mu_i > \mu_j$. In our three settings of interest, both conditions are ensured by Lemma~\ref{DistrChar}. It also holds in the semi-bandit setting when the arms have arbitrary distributions, as long as the rewards are bounded in $[0,1]$.
		\end{rem}

		The final lemma captures the fact that each call to UniformPlay often makes fewer than $|U_t|$ queries to pull each arm in $U_t$:

		\begin{lem}\label{EfficiencyLemma}
			Suppose that, at round $t$, each call of uniformly play queries no more than $\alpha |U_t|/k$ times when $|U_t| \ge k$, and no more than $\alpha$ samples when $|U_t| \le k$. Let $t^*_i$ be the first stage at which $i \notin U_t$. Then, Algorithm~\ref{SuccElim} makes no more than the following number of queries
			\begin{eqnarray}
			4\alpha T(t_{\sigma(1)}^*) + \frac{2\alpha }{k}\sum_{i=k+1}^n  T(t^*_{\sigma(i)})
			\end{eqnarray}
			where $\sigma$ is permutation chosen so that $t^*_{\sigma(1)} \ge t_{\sigma(2)}^* \ge \dots \ge t^*_{\sigma(n)}$, and $T(t) = 2^t$, as above.
		\end{lem}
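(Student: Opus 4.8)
The plan is to sum the per-stage query cost over all stages and exploit the geometric doubling schedule $T(t) = 2^t$ so that the total telescopes into a per-arm sum. First I would bound the cost incurred at a single stage $t$: since Stagewise Elimination issues exactly $T(t)$ i.i.d.\ calls to UniformPlay, the hypothesis gives a stage cost of at most $\alpha T(t)\max\{|U_t|/k,\,1\}$ (the two cases being $|U_t| \ge k$ and $|U_t| < k$). Because arms are only ever removed from the undecided set, $|U_t|$ is non-increasing in $t$, so there is a threshold stage $t_k$ with $|U_t| \ge k$ for $t < t_k$ and $|U_t| < k$ for $t \ge t_k$. I split the total query count into this ``large'' regime and ``small'' regime.

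For the large regime, the crux is a double-counting identity. Writing $|U_t| = |\{i : t^*_i > t\}|$ and exchanging the order of summation,
\begin{align*}
\sum_{t < t_k} T(t)\,|U_t| = \sum_i \sum_{t=0}^{\min(t_k,\,t^*_i)-1} 2^t = \sum_i \left(2^{\min(t_k,\,t^*_i)} - 1\right) \le \sum_i T(t^*_i),
\end{align*}
using $\sum_{t=0}^{M-1} 2^t = 2^M - 1$. This is exactly where the doubling schedule does the work: the entire sampling history of an arm before it is decided costs only on the order of its \emph{final} sample size $T(t^*_i)$. Dividing by $k$ (the savings from pulling $k$ arms per query in this regime) bounds the large regime by $\tfrac{\alpha}{k}\sum_i T(t^*_i)$. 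I would then peel off the $k$ largest terms $T(t^*_{\sigma(1)}),\dots,T(t^*_{\sigma(k)})$, each at most $T(t^*_{\sigma(1)})$, to obtain $\tfrac{\alpha}{k}\sum_i T(t^*_i) \le \alpha\, T(t^*_{\sigma(1)}) + \tfrac{\alpha}{k}\sum_{i=k+1}^n T(t^*_{\sigma(i)})$, which already matches the shape of the claimed bound.

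For the small regime ($t \ge t_k$), each UniformPlay call makes exactly one query ($\lceil |U_t|/|U_t|\rceil = 1$), so the cost is $\sum_{t \ge t_k}\alpha T(t)$, a geometric sum dominated by its final term; since the last stage of the algorithm is $t^*_{\sigma(1)}-1$, this is at most a constant multiple of $\alpha\, T(t^*_{\sigma(1)})$. Combining the two regimes yields the stated bound, and the constants $4\alpha$ and $2\alpha/k$ leave enough slack to absorb the ceiling discrepancy between $\lceil |U_t|/k\rceil$ and $|U_t|/k$ and the off-by-one in locating the terminal stage.

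The main obstacle I expect is setting up the telescoping/double-counting identity cleanly and correctly isolating the $k$ latest-decided arms into the single-term $T(t^*_{\sigma(1)})$ contribution; once the summation is reordered over arms rather than stages, the geometric collapse and the $1/k$ saving fall out, and the remaining work is just bookkeeping of constants.
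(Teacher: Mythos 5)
Your proposal is correct and follows essentially the same route as the paper's proof: split stages at the point where $|U_t|$ drops below $k$, exchange the order of summation to convert $\sum_t |U_t| T(t)$ into a per-arm geometric sum collapsing to $\sum_i T(t^*_i)$, bound the tail regime by a geometric sum dominated by $T(t^*_{\sigma(1)})$, and absorb the $k$ largest per-arm terms into the leading $T(t^*_{\sigma(1)})$ term. The constants work out with room to spare, matching the stated bound.
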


		\begin{rem}\label{AlphaRemark} In the marked-bandit and semi-bandit settings, it is straightforward to verify that one can take $\alpha = 2$ in the above lemma. This is because Algorithm~\ref{SuccElim} always calls UniformPlay (Line~\ref{UnifPlay}) on $U_t' = U_t$ (Algorithm~\ref{UprimeSemi}). Then, UniformPlay (Algorithm~\ref{UnifPlay}) partitions $U_t$ into at most $\ceil{|U_t|/k^{(1)}}$ queries $\Splus_q$. Recall that $k^{(1)} = \min \{|U_t|,k\}$ (Algorithm~\ref{SuccElim}, Line~\ref{Tkline}) so that $\ceil{|U_t|/k^{(1)}} \le \ceil{|U_t|/k} \le 2 |U_t|$ when $|U_t| \ge k$, while $|U_t|/k^{(1)} = 1 \le 2|U_t|$ once $|U_t| < k$. Controlling bound on $\alpha$ is slightly more involved in the bandit setting, and is addressed in Claim~\ref{BanditBalanceClaim}. 
		\end{rem}

	\subsection{Specializing the Results}
		
		In the following sections, we again condition on the data  $\mathcal{D}_t := (U_t,A_t,R_t,U_t',R_t')$. We proceed to compute the stage-wise means $\bar{\mu}_{i,t}$, variances $V_{i,t}$, and time parameters $\tau_{i,t}$ in Lemma~\ref{SuccElimGuar}. As a warm up, let's handle the semi-bandit case:

	\subsubsection{Semi-Bandits\label{SemiAnalSec}}
		In Semi-Bandits, $\bar{\mu}_{i,t} = \mu_i$, and so 
		\begin{equation}
		\begin{aligned}\label{tauiEqSemi}
		\tau_{i,t} &= \tau_i = \frac{256\max\{V_i,V_{c_{i,t}}\} }{\Delta_i^2} + \frac{56}{\Delta_i} 
		\end{aligned}
		\end{equation}
		as in Theorem~\ref{SemiBanditFinalUpperTheorem}. Noting that $c_{i,t} > k$ if $i \le k$, while $c_{i,t} \le k$ if $i > k$, we can bound
		\begin{eqnarray}
		V_{c_{i,t}} \le \begin{cases}  \max_{j > k} V_j  & i \le k \\
		 \max_{j \le k} V_j & i > k 
		 \end{cases}
		\end{eqnarray}
		Plugging the above display into Equation~\ref{tauiEqSemi}, we see that $\tau_{i,t} \le \tau_i$, as defined in Theorem~\ref{SemiBanditFinalUpperTheorem}. Combining this observation with Lemmas~\ref{SuccElimGuar} and~\ref{EfficiencyLemma} and Remark~\ref{AlphaRemark} concludes the proof of Theorem~\ref{SemiBanditFinalUpperTheorem}. Note that we pick up an extra factor of two, since we might end up collected at most $2 \mathcal{T}_{n,\delta}(\tau_i)$ samples before either accepting, or rejected, an arm $i$.

	\subsubsection{Marked Bandit\label{MarkedAnalSec}}

		In marked bandits, the limited feedback induces an ``information-sharing'' phenomenon between entries in the same pull.  We can now define the information sharing term as:
		\begin{equation}
		\begin{aligned}
		H_{i,j,t}^{M} &= \Exp_t\left[\dfrac{1}{1 +  \sum_{\ell \in S \cup \Splus - \{i,j\}} \I( X_{\ell} = 1) } \big{|} i \in S \right] 
		\end{aligned}
		\end{equation}
		where again $\Splus$ has the distribution as $\Splus$ in Algorithm~\ref{UniformPlay}, and the operator $\Exp_t$ treats the data in $\mathcal{D}_t$ as deterministic. The following remark explains the intuition behind $H_{i,j,t}^{M}$. 

		\begin{rem} When we query a set $S \cup \Splus$, marked bandit feedback uniformly selects one arm in $\{\ell \in S \cup \Splus : X_{\ell} = 1\}$ if its non-empty and selects no arms otherwise. Hence, the probability of receiving the feedback that $X_i = 1$ given that $i \in S$ and $X_{i} = 1$ is
		\begin{eqnarray}
		\Exp_t\left[\dfrac{1}{1 +  \sum_{\ell \in S \cup \Splus - \{i\}} \I( X_{\ell} = 1) } \big{|} i \in S \right] 
		\end{eqnarray}
		The above display captures how often the observation $X_i = 1$ is ``suppressed'' by another arm in the pull. In contrast, $H_{i,j,t}^M$ is precisely the probability of receiving feedback that $X_i = 1$, given that $X_i = 1$ and $i \in S$, but under a slightly different observation model where arm $j$ is never marked, and instead we observe  a marking uniformly  from $\{\ell \in S \cup \Splus - \{j\} : X_{\ell} = 1\}$. Hence, we can think of $H_{i,j,t}^M$ as capturing how often arms other than $j$ prevent us from observing $X_{i} = 1$. Note that the smaller $H_{i,j,t}^M$, the more the information about $X_i$ is suppressed. 
		\end{rem}
		We also remark on the scaling of $H_{i,j,t}^M$: 
		\begin{rem}
		Given $i \in S$, $|S \cup \Splus - \{i,j\}| \le k-1$, and thus $H_{i,j,t}^{M} \ge 1/k$. When the means are all high, its likely that $\Omega(k)$ arms $\ell$ in a query will have $X_{\ell} = 1$, and so we should expect that $H_{i,j,t}^{M} = O(1/k)$. When the means are small, say $O(1/k)$, then $H_{i,j,t}^m$ can be as large as $\Omega(1)$. This is because if we observe that $X_i = 1$ from a query $S \cup \Splus$, then its very likely that $X_i = 1$ in only a constant fraction of them. Stated otherwise: if the means are small, then seeing just one arm uniformly for which $X_i = 1$ as about as informative as seeing all the values of all the arms at once. 
		\end{rem}
		With this definition in place, we have
		\begin{prop}\label{MarkedBandProp} 
		\begin{eqnarray}
		\bar{\mu}_{i,t} - \bar{\mu}_{j,t} = (\mu_i - \mu_j){H}_{i,j,t}^M & \text{and} & V_{i,t} \le \mu_i H_{i,j,t}^{M}
		\end{eqnarray}
		As a consequence, we have
		\begin{equation}
		\begin{aligned}
		\tau_{i,t} &\le \frac{1}{H_{i,c_{i,t},t}^M}\left( \frac{256 \max\{\mu_i,\mu_{c_{i,t}}\}}{\Delta_i^2}  + \frac{56}{\Delta_i}\right)
		&\le \frac{\tau_i^M}{H_{i,c_{i,t},t}^M}
		\end{aligned}
		\end{equation}
		where $\tau_i^M$ is as in Equation~\ref{TauIMarked}. 
		\end{prop}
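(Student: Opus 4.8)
The plan is to compute $\bar{\mu}_{i,t}$ explicitly from the uniform marking rule, factor out $\mu_i$ using the independence of $X_i$, and then exploit the exchangeability of $i$ and $j$ inside the uniformly partitioned undecided set to get the difference identity for free. Write $P := S \cup \Splus$ for the pulled set and $Z := \sum_{\ell \in P \setminus \{i,j\}} \I(X_\ell = 1)$ for the number of successes among arms other than $i,j$. Since $\Splus$ is drawn from $R_t' \cup A_t$, which is disjoint from $U_t'$ (this is exactly the disjointness Claim preceding the lemma), and since $i,j \in U_t'$, membership $j \in P$ is equivalent to $j \in S$. Marked feedback reports $X_i = 1$ precisely when $X_i = 1$ and the uniform mark lands on $i$, so by Lemma~\ref{DistrChar}, and because the value $X_i$ is independent of the set selection and of $(X_j, Z)$,
\[
\bar{\mu}_{i,t} = \Exp_t\!\left[\frac{\I(X_i=1)}{1 + \I(j\in S)\I(X_j=1) + Z}\,\Big|\, i \in S\right] = \mu_i\,\Exp_t\!\left[\frac{1}{1 + \I(j\in S)\I(X_j=1) + Z}\,\Big|\, i \in S\right].
\]

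The key step is a symmetrization. Relabeling $i \leftrightarrow j$ inside the exchangeable partition of $U_t'$ (an operation that does not touch the arm values, and in particular leaves $Z$ fixed) shows that the conditional law of $(\I(j\in S), Z)$ given $i \in S$ equals that of $(\I(i\in S), Z)$ given $j \in S$. I would then split both $\bar{\mu}_{i,t}$ and $\bar{\mu}_{j,t}$ according to whether $j$ (resp.\ $i$) lies in $S$ and according to the value of $X_j$ (resp.\ $X_i$), expressing everything through the common quantities $q := \Pr_t(j\in S\mid i\in S)$, $E_0 := \Exp_t[\tfrac{1}{1+Z}\mid i\in S, j\notin S]$, $E_1 := \Exp_t[\tfrac{1}{1+Z}\mid i\in S, j\in S]$, and $E_2 := \Exp_t[\tfrac{1}{2+Z}\mid i\in S, j\in S]$, giving
\[
\bar{\mu}_{i,t} = \mu_i\big[(1-q)E_0 + q((1-\mu_j)E_1 + \mu_j E_2)\big], \quad \bar{\mu}_{j,t} = \mu_j\big[(1-q)E_0 + q((1-\mu_i)E_1 + \mu_i E_2)\big].
\]
Subtracting, the double-success $E_2$ terms cancel and the identity $\mu_i(1-\mu_j)-\mu_j(1-\mu_i) = \mu_i-\mu_j$ collapses the $E_1$ terms, leaving $\bar{\mu}_{i,t}-\bar{\mu}_{j,t} = (\mu_i-\mu_j)[(1-q)E_0 + qE_1]$. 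Finally $(1-q)E_0 + qE_1 = \Exp_t[\tfrac{1}{1+Z}\mid i\in S] = H^M_{i,j,t}$, which is the first claim. The variance bound follows at once: $V_{i,t} = \bar{\mu}_{i,t}(1-\bar{\mu}_{i,t}) \le \bar{\mu}_{i,t} \le \mu_i\,\Exp_t[\tfrac{1}{1+Z}\mid i\in S] = \mu_i H^M_{i,j,t}$, since dropping the nonnegative term $\I(j\in S)\I(X_j=1)$ from the denominator only increases the fraction.

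For the $\tau_{i,t}$ bound I would take $j = c_{i,t}$ and note that the same relabeling gives $H^M_{i,c_{i,t},t} = H^M_{c_{i,t},i,t} =: H$. The difference identity yields $\Delta_{i,t} = |\mu_i - \mu_{c_{i,t}}|\,H$, and by definition of the comparator $|\mu_i-\mu_{c_{i,t}}| \ge \Delta_i$, so $\frac{56}{\Delta_{i,t}} \le \frac{56}{\Delta_i H}$. For the variance term, $\max\{V_{i,t},V_{c_{i,t},t}\} \le \max\{\mu_i,\mu_{c_{i,t}}\}H$, whence $\frac{256\max\{V_{i,t},V_{c_{i,t},t}\}}{\Delta_{i,t}^2} \le \frac{1}{H}\cdot\frac{256\max\{\mu_i,\mu_{c_{i,t}}\}}{(\mu_i-\mu_{c_{i,t}})^2}$. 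When $i\le k$ one has $\max\{\mu_i,\mu_{c_{i,t}}\} = \mu_i$ and $|\mu_i-\mu_{c_{i,t}}| \ge \Delta_i$ directly; when $i>k$ one has $\mu_{c_{i,t}} \ge \mu_k > \mu_i$, and I would invoke monotonicity of $x \mapsto x/(x-\mu_i)^2$ (decreasing for $x>\mu_i$) to replace $\mu_{c_{i,t}}$ by $\mu_k$, recovering exactly $\tau_i^M/H$ from Equation~\ref{TauIMarked}.

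The main obstacle is making the symmetrization rigorous: one must argue carefully that conditioning on $i\in S$ versus $j\in S$ induces identical laws for the success count $Z$ and the membership indicator, so that $E_0,E_1,E_2,q$ are genuinely shared between the two expansions and the cross terms really cancel. Everything after that is routine algebra, and the only nonobvious inequality in the $\tau$ bound is the monotonicity step handling the case $i>k$.
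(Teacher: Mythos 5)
Your proof is correct and follows essentially the same route as the paper's: you condition on whether $j\in S$ and then on the value of $X_j$, observe that the double-success term is symmetric in $i$ and $j$ so it cancels in the difference, and bound the variance by dropping $\bigl(1-\bar{\mu}_{i,t}\bigr)$ and the $\I(j\in S)\I(X_j=1)$ term in the denominator — your $q,E_0,E_1$ are exactly the paper's $1-\kappa_1$, $\Expt{(1+\cardW{\Siorj\cup\Splus})^{-1}}$, and $\Expt{(1+\cardW{\Siandj\cup\Splus})^{-1}}$. Your explicit monotonicity argument for the case $i>k$ (replacing $\mu_{c_{i,t}}$ by $\mu_k$ via the decrease of $x\mapsto x/(x-\mu_i)^2$) is a welcome addition, since the paper leaves that step implicit here and only justifies it later via Lemma~\ref{VarianceTechnicalLemma}.
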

		\begin{rem}\label{VarianceConfIntervalRemarkMarked}
		In the above proposition, the variance term $ V_{i,t}$ has a factor $H_{i,j,t}^{M}$, which cancels out one of the $H_{i,j,t}^M$ terms from the gap $\Delta_{i,t}^2$. If we did not take advantage of a variance-adaptive confidence interval, our sample complexity would have to pay a factor of $(H_{i,j,t}^M)^{-2}$ instead of just $(H_{i,j,t}^M)^{-1}$.
		\end{rem}
		It is straightforward to give a worst case lower bound on $H_{i,j,t}^M$: 
		\begin{eqnarray}\label{MarkedMutualInf}
		H_{i,j,t}^M \ge H^{M} := \Exp_{X_1,\dots,X_{k-1}}\left[\dfrac{1}{1 + \sum_{\ell \in [k-1]} \I(X_{\ell})}\right]
		\end{eqnarray}
		As in the semi-bandit case, we can prove the first part Theorem~\ref{MarkedBanditFinalUpperTheorem} by stringing together Lemmas~\ref{SuccElimGuar} and~\ref{EfficiencyLemma} and Remark~\ref{AlphaRemark}, using Proposition~\ref{MarkedBandProp} to control $\tau_{i,t}$, and Equation~\ref{MarkedMutualInf} to give a worst case bound on the information sharing term. The argument for improving the sample complexity when we can pull fewer than $k$ arms per query (Equation~\ref{FewerThanK} in Theorem~\ref{MarkedBanditFinalUpperTheorem}) is a bit more delicate, and is deferred to section~\ref{FewerThanKAppen}.

	\subsubsection{Bandit Setting\label{BanditAnalSec}}

		Fix $i,j \in U_t'$. When UniformPlay pulls both $i$ and $j$ in the same query, we receive no relative information about $X_i$ versus $X_j$. Moreover, when another arm $X_{\ell}$ for $\ell \in S \cup \Splus - \{i\}$ takes a value $1$ (now assuming $j \notin S \cup \Splus$), it masks all information about $X_i$. Hence the analogue of the information sharing term $H^{M}_{i,j,t}$ is the product $H^B_{i,j,t} \cdot \kappa_1 $, where
		\begin{equation}
		\begin{aligned}
		H^B_{i,j,t} &:= \Prt{ \{  X_{\ell} = 0  : \forall \ell \in S \cup \Splus -\{i\}\}  \big{|} i \in S, j \notin S}  \quad \text{and}
		\\ \kappa_1 &:= \Prt{ j \notin S \cup \Splus  \big{|} i \in S} = \Prt{ j \notin S  \big{|} i \in S}
		\end{aligned}
		\end{equation}
		We defer the interested reader to the proof of Lemma~\ref{MuCompLem} in the appendix, which transparently derives the dependence on $H^B_{i,j,t} \cdot \kappa_1 $. We also show that, due the uniformity of the distribution of $S$,  $\kappa_1$ does not depend on the particular indices $i$ and $j$. 

		\begin{rem}\label{VarianceConfIntervalRemarkBandit} As in the Marked Bandit setting, we use a variance-adaptive confidence interval to cancel out one factor of $\kappa_1 H^{B}_{i,j,t}$. This turns out to incur a dependence on a parameter $\kappa_2$ - defined precisely in Section~\ref{BanditsAppendix} - which roughly corresponds to the inverse of the fraction of arms in $U_t'$ whose means do not lie in the top $k+1$.
		\end{rem}

		The balancing set $B$ is chosen precisely to control $\kappa_1$ and $\kappa_2$ It ensures that arms $i,j \in U_t$ \emph{do not} co-occur in the same query with constant probability (thus bounding $\kappa_1$ below) and that each draw of $S \sim \Unif[U_t',k^{(1)}]$ contains a good fraction of small mean arms as well (thus bounding $\kappa_2$ above). The following claim makes this precise:

		\begin{claim}\label{BanditBalanceClaim} Let $\kappa_1 = \Prt{ j \in S \big{|} i \in S}$ and $\kappa_2$ be as in Section~\ref{BanditsAppendix}, Equation~\ref{kappa2eq}. Then choice of
		\begin{eqnarray}
		|B| = \max\{0,\ceil{\frac{5k^{(1)}}{2} - |U| - \frac{1}{2}}\}
		\end{eqnarray}
		be as in Algorithm~\ref{Balance} ensures that $\kappa_1 \ge 1/2$, $\kappa_2 \le 2$, and $|U'| \le \frac{5}{2}|U|$. Moreover, as long as $n \ge \frac{7k}{2}$, Algorithm~\ref{Balance} can always sample $B$ from the reject set $R$. 
		\end{claim}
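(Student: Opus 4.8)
The plan is to condition throughout on the data-tuple $\mathcal{D}_t=(U_t,A_t,R_t,U_t',R_t')$, so that $U:=U_t$, $R:=R_t$, $k^{(1)}$, $|B|$ and $U':=U_t'$ are all deterministic and the only randomness left is the draw $S\sim\Unif[U',k^{(1)}]$. All four assertions hinge on a single exact computation of $|U'|$, which I would establish first. Since $|U|$ is an integer, $\ceil{\tfrac{5k^{(1)}}{2}-|U|-\tfrac12}=\ceil{\tfrac{5k^{(1)}-1}{2}}-|U|$, so the balancing rule gives $|U'|=|U|+|B|=\max\{|U|,\ceil{\tfrac{5k^{(1)}-1}{2}}\}$. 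Using the parity bound $\tfrac{5k^{(1)}-1}{2}\le\ceil{\tfrac{5k^{(1)}-1}{2}}\le\tfrac{5k^{(1)}}{2}$, I obtain the two-sided estimate $\tfrac{5k^{(1)}-1}{2}\le|U'|\le\tfrac{5k^{(1)}}{2}$ whenever $|B|>0$, and $|U'|=|U|$ otherwise.

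\textbf{$\kappa_1$ and the size bound.} A symmetry/counting argument gives $\Prt{j\in S\mid i\in S}=\tfrac{k^{(1)}-1}{|U'|-1}$ for distinct $i,j\in U'$ (condition on $i\in S$ and count the $(k^{(1)}-1)$-subsets of $U'\setminus\{i\}$ containing $j$), so the probability that $i$ and $j$ are \emph{separated} is $\kappa_1=\tfrac{|U'|-k^{(1)}}{|U'|-1}$, and $\kappa_1\ge\tfrac12\iff|U'|\ge 2k^{(1)}-1$. The lower estimate $|U'|\ge\tfrac{5k^{(1)}-1}{2}\ge 2k^{(1)}-1$ (equivalent to $k^{(1)}\ge-1$) then yields $\kappa_1\ge\tfrac12$. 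For the upper bound, $|U'|\le\tfrac{5k^{(1)}}{2}\le\tfrac52|U|$ since $k^{(1)}=\min\{|U|,k\}\le|U|$ (and $|U'|=|U|$ trivially when $|B|=0$).

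\textbf{$\kappa_2$ and feasibility.} For $\kappa_2$ I would invoke the correctness half of Lemma~\ref{SuccElimGuar}: on the good event $R$ contains only arms of index $>k$, so $B\subseteq R$ adds no top-$k$ arm to $U'$, and only about $k$ arms of $U'$ can have means in the top $k+1$. Combined with $|U'|\ge\tfrac{5k^{(1)}-1}{2}$, at least a constant fraction (bounded below by $1-\tfrac{O(k)}{5k^{(1)}/2}\ge\tfrac12$) of the arms in $U'$ are genuinely low-mean, which via the definition in Equation~\ref{kappa2eq} gives $\kappa_2\le 2$. For feasibility, $[n]=U\sqcup A\sqcup R$ and $|A|\le k-1$ (the algorithm has not yet accepted $k$ arms) give $|R|=n-|U|-|A|\ge n-|U|-(k-1)$, while $|B|=|U'|-|U|\le\tfrac{5k^{(1)}}{2}-|U|\le\tfrac{5k}{2}-|U|$; hence $|B|\le|R|$ is implied by $\tfrac{5k}{2}+(k-1)\le n$, i.e. $n\ge\tfrac{7k}{2}-1$, which holds under $n\ge\tfrac{7k}{2}$. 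This is exactly where the hypothesis $n\ge 7k/2$ enters, as the sum of the $\tfrac{5k}{2}$ balancing budget and the $\le k$ already-accepted arms.

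I expect the $\kappa_2$ bound to be the main obstacle: unlike the other three parts, it is not a pure counting identity but relies on the correctness guarantee of Lemma~\ref{SuccElimGuar} to certify that the balancing arms drawn from $R$ are genuinely low-mean, together with a careful matching of constants against the precise definition of $\kappa_2$ in Equation~\ref{kappa2eq} — indeed it is this bound that forces the specific coefficient $\tfrac52$ in the balancing rule. By contrast, the $\kappa_1$ claim and the size bound collapse to the single integrality computation above, and feasibility is elementary arithmetic once $|B|$ is bounded.
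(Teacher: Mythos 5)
Your treatment of $\kappa_1$, the bound $|U'|\le\tfrac52|U|$, and the feasibility condition $n\ge 7k/2$ is correct and follows essentially the same arithmetic as the paper: the paper likewise reduces $\kappa_1\ge\tfrac12$ to $|B|\ge 2k^{(1)}-|U|-1$, bounds $|U'|=|U|+|B|\le\tfrac52\min\{|U|,k\}$ via $\ceil{\tfrac{5k^{(1)}}{2}-|U|-\tfrac12}\le\tfrac{5k^{(1)}}{2}-|U|$, and derives feasibility from $|R|\ge n-|U|-(k-1)$ exactly as you do.

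The gap is in your $\kappa_2$ argument, and it stems from substituting the informal gloss of $\kappa_2$ (Remark~\ref{VarianceConfIntervalRemarkBandit}, ``the inverse of the fraction of arms whose means do not lie in the top $k+1$'') for its actual definition. Equation~\ref{kappa2eq} defines $\kappa_2:=\frac{k^{(1)}-1}{|U'|-2k^{(1)}}$, a purely combinatorial ratio with no reference to means, to the identity of the arms in $B$, or to any good event. Consequently $\kappa_2\le 2$ is the same kind of one-line counting equivalence as the $\kappa_1$ bound: $\kappa_2\le 2\iff 2\left(|U'|-2k^{(1)}\right)\ge k^{(1)}-1\iff |B|\ge \tfrac{5}{2}k^{(1)}-|U|-\tfrac12$, which is exactly what the ceiling in the balancing rule guarantees — indeed your own two-sided estimate $|U'|\ge\tfrac{5k^{(1)}-1}{2}$ already delivers it (and when $|B|=0$ one has $|U|\ge\tfrac{5k^{(1)}-1}{2}$ directly). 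Your proposed route through the correctness half of Lemma~\ref{SuccElimGuar} and a ``constant fraction of low-mean arms'' estimate is both unnecessary and incomplete: the bound $1-\tfrac{O(k)}{5k^{(1)}/2}\ge\tfrac12$ breaks down in late rounds where $k^{(1)}=|U|\ll k$, and it never actually connects to the formula in Equation~\ref{kappa2eq}. (The fact that $B\subseteq R$ contains no top-$k$ arms is used in the paper, but in the separate Lemma~\ref{kappaClaim} that bounds $(1-\mu_j)\Err_{i,c,t}$ by $\kappa_2$, not in establishing $\kappa_2\le 2$ itself.) Everything else in your write-up stands.
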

		\begin{rem}[Conditions on $n$]\label{NConditionRemark} The condition $n \ge 7k/2$ ensures that the balancing set $B$ is large enough to bound both $\kappa_1$ and $\kappa_2$. If we omit the balancing set, our algorithm can then identify the top $k$ means for any $n \ge k$, albeit with worse sample complexity guarantees. 
		\end{rem}

		\begin{prop}[Characterization of the Gaps]\label{FinalBanditGapProp}
		For all $i$,  $\Delta_{i,t} \ge \Delta_i H^{B}_{i,c_{i,t},t}$ and 
		\begin{equation}
		\begin{aligned}
		\frac{\max\{V_{i,t},V_{c_{i,t}}\}}{\Delta_{i,t}^2} &\leq (1+2\kappa_2) \frac{\max\{ (1-\mu_i) \bar{\mu}_{i,t}, (1-\mu_{{c_{i,t}},t}) \bar{\mu}_{{c_{i,t}},t} \} }{\Delta_i \ \Delta_{i,t}} \\
		&\leq \frac{1 + 2\kappa_2}{\kappa_1 H^{B}_{i,c_{i,t},t}} \cdot \frac{1}{\Delta_i^2}\begin{cases} 2(1-\mu_{k+1})\mu_i + (1-\mu_{k+1})^2 (1- H^{B}_{i,c_{i,t},t} ) & i \le k \\
		2(1-\mu_{i})\mu_{k+1} + (1-\mu_{i})^2 (1- H^{B}_{i,c_{i,t},t}) & i > k \end{cases}
		\end{aligned}
		\end{equation}
		where $\kappa_1$ and $\kappa_2$ are as in Claim~\ref{BanditBalanceClaim}.
		\end{prop}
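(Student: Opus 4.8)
The plan is to condition throughout on the data tuple $\mathcal{D}_t$, so that the partition of $U_t'$, the draw of $\Splus$, and the arm values are the only remaining randomness and every quantity $\bar\mu_{i,t}, V_{i,t}, \Delta_{i,t}, H^B_{i,j,t}, \kappa_1, \kappa_2$ is deterministic. I would prove the proposition in two stages: first a clean formula for the stagewise means and their gaps, then a bound on the stagewise variances that extracts exactly one factor of $\kappa_1 H^B_{i,c_{i,t},t}$. The two claimed inequalities then follow by substitution together with the Balance-set guarantees $\kappa_1\ge 1/2$ and $\kappa_2\le 2$ of Claim~\ref{BanditBalanceClaim}.

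For the gap, I start from the identity
\[
\bar\mu_{i,t} = 1 - (1-\mu_i)\,G_i,\qquad G_i := \Exp_t\Big[\prod_{\ell\in S\cup\Splus-\{i\}}(1-\mu_\ell)\,\Big|\,i\in S\Big],
\]
which holds because, by independence of the arms, $\Prt{\max_{\ell\in S\cup\Splus}X_\ell=0 \mid i\in S}$ equals the expected product of the $(1-\mu_\ell)$ over the queried set. Fixing the comparator $j=c_{i,t}$ and splitting $G_i$ by whether $j\in S$ gives $G_i=\kappa_1 H^B_{i,j,t}+(1-\kappa_1)(1-\mu_j)Q_{i,j}$, where $Q_{i,j}:=\Exp_t[\prod_{\ell\in S\cup\Splus-\{i,j\}}(1-\mu_\ell)\mid i,j\in S]$ is symmetric in $i,j$. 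The key observation is that, since $S\sim\Unif[U_t',k^{(1)}]$ and $\Splus$ is independent of $S$, conditioning on $\{i\in S, j\notin S\}$ versus $\{j\in S, i\notin S\}$ produces the same distribution of the remaining pulled arms; hence $H^B_{i,j,t}=H^B_{j,i,t}$ and $Q_{i,j}=Q_{j,i}$. Subtracting the analogous expression for $\bar\mu_{j,t}$, the symmetric co-occurrence terms cancel and I obtain $\bar\mu_{i,t}-\bar\mu_{j,t}=(\mu_i-\mu_j)\,\kappa_1 H^B_{i,j,t}$. Taking $j=c_{i,t}$ and using $|\mu_i-\mu_{c_{i,t}}|\ge\Delta_i$ with $\kappa_1\ge 1/2$ yields $\Delta_{i,t}=\kappa_1 H^B_{i,c_{i,t},t}\,|\mu_i-\mu_{c_{i,t}}|\ge\Delta_i H^B_{i,c_{i,t},t}$, the constant from $\kappa_1$ being absorbed into the final constants of Theorem~\ref{BanditFinalUpperTheorem}.

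For the variance I write $V_{i,t}=\bar\mu_{i,t}(1-\bar\mu_{i,t})=(1-\mu_i)\,G_i\,\bar\mu_{i,t}$ using the same identity. The decisive step is to show $G_i\le(1+2\kappa_2)\,\kappa_1 H^B_{i,c_{i,t},t}$: by the split above, this is exactly the statement that the co-occurrence contribution $(1-\kappa_1)(1-\mu_{c_{i,t}})Q_{i,c_{i,t}}$ is at most $2\kappa_2\,\kappa_1 H^B_{i,c_{i,t},t}$, which is the content of the definition of $\kappa_2$ (Equation~\ref{kappa2eq}) and is controlled by the Balance set precisely because it injects enough low-mean arms to keep this ratio bounded. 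Combining $V_{i,t}\le(1+2\kappa_2)\,\kappa_1 H^B_{i,c_{i,t},t}(1-\mu_i)\bar\mu_{i,t}$ with $\Delta_{i,t}=\kappa_1 H^B_{i,c_{i,t},t}\,|\mu_i-\mu_{c_{i,t}}|$ cancels one power of $\kappa_1 H^B_{i,c_{i,t},t}$ and, after using $|\mu_i-\mu_{c_{i,t}}|\ge\Delta_i$ on the surviving denominator factor, gives the first displayed inequality; doing the same for the comparator supplies the $\max$.

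For the second displayed inequality I would expand $(1-\mu_i)\bar\mu_{i,t}=(1-\mu_i)\mu_i+(1-\mu_i)^2(1-G_i)$, bound $(1-\mu_i)\le(1-\mu_{k+1})$ for $i\le k$ (symmetrically for $i>k$), use $1-G_i\le 1-H^B_{i,c_{i,t},t}$, and substitute $\Delta_{i,t}\ge\kappa_1 H^B_{i,c_{i,t},t}\Delta_i$; resolving the $\max$ over $i$ and its comparator produces the factor $2$ on the cross term and the form $2(1-\mu_{k+1})\mu_i+(1-\mu_{k+1})^2(1-H^B_{i,c_{i,t},t})$. I expect the main obstacle to be the variance step — establishing $G_i\le(1+2\kappa_2)\,\kappa_1 H^B_{i,c_{i,t},t}$ and controlling the comparator's contribution to the $\max$ when earlier rounds may have rejected arm $k+1$ so that $\mu_{c_{i,t}}$ sits strictly below $\mu_{k+1}$. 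This is exactly where the Balance set and the bounds $\kappa_1\ge 1/2$, $\kappa_2\le 2$ of Claim~\ref{BanditBalanceClaim} are indispensable, and obtaining constants here that do not degrade with the composition of $U_t'$ is the crux of the argument.
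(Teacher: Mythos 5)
Your overall architecture matches the paper's: the identity $1-\bar\mu_{i,t}=(1-\mu_i)G_i$, the split of $G_i$ over $\{j\in S\}$ versus $\{j\notin S\}$, the symmetric cancellation giving $\bar\mu_{i,t}-\bar\mu_{j,t}=\kappa_1 H^B_{i,j,t}(\mu_i-\mu_j)$ (the paper's Lemma~\ref{MuCompLem} and Corollary~\ref{BanditGaps}), and the idea of cancelling one factor of $\kappa_1 H^B$ through the variance. But the decisive step you lean on is not true as stated, and this is a genuine gap rather than a detail. You claim $G_i\le(1+2\kappa_2)\kappa_1 H^B_{i,c_{i,t},t}$, equivalently $(1-\mu_{c_{i,t}})\Err_{i,c_{i,t},t}\le 2\kappa_2$ in the paper's notation. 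What the Balance set actually buys (the paper's Lemma~\ref{kappaClaim}) is $(1-\mu_j)\Err_{i,c,t}\le\kappa_2$ only for $j\in[k+1]$: its proof lower-bounds the mean of the extra arm in $\Siorc$ relative to $\Siandc$ by $(1-\mu_{k+1})$, so the available bound is $\Err_{i,c,t}\le\kappa_2/(1-\mu_{k+1})$. When $i\le k$ and arm $k+1$ has already been rejected, the comparator $c_{i,t}$ can have index well beyond $k+1$ with $1-\mu_{c_{i,t}}\gg 1-\mu_{k+1}$, and then $(1-\mu_{c_{i,t}})\Err_{i,c_{i,t},t}$ can exceed $2\kappa_2$ by an unbounded factor; your inequality on $G_i$ fails exactly in the case you flag as "the crux." The proposition survives because one should not bound $G_i/(\kappa_1 H^B)$ in isolation: the paper's Lemma~\ref{VarLem1} bounds the full ratio $(1-\bar\mu_{i,t})/|\mu_i-\mu_{c_{i,t}}|$, splitting on whether $1-\mu_{c_{i,t}}\le 2(1-\mu_i)$. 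In the first case $(1-\mu_{c_{i,t}})\Err\le 2(1-\mu_{k+1})\Err\le2\kappa_2$; in the second, the large factor $1-\mu_{c_{i,t}}$ is absorbed by the denominator via $\mu_i-\mu_{c_{i,t}}\ge(1-\mu_{c_{i,t}})-(1-\mu_i)\ge\tfrac12(1-\mu_{c_{i,t}})$, yielding $(1-\mu_{c_{i,t}})/(\mu_i-\mu_{c_{i,t}})\le 2(1-\mu_{k+1})/\Delta_i$. Without this case analysis your first displayed inequality is unproved.

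Two smaller points. First, in your bound for the second display you use $1-G_i\le 1-H^B_{i,c_{i,t},t}$, i.e.\ $G_i\ge H^B_{i,c_{i,t},t}$; this is not automatic, since conditioning on $c_{i,t}\in S$ replaces a uniformly random arm by the fixed arm $c_{i,t}$ and the comparison can go either way depending on $\mu_{c_{i,t}}$. The paper instead proves $\bar\mu_{i,t}\le\mu_i+\mu_{c_{i,t}}+(1-\mu_i)(1-H^B_{i,c_{i,t},t})$ by a union bound plus the monotonicity $\Expt{\calW(\Siandc)}\le\Expt{\calW(\Siorc)}$ (Lemma~\ref{VarLem3}), and then needs a separate elementary claim to convert $\mu_{c_{i,t}}/|\mu_i-\mu_{c_{i,t}}|$ into $\mu_k$ or $\mu_{k+1}$ over $\Delta_i$ — again because $c_{i,t}$ need not be arm $k$ or $k+1$. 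Second, your derivation of $\Delta_{i,t}\ge\Delta_i H^B_{i,c_{i,t},t}$ from $\Delta_{i,t}=\kappa_1 H^B_{i,c_{i,t},t}|\mu_i-\mu_{c_{i,t}}|$ silently drops the factor $\kappa_1\le1$; this discrepancy is present in the paper's own statement as well, but "absorbing it into the constants" of Theorem~\ref{BanditFinalUpperTheorem} should be said explicitly since the proposition as written asserts the inequality without that slack.
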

		\begin{rem} Again, the variance-adaptivity of our confidence interval reduces our dependence on information-sharing from $(H_{i,j,t}^B)^{-2}$ to $(H_{i,j,t}^B)^{-1}$. 
		\end{rem}

		Plugging in $\kappa_1$ and $\kappa_2$ as bounded by Claim~\ref{BanditBalanceClaim},
		\begin{align}
		\tau_{i,t} 
		&\leq \frac{56}{\Delta_i H^{B}_{i,c_{i,t},t}} + \frac{2560}{H^{B}_{i,c_{i,t},t}} \cdot \frac{1}{\Delta_i^2}\begin{cases} 2(1-\mu_{k+1})\mu_i + (1-\mu_{k+1})^2 (1- H^{B}_{i,c_{i,t},t} ) & i \le k \\
		2(1-\mu_{i})\mu_{k+1} + (1-\mu_{i})^2 (1- H^{B}_{i,c_{i,t},t}) & i > k \end{cases}
		\end{align}
		We can wrap up the proof by a straightforward lower bound on $H_{i,j,t}^B$: 
		\begin{eqnarray}\label{WorstCaseHb}
		H_{i,j,t}^B \ge H^{B} :=\prod_{\ell \in [k-1]} (1-\mu_\ell)
		\end{eqnarray}
		and by invoking Claim~\ref{BanditBalanceClaim} to apply Lemma~\ref{SuccElimGuar} with $\alpha = 5/2$ as long as $n \geq 7k/2$.

		\begin{rem}[Conditions on $\mu_i$]\label{MuConditionRemark} The condition $\mu_i < 1$ ensures identifiability, since the top $k$ arms would be indistinguishable from any subset of $k$ arms which contains a arm $i$ for which $\mu_i = 1$. More quantitatively, this condition ensures that the information sharing term is nonzero. 
		\end{rem}

		\begin{rem}[Looseness of Equation~\ref{WorstCaseHb}]\label{WorstCaseHBRemark} When all the means $\mu_1,\dots,\mu_n$ are roughly on the same order, the worst case bound on $H_{i,j,t}^B$ in Equation~\ref{WorstCaseHb} is tight up to constants. Then, as remarked~\ref{PostBanditThmRemark}, there is never an advantage to looking at $k$-arms at a time and receiving their $\max$ over testing each arm individually. On the other hand, if the means vary widely in their magnitude, then there may very well be an advantage to querying $k$ arms at a time. 

		For example, suppose there are $k$ high means $\mu_1,\dots,\mu_k \ge 1/2$, and the remaining $n-k$ means are order $1/k$, and $n \gg k^2$. Then, in the early rounds ($|U_t| \gg k^2$), a random pull of $S$ will contain at most a constant number of means from with top $k$ with constant probability, and so $H_{i,j,t}^B = \Omega((1 - O(1/k))^k) = \Omega(1)$. From Lemma~\ref{MuCompLem}, we see empirical means $\widehat{\mu}_{i,t}$ of the high meaned arms will be $\Omega(1)$ variance. Thus, for early stages $t$, $\tau_{i,t} = O(1/\Delta_{i}^2)$. That is, we neither pay the penalty for a small information sharing term that we pay when the means are uniformly high, nor pay a factor of $k$ in the variance which would occur when the means are small. However, we still get to test $k$ arms a time, and hence querying $k$ arms at a time is roughly $k$ times as effective as pulling $1$.
		\end{rem}

\section{Computing $\tau_{i,t}$ with (Marked-)Bandit Feedback\label{TauApp}}
	
	\subsection{Preliminaries}
			We need to describe the distribution of two random subsets related to $S$. Again, taking the data $\mathcal{D}_t$ as given, define the sets $S_{-i\vee j}$ and $S_{-i \wedge j}$ as follows
			\begin{eqnarray}
			\Siandj \sim \Unif[U_t' - \{i,j\}, k^{(1)} - 2 ] & \text{and} & \Siorj \sim \Unif[U_t' - \{i,j\}, k^{(1)} - 1 ]] 
			\end{eqnarray}
			$\Siandj$ (read: ``S minus i \emph{and} j'') has the same distribution as $S - \{i,j\}$ given that both $i$ \emph{and} $j$ are in $S$. Similarly, $\Siorj$ (read: ``S minus i \emph{or} j'') has the same distribution as $S - \{i,j\} $ given that either $i$ or $j$ are in $S$, but not both. Equivalently, it has the same distribution as $S - \{i\} \big{|} i \in S, j \notin S$, and symmetrically, as $S - \{j\} \big{|} j \in S, i \notin S$. We will also define the constant
			\begin{eqnarray}\label{kappa1Def}
			\kappa_1 := \Pr_t(j \notin S \big{|} i \in S) = 1 - \frac{k^{(1)} - 1}{|U_t'| - 1}
			\end{eqnarray}
			Note that the definition of $\kappa_1$ is independent of $i$ and $j$, is deterministic given the data $\mathcal{D}_t$, and is well defined since Algorithm~\ref{SuccElim} always ensures $|U_t'| > 1$\footnote{the undecided set, and its modification, always contain at least two elements}. 
	 
	\subsection{Marked Bandits}
		
		In marked bandits, $U_t = U_t'$. Recall the definition 
			\begin{equation}
			\begin{aligned}
			H_{i,j,t}^{M} &= \Expt{\dfrac{1}{1 +  \sum_{\ell \in S \cup \Splus - \{i,j\}} \I( X_{\ell} = 1) } \big{|} i \in S} 
			\end{aligned}
			\end{equation}
			By splitting up into the case when $j \notin S \big{|} i \in S$ and $j \in S \big{|} i \in S$, we can also express
			\begin{equation}
			\begin{aligned}
			H_{i,j,t}^{M} &= \kappa_1 \Expt{\dfrac{1}{1 +  \sum_{\ell \in \Siorj \cup \Splus} \I( X_{\ell} = 1) }} \\
		  	&+ (1-\kappa_1) \Expt{\dfrac{1}{1 +  \sum_{\ell \in \Siandj \cup \Splus} \I( X_{\ell} = 1) }} 
			\end{aligned}
			\end{equation}
			Note that $\Siorj$ is well defined except when $|U_t - \{i,j\}| = |U_t| - 2 < k^{(1)} - 1$. Since $|U_t| \ge k^{(1)}$, this issue only occurs if $|U_t| = k^{(1)} - 1$, and thus $\kappa^{(1)} = 0$. To make our notation more compact, we let $\cardW{S'} = \sum_{\ell \in S'} \I( X_{\ell} = 1)$ (think ``cardinality of winners''). In this notation, the above display takes the form:
			\begin{equation}
			\begin{aligned}
			H_{i,j,t}^{M} &= \kappa_1 \Expt{\left(1+\cardW{\Siorj \cup \Splus}\right)^{-1}} + (1-\kappa_1) \Expt{\left(1+\cardW{\Siandj \cup \Splus}\right)^{-1}}
			\end{aligned}
			\end{equation}

			\begin{proof}[Proof of Proposition~\ref{MarkedBandProp}]
				Our goal is to bound $\bar{\mu}_{i,t}-\bar{\mu}_{j,t}$.

				By the law of total probability and the definition of $\kappa_1$, we have
				\begin{equation}
				\begin{aligned}
				\bar{\mu}_{i,t} &= \mu_i\Expt{\left(1 +\cardW{S - \{i\}} \right)^{-1} \big{|} i \in S}\\
				&= \mu_i\Prt{j \notin S_t \big{|} i \in S} \Expt{\left(1 + \cardW{\Siorj \cup \Splus}\right)^{-1}}\\ 
				&+ \mu_i\Prt{j \in S \big{|} i \in S} \Expt{\left(1 + \cardW{\{j\} \cup \Splus \cup \Siandj}\right)^{-1}} \\
				&= \mu_i\kappa_1 \Expt{\left(1 + \cardW{\Splus \cup \Siorj}\right)^{-1}} + \mu_i(1-\kappa_1)\Expt{\left(1 + \cardW{\{j\} \cup \Splus \cup \Siandj}\right)^{-1}} 
				\end{aligned}
				\end{equation}
				By conditioning on the events when arm $j$ takes the values of $1$ or zero, respectively, we can decompose $\Exp[(1 + \cardW{\{j\} \cup \Siandj})^{-1} ]$ into
				\begin{eqnarray}
				\mu_j\Expt{(2 + \cardW{ \Splus \cup \Siandj})^{-1}} + (1-\mu_j)\Expt{(1 + \cardW{ \Splus \cup \Siandj})^{-1} }
				\end{eqnarray}
				Substituting into the previous display and rearranging yields 
				\begin{eqnarray*}
				\bar{\mu}_{i,t} &=& \mu_i H^M_{i,j,t} + \mu_i\mu_j(1-\kappa_1)\Expt{\left(2 + \cardW{ \Splus \cup \Siandj}\right)^{-1} - \left(1 + \cardW{ \Splus \cup \Siandj}\right)} 
				\end{eqnarray*}
				Hence, we conclude
				\begin{eqnarray}
				\bar{\mu}_{i,t} - \bar{\mu}_{j,t} = (\mu_i - \mu_j){H}^M_{i,j,t}
				\end{eqnarray}
				To control $V_{i,t}$, we have $1 - \mu_{i,t} \le 1$, and
				\begin{eqnarray*}
				\bar{\mu}_{i,t} &=&  \mu_i\kappa_1\Exp\left[\left(1 + \cardW{\Splus \cup \Siorj}\right)^{-1}\right] + \mu_i(1-\kappa_1)\Exp\left[\left(1 + \cardW{\{j\} \cup \Splus \cup \Siandj}\right)^{-1} \right] \\
				&\le&  \mu_i\kappa_1\Exp\left[\left(1 + \cardW{\Splus \cup \Siorj}\right)^{-1}\right] + \mu_i(1-\kappa_1)\Exp\left[\left(1 + \cardW{\Splus \cup \Siandj}\right)^{-1} \right]\\
				&=&  \mu_i H^M_{i,j,t}
				\end{eqnarray*}
		\end{proof}

	\subsubsection{Improved Complexity With Fewer than $k$ Pulls per Query\label{FewerThanKAppen}}

		In this section, we prove the second part of Theorem~\ref{MarkedBanditFinalUpperTheorem}, which describes the setting where we permit fewer than $k$ pulls per query.
		\begin{proof}[Proof of Second Part of Theorem~\ref{MarkedBanditFinalUpperTheorem}]
				We mirror the proof of Lemma~\ref{EfficiencyLemma} in Section~\ref{EfficiencyLemProof}, and adopt its notation where $t^*_i$ be the first stage at which $i \notin U_t$, let $t_0$ be the first stage for which $|U_t|<k$. The same argument from Lemma~\ref{EfficiencyLemma} show that 
				\begin{eqnarray}
				\frac{2\alpha}{k} \sum_{i=1}^n T(t^*_i) + \sum_{t > t_0} \I(|U_t| > 0) T(t)
				\end{eqnarray}
				If $t_{fin}$ is the last stage of the algorithm for which $|U_t|>0$, then the doubling nature of the sample size lets us bound 
				\begin{eqnarray}
				\sum_{t > t_0} \I(|U_t| > 0) T(t) \le 2 T(t_{fin})
				\end{eqnarray}
				and clearly $t_{fin} = \min \{t^*_i : i \in U_{t_{fin}}\}$. We now bound $\tau_{i,j,t_{fin}}^M $ for $ i \in U_{t_{fin}}$ and any $j \in U_{t_{fin}}$. Indeed, recall that
				\begin{equation}
				\begin{aligned}
				H_{i,j,t}^{M} &= \kappa_1 \Expt{\left(1+\cardW{\Siorj \cup \Splus}\right)^{-1}} + (1-\kappa_1) \Expt{\left(1+\cardW{\Siandj \cup \Splus}\right)^{-1}}
				\end{aligned}
				\end{equation}
				When we are allowed to pull fewer than $k$ arms at once, then the ``Top-Off Set'' $\Splus$ is empty (Algorithm~\ref{SuccElim}, Line~\ref{TopOffStart}), and so the above is bounded above by $\max\{|\Siorj|,|\Siandj|\} \le |U_t| - 1$. Thus, we can easily bound $H_{i,j,t}^{M} \ge \frac{1}{|U_t|}$. In particular, this bound holds when $j = c_{i,t}$. Hence,
				\begin{eqnarray}
				\tau_{i,c_{i,t},t_{fin}} = \frac{\tau_i}{H_{i,c_{i,t},t_{fin}}} \le |U_{t_{fin}}|\cdot\tau_i
				\end{eqnarray}
				Recalling that $\mathcal{T}_{n,\delta}(\tau)$ is monotone, and applying the easy to verify identity that
				\begin{eqnarray}
				\mathcal{T}_{n,\delta}(\tau \cdot k') \le 2k' \mathcal{T}_{n,\delta}(\tau)
				\end{eqnarray}
				for all $k' \le n$, we have that for all $i \in U_{t_{fin}}$ that
				\begin{eqnarray}
				T(t^*_{i}) &\le& 2 \mathcal{T}_{n,\delta}(\tau_{i,j,t_{fin}}) \le 2 \mathcal{T}_{n,\delta}(\tau_i|U_{t_{fin}}|) \le 4|U_{t_{fin}}| \mathcal{T}_{n,\delta}(\tau_i)
				\end{eqnarray}
				If $\sigma$ is a permutation such that $\tau_{\sigma(1)} \ge \tau_{\sigma(2)} \ge \dots \ge \tau_{\sigma(n)}$, then for $i \in  U_{t_{fin}}$, $\tau_i \le \tau_{\sigma(|U_{t_{fin}}|)}$. Hence, taking the worst case over $|U_{t_{fin}}|$, we have
				\begin{eqnarray}
				\sum_{t > t_0} \I(|U_t| > 0) T(t) \le 2 T(t_{fin}) \le  8 |U_{t_{fin}}| \mathcal{T}(\tau_{\sigma(|U_{t_{fin}}|)}) \le 8\max_{i \in [k-1]} i \mathcal{T}(\tau_{\sigma(i)})
				\end{eqnarray}
			\end{proof}

	\subsection{Bandits\label{BanditsAppendix}}

		In this section, we drop the dependence on $t$ from the sets $U_t,A_t,R_t,U'_t,R_t'$, and let $B$ be the ``balancing set'' from Algorithm~\ref{Balance}; thus, $U' = U \cup B$, $A' = A - B$, and $R' = R - B$. Let $\kappa_1 = 1 -  \frac{k^{(1)} - 1}{|U'| - 1}$ be as in Equation~\ref{kappa1Def}, and let 
		\begin{eqnarray}\label{kappa2eq}
		\kappa_2 :=  \frac{k^{(1)} - 1}{|U'| - 2k^{(1)}} 
		\end{eqnarray}
		Finally, introduce the loss function $\mathcal{L}: 2^{[n]} \to \{0,1\}$ by $\mathcal{L}(S') = \I(\forall \ell \in S': X_{\ell} = 0)$.  Note $\Exp[\mathcal{L}(\{ \ell \})] = 1 - \mu_{\ell}$, and if two sets $S',S'' \subset [n]$ are disjoint, then $\mathcal{L}(S'\cup S'') = \mathcal{L}(S')\cdot \mathcal{L}(S'')$. Moreover, if $S'$ and $S''$ are almost-surely disjoint, random subset of $[n]$ which are independent given the data in $\mathcal{D}_t$, then $\Exp_t{\mathcal{L}(S' \cup S'')} = \Exp_t{\calL(S')} \cdot \Exp_t{\calL(S'')}$. Hence, the information sharing term can be expressed as 
		\begin{eqnarray}
		H_{i,j,t}^B:= \Expt{\calL(\Siorj \cup \Splus)} = \Expt{\calL(\Siorj)}\cdot\Expt{\calL(\Splus)}
		\end{eqnarray}
		and note that this term is nonzero as long as all the means are less than $1$, since with nonzero probability, any query of a nonempty set has a nonzero probability of all its arms taking the value zero. The following lemma gives an expression of $(1-\bar{\mu}_{i,t})$ in terms of $\kappa_1$, $\mu_i$, $H_{i,j,t}^B$, and an error term:

		\begin{lem}[Computation of $\bar{\mu}_{i,t}$]\label{MuCompLem}
		For any $i \ne j \in U'$, we have that
		\begin{equation}
		\begin{aligned}
		1-\bar{\mu}_{i,t} &= (1-\mu_i)\kappa_1 H_{i,j,t}^B \cdot \left( 1 +  \left(1-\mu_j\right)\Err_{i,j,t} \right)
		\end{aligned}
		\end{equation}
		where the term
		\begin{eqnarray}
		\Err_{i,j,t} := \frac{1-\kappa_1}{\kappa_1} \cdot \frac{\Expt{\calL(\Siandj)}}{\Expt{\calL[\Siorj}} = \frac{k^{(1)} - 1}{|U'| - k^{(1)}} \cdot \frac{\Expt{\calL(\Siandj)}}{\Expt{\calL[\Siorj}}
		\end{eqnarray}
		is symmetric in $i$ and $j$.
		\end{lem}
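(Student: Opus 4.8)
The plan is to start from the bandit-feedback definition of $\bar{\mu}_{i,t}$, split on whether the comparison arm $j$ falls in the same query as $i$, and then factor the resulting all-zeros probabilities using the product structure of $\nu$. Since $\calW(i, S \cup \Splus) = 1$ in the bandit setting precisely when $\max_{\ell \in S \cup \Splus} X_\ell = 1$, the complementary probability is an expected loss, so I would first record $1 - \bar{\mu}_{i,t} = \Expt{\calL(S \cup \Splus) \mid i \in S}$. Conditioning on the disjoint events $\{j \notin S\}$ and $\{j \in S\}$ (legitimate as long as $\kappa_1 > 0$, which the balancing set guarantees through Claim~\ref{BanditBalanceClaim}) then gives
\[
1 - \bar{\mu}_{i,t} = \kappa_1 \Expt{\calL(S \cup \Splus) \mid i \in S, j \notin S} + (1-\kappa_1)\Expt{\calL(S \cup \Splus) \mid i \in S, j \in S}.
\]

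The key step is to replace the conditioned copies of $S$ by their canonical uniform representatives: given $i \in S,\, j \notin S$ one has $S \cup \Splus \overset{d}{=} \{i\} \cup \Siorj \cup \Splus$, and given $i, j \in S$ one has $S \cup \Splus \overset{d}{=} \{i,j\} \cup \Siandj \cup \Splus$, with $\Splus$ independent of the partition in both branches. Because these pieces are almost surely disjoint and the arms are independent given $\mathcal{D}_t$, the multiplicativity of $\calL$ over disjoint sets lets me peel off $\Exp[\calL(\{i\})] = 1-\mu_i$ and $\Exp[\calL(\{j\})] = 1-\mu_j$. Recognizing $H_{i,j,t}^B = \Expt{\calL(\Siorj \cup \Splus)} = \Expt{\calL(\Siorj)}\Expt{\calL(\Splus)}$, the first conditional expectation becomes $(1-\mu_i) H^B_{i,j,t}$ and the second becomes $(1-\mu_i)(1-\mu_j)\Expt{\calL(\Siandj)}\Expt{\calL(\Splus)}$.

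Finally I would factor out $(1-\mu_i)\kappa_1 H_{i,j,t}^B = (1-\mu_i)\kappa_1\Expt{\calL(\Siorj)}\Expt{\calL(\Splus)}$. The first term contributes the leading $1$, and in the second the common factor $\Expt{\calL(\Splus)}$ cancels, leaving the correction $(1-\mu_j)\tfrac{1-\kappa_1}{\kappa_1}\tfrac{\Expt{\calL(\Siandj)}}{\Expt{\calL(\Siorj)}} = (1-\mu_j)\Err_{i,j,t}$; the explicit form $\tfrac{1-\kappa_1}{\kappa_1} = \tfrac{k^{(1)}-1}{|U'|-k^{(1)}}$ then follows by substituting $\kappa_1 = 1 - \tfrac{k^{(1)}-1}{|U'|-1}$ from Equation~\ref{kappa1Def}. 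Symmetry of $\Err_{i,j,t}$ is immediate, since $\Siandj$ and $\Siorj$ are both uniform over $U_t' - \{i,j\}$ and hence invariant under $i \leftrightarrow j$, while $\tfrac{1-\kappa_1}{\kappa_1}$ carries no index. I expect the main obstacle to be the bookkeeping for the distributional replacement of $S$ under conditioning together with the independence-based factorization through the Top-Off set $\Splus$; the potential degeneracy $\kappa_1 = 0$ (which arises only in the edge case $|U_t'| = k^{(1)}$, where $\Siorj$ is undefined) is ruled out by the balancing set, so it need not enter the main argument.
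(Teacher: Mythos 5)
Your proposal is correct and follows essentially the same route as the paper's proof: write $1-\bar{\mu}_{i,t}$ as the conditional expected loss, split on whether $j$ lands in the same query as $i$ (with weights $\kappa_1$ and $1-\kappa_1$), and use disjointness plus independence to factor off $(1-\mu_i)$, $(1-\mu_j)$, and $\Expt{\calL(\Splus)}$ before pulling out $\kappa_1 H^B_{i,j,t}$. The only difference is cosmetic ordering --- the paper peels off $(1-\mu_i)$ and $\Expt{\calL(\Splus)}$ before conditioning on $j\in S$ rather than after --- so no further comment is needed.
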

		\begin{proof}
			Using the independence of the arms, we have
			\begin{eqnarray*}
			1-\bar{\mu}_{i,t} &=& \Expt{\calL(S \cup \Splus) \big{|} i \in S} = (1-\mu_i)\Expt{\calL(S - \{i\}) \big{|} i \in S}\Expt{\calL(\Splus)}
			\end{eqnarray*}
			For $i \ne j \in U'$, we have
			\begin{eqnarray*}
			\Expt{\calL(S - \{i\}) \big{|} i \in S} &=& \kappa_1 \Expt{\calL(S - \{i\}) \big{|} i \in S, j \notin S} + (1-\kappa_1)\Expt{\calL(S-\{i\}) \big{|} i \in S, j \in S} \\
			&=& \kappa_1 \Expt{\calL(\Siorj)} + (1-\kappa_1)\Exp[\calL( \{j\} \cup \Siandj] \\
			&=& \kappa_1 \Expt{\calL(\Siorj)} + (1-\kappa_1)(1-\mu_j)\Exp[\calL( \Siandj)]\\
			&=& \kappa_1 \Expt{\calL(\Siorj)} \left( 1  + (1-\mu_j)\frac{1-\kappa_1}{\kappa_1} \cdot \frac{\Expt{\calL( \Siandj)}}{ \Expt{\calL(\Siorj)}} \right) 
			\end{eqnarray*}
			The result now follows from plugging in the above display into the first one, and using the definition of $\kappa_1$.
		\end{proof}

		\noindent Since both $H_{i,j,t}^B$ and $\Err_{i,j,t}$ are symmetric in $i$ and $j$, we get an exact expression for the gaps.
		\begin{cor}[Bandit Gaps]\label{BanditGaps}
		\begin{eqnarray}
				\bar{\mu}_{i,t} -\bar{\mu}_{j,t} =  \kappa_1 H_{i,j,t}\cdot \left(\mu_i - \mu_j\right)
				\end{eqnarray}
				In particular, $\bar{\mu}_{i,t} > \bar{\mu}_{j,t}$ if and only if $\mu_i > \mu_j$, and 
				\begin{eqnarray}
				\Delta_{i,t} =  \kappa_1 H_{i,c_{i,t},t}\cdot \left|\mu_i - \mu_{c_{i,t}}\right|
				\end{eqnarray}
		\end{cor}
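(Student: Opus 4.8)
The plan is to derive the corollary directly from Lemma~\ref{MuCompLem} by exploiting the symmetry of the factors $H_{i,j,t}^B$ and $\Err_{i,j,t}$ that the lemma explicitly records. First I would recall that, for any fixed pair $i \ne j \in U'$, Lemma~\ref{MuCompLem} gives
\begin{eqnarray*}
1 - \bar{\mu}_{i,t} = (1-\mu_i)\kappa_1 H_{i,j,t}^B\left(1 + (1-\mu_j)\Err_{i,j,t}\right).
\end{eqnarray*}
The crucial observation is that both $H_{i,j,t}^B = \Expt{\calL(\Siorj)}\cdot\Expt{\calL(\Splus)}$ and $\Err_{i,j,t}$ depend on $\{i,j\}$ only through the set $U_t' - \{i,j\}$ (and $\kappa_1$ is independent of $i,j$ entirely), so each is invariant under swapping $i$ and $j$. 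Applying the same lemma with the roles of $i$ and $j$ interchanged therefore yields
\begin{eqnarray*}
1 - \bar{\mu}_{j,t} = (1-\mu_j)\kappa_1 H_{i,j,t}^B\left(1 + (1-\mu_i)\Err_{i,j,t}\right),
\end{eqnarray*}
with \emph{exactly} the same prefactor $\kappa_1 H_{i,j,t}^B$ and the same error term $\Err_{i,j,t}$.

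Second, I would subtract the two identities via $\bar{\mu}_{i,t} - \bar{\mu}_{j,t} = (1-\bar{\mu}_{j,t}) - (1-\bar{\mu}_{i,t})$ and factor out the common $\kappa_1 H_{i,j,t}^B$. Expanding the bracket, the quadratic cross-terms $(1-\mu_i)(1-\mu_j)\Err_{i,j,t}$ occur identically in both lines and cancel, leaving only $(1-\mu_j) - (1-\mu_i) = \mu_i - \mu_j$. This produces the claimed identity $\bar{\mu}_{i,t} - \bar{\mu}_{j,t} = \kappa_1 H_{i,j,t}^B(\mu_i - \mu_j)$. For the ``in particular'' claims, I would note that $\kappa_1 > 0$ (it is a probability, and Algorithm~\ref{SuccElim} guarantees $|U_t'| > 1$) and that $H_{i,j,t}^B > 0$ whenever all means are strictly below $1$, as remarked just before Lemma~\ref{MuCompLem}; hence the sign of $\bar{\mu}_{i,t} - \bar{\mu}_{j,t}$ matches that of $\mu_i - \mu_j$, giving monotonicity. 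Specializing to $j = c_{i,t}$ and taking absolute values in the definition $\Delta_{i,t} := |\bar{\mu}_{i,t} - \bar{\mu}_{c_{i,t},t}|$ then yields $\Delta_{i,t} = \kappa_1 H_{i,c_{i,t},t}^B\,|\mu_i - \mu_{c_{i,t}}|$.

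I do not expect a genuine obstacle here, since the statement is an immediate corollary of Lemma~\ref{MuCompLem}. The only point requiring care is that the error terms cancel \emph{exactly} rather than merely approximately, and this is precisely what the symmetry of $\Err_{i,j,t}$ delivers: were the multiplicative correction not shared between the two expansions, a residual $\Err$-dependent term would survive and the gap would fail to reduce to the clean product $\kappa_1 H_{i,j,t}^B(\mu_i-\mu_j)$. Thus the entire content of the proof is the bookkeeping of invoking the lemma symmetrically and observing this cancellation.
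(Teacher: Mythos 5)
Your proof is correct and follows exactly the paper's argument: the paper derives this corollary from Lemma~\ref{MuCompLem} by the same observation that $\kappa_1$, $H_{i,j,t}^B$, and $\Err_{i,j,t}$ are symmetric in $i$ and $j$, so that subtracting the two expansions cancels the cross-terms and leaves $\kappa_1 H_{i,j,t}^B(\mu_i-\mu_j)$.
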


		To get an expression for $\tau_{i,t}$, as defined in Lemma~\ref{SuccElimGuar}, we need to get an expression for the ration of the variance to the gap-squared, $\frac{\max\{V_{i,t},V_{c_{i,t}}\}}{\Delta_{i,t}^2}$. We decompose $V_{i,t} = (1-\bar{\mu}_{i,t})\bar{\mu}_{i,t}$, and similarly for $c_{i,t}$, and begin by bounding $(1-\bar{\mu}_{i,t})/\Delta_{i,t}$ and $(1 - \bar{\mu}_{c_{i,t},t})/\Delta_{i,t}$:
		\begin{lem}\label{VarLem1}
		\begin{eqnarray}
		\frac{1 - \bar{\mu}_{i,t}}{\Delta_{i,t}} \le \frac{(1 + 2\kappa_2)(1 - \mu_i)}{\Delta_i} & \text{and} &  \frac{1 - \bar{\mu}_{c_{i,t},t}}{\Delta_{i,t}} \le \frac{(1 + 2\kappa_2)(1 - \mu_{c_{i,t}})}{\Delta_i} 
		\end{eqnarray}
		\end{lem}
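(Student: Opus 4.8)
The plan is to collapse both inequalities to a single scalar bound on the symmetric error term $\Err_{i,c,t}$ and then control that term with the balancing set. Write $c=c_{i,t}$. Feeding the partner choice $j=c$ into Lemma~\ref{MuCompLem} and dividing by the gap formula of Corollary~\ref{BanditGaps}, the common prefactor $\kappa_1 H^B_{i,c,t}$ cancels, leaving
\[
\frac{1-\bar{\mu}_{i,t}}{\Delta_{i,t}} = \frac{(1-\mu_i)\bigl(1+(1-\mu_c)\Err_{i,c,t}\bigr)}{|\mu_i-\mu_c|},
\]
and, since $H^B$ and $\Err$ are symmetric in their arm indices, the analogous identity with $i$ and $c$ interchanged for the second inequality. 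By the definition of the comparator and the monotone ordering of the means, $|\mu_i-\mu_c|\ge\Delta_i$ in both the case $i\le k$ and $i>k$. Thus it suffices to establish the two pointwise estimates $(1-\mu_c)\Err_{i,c,t}\le 2\kappa_2$ and $(1-\mu_i)\Err_{i,c,t}\le 2\kappa_2$; writing $a$ for whichever of $i,c$ is the low-mean (index $>k$) arm, both reduce to $(1-\mu_a)\Err_{i,c,t}\le 2\kappa_2$.

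Next I would absorb the weight $(1-\mu_a)$ into the set itself. Because $a\notin\Siandc$ and the arms are independent, $(1-\mu_a)\Expt{\calL(\Siandc)}=\Expt{\calL(\{a\}\cup\Siandc)}$, and $\{a\}\cup\Siandc$ has $k^{(1)}-1$ elements, the same size as $\Siorc$. Combining this with $\Err_{i,c,t}=\tfrac{1-\kappa_1}{\kappa_1}\,\Expt{\calL(\Siandc)}/\Expt{\calL(\Siorc)}$ and $\tfrac{1-\kappa_1}{\kappa_1}=\tfrac{k^{(1)}-1}{|U_t'|-k^{(1)}}$ recasts the target as a bound on a ratio of two equal-size $\calL$-averages,
\[
(1-\mu_a)\Err_{i,c,t} = \frac{k^{(1)}-1}{|U_t'|-k^{(1)}}\cdot\frac{\Expt{\calL(\{a\}\cup\Siandc)}}{\Expt{\calL(\Siorc)}},
\]
one average forcing the low-mean arm $a$ into the query and the other excluding it.

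The crux is a lower bound on the denominator. Expanding the uniform draws, $\Expt{\calL(\Siorc)}$ and $\Expt{\calL(\Siandc)}$ are normalized elementary symmetric functions of the weights $\{1-\mu_s\}_{s\in U_t'\setminus\{i,c\}}$ of consecutive degrees, so I would invoke the averaging identity $(k^{(1)}-1)\,e_{k^{(1)}-1}=\sum_{|B|=k^{(1)}-2}\bigl(\prod_{\ell\in B}(1-\mu_\ell)\bigr)\sum_{s\notin B}(1-\mu_s)$ and then restrict the inner sum to the pool $W_a:=\{s:\mu_s\le\mu_a\}$ of arms at least as low as $a$, each of weight $\ge 1-\mu_a$. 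This is exactly where the forced $(1-\mu_a)$ in the numerator cancels, producing a purely combinatorial estimate of the form $(1-\mu_a)\Err_{i,c,t}\le (k^{(1)}-1)/(|W_a|-k^{(1)}+O(1))$. To finish I would certify $|W_a|=\Omega(|U_t'|)$: on the good event of Lemma~\ref{SuccElimGuar} every rejected arm lies below all undecided arms, so the balancing arms all sit inside $W_a$, and Claim~\ref{BanditBalanceClaim} supplies $|U_t'|\ge\tfrac52 k^{(1)}$ together with $\kappa_2\le 2$; comparing $|W_a|\ge|U_t'|-k^{(1)}-1$ with $2\kappa_2=2(k^{(1)}-1)/(|U_t'|-2k^{(1)})$ then closes both inequalities.

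The main obstacle is precisely this counting step. The ratio of the two equal-size $\calL$-averages can inflate dramatically whenever the forced arm $a$ is anomalously low-mean relative to the rest of the pool, since a single large weight $1-\mu_a$ in a query multiplies the all-zero probability out of proportion; the cancellation above works only if a constant fraction of $U_t'$ carries weight at least $1-\mu_a$. Making this rigorous therefore hinges entirely on the Balance procedure (and the ordering invariant that rejected arms have the smallest means) to guarantee $\Omega(|U_t'|)$ such arms — which is the delicate part of the argument and the reason the balancing set is built into Algorithm~\ref{SuccElim} for the bandit case in the first place.
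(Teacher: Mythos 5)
Your reduction and the identity $\frac{1-\bar{\mu}_{i,t}}{\Delta_{i,t}} = \frac{(1-\mu_i)(1+(1-\mu_c)\Err_{i,c,t})}{|\mu_i-\mu_c|}$ are fine, and your rewriting of $(1-\mu_a)\Err_{i,c,t}$ as a ratio of equal-size $\calL$-averages is a legitimate repackaging of the paper's Lemma~\ref{kappaClaim}. But the target you reduce to --- the pointwise bound $(1-\mu_a)\Err_{i,c,t}\le 2\kappa_2$ for the low-mean arm $a$ --- is simply false in general, and the counting step you flag as the obstacle cannot be repaired. The premise you lean on, that ``on the good event every rejected arm lies below all undecided arms,'' is not what Lemma~\ref{SuccElimGuar} gives you: the good event only guarantees that no arm of $[k]$ is rejected, not that rejections occur in order of mean. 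Worse, in exactly the problematic case $a=c_{i,t}>k+1$ the opposite of your premise holds by construction: since $c_{i,t}=\min\{j\in U_t: j>k\}$, every arm $j\in\{k+1,\dots,c_{i,t}-1\}$ is necessarily in $R_t$ and has $\mu_j\ge\mu_a$, so these rejected arms lie \emph{outside} $W_a$, and the balancing set can consist entirely of them. Concretely, take $U_t=[k]\cup\{n\}$ with $\mu_1=\dots=\mu_k=0.99$, $\mu_{k+1}=\dots=\mu_{n-1}=0.98$, $\mu_n=0$: then $a=n$, $W_a\cap U_t'=\{n\}$, your denominator $|W_a|-k^{(1)}+O(1)$ is negative, and a direct computation gives $(1-\mu_n)\Err_{i,c,t}\approx\frac{2}{3}\cdot\frac{1}{1-\mu_{k+1}}\gg 2\kappa_2=4$. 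The lemma itself survives in this example only because $|\mu_i-\mu_c|=0.99$ is enormous compared to $\Delta_i=0.01$; your step ``$|\mu_i-\mu_c|\ge\Delta_i$, so it suffices to bound $(1-\mu_a)\Err_{i,c,t}$'' throws away precisely the slack that saves the statement.

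The paper's proof keeps that slack. It only ever proves $(1-\mu_j)\Err_{i,c,t}\le\kappa_2$ for $j\in[k+1]$ (Lemma~\ref{kappaClaim}), which works because all $\Omega(|U'|)$ non-top-$k$ arms guaranteed by the Balance procedure have $1-\mu_\ell\ge 1-\mu_j$ --- a pool that does not shrink as $\mu_a\to 0$. For the hard term ($i\le k$ with possibly $c>k+1$) it then splits into cases: if $1-\mu_c\le 2(1-\mu_i)$ one substitutes $1-\mu_{k+1}$ for $1-\mu_c$ at the cost of a factor $2$; if $1-\mu_c>2(1-\mu_i)$ one instead bounds the coupled ratio $\frac{1-\mu_c}{\mu_i-\mu_c}\le\frac{2(1-\mu_{k+1})}{\Delta_i}$, exploiting that an anomalously small $\mu_c$ forces a proportionally large raw gap. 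That case analysis --- the source of the factor $2$ in $(1+2\kappa_2)$ --- is the idea missing from your argument; for the case $i>k$ the paper likewise sidesteps the need for $(1-\mu_i)\Err_{i,c,t}\le 2\kappa_2$ by using $1-\bar{\mu}_{c,t}\le 1-\bar{\mu}_{i,t}$ and bounding only $(1-\mu_c)\Err_{i,c,t}$ with $c\in[k]$.
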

		This result uses $1 - \bar{\mu}_{i,t}$ to kill off one factor of $\kappa_1 H_{i,j,t}^B$ from the stagewise gaps $\Delta_{i,t}$, so that our final expression $\tau_{i,t}$ depends on the inverse information sharing term, and not its square.  The proof of the above lemma is somewhat delicate, and we defer it to the end of this section. Next, we need an upper bound on $\bar{\mu}_{i,t}$. Clearly, we can upper bound this quantity by $1$, but this can be loose when the means are small, and so we introduce the following lemma
		\begin{lem} \label{VarLem2}
		\begin{eqnarray}
		\hspace{.5in}&&\hspace{-.5in}\frac{\max\{(1-\mu_i)\bar{\mu}_{i,t}, (1-\mu_{c_{i,t}})\bar{\mu}_{c_{i,t},t}\}}{\Delta_{i,t}} \\
		&\le& \frac{1}{\kappa_1 \Delta_i H^{B}_{i,c_{i,t},t}} \begin{cases} 2(1-\mu_{k+1})\mu_i + (1-\mu_{k+1})^2 (1- H^{B}_{i,c_{i,t},t} ) & i \le k \\
		2(1-\mu_{i})\mu_{k+1} + (1-\mu_{i})^2 (1- H^{B}_{i,c_{i,t},t}) & i > k
		\end{cases}
		\end{eqnarray}
		\end{lem}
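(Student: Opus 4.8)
The plan is to reduce the claimed bound to an inequality among the true means $\mu_i,\mu_{c_{i,t}},\mu_{k+1}$ and the information-sharing term $H := H^{B}_{i,c_{i,t},t}$, by substituting exact expressions for $\bar{\mu}_{\cdot,t}$ and $\Delta_{i,t}$. Write $c := c_{i,t}$. From Corollary~\ref{BanditGaps} we have $\Delta_{i,t} = \kappa_1 H\,|\mu_i-\mu_c|$, and conditioning each win probability on the value of its own arm gives the identity
\[
(1-\mu_a)\bar{\mu}_{a,t} = (1-\mu_a)\mu_a + (1-\mu_a)^2(1-P_a), \qquad P_a := \Expt{\calL((S\cup\Splus)-\{a\}) \big| a\in S},
\]
for $a\in\{i,c\}$. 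By Lemma~\ref{MuCompLem}, applied with the pair $(i,c)$ in both orders, $P_i$ and $P_c$ admit the symmetric representations $P_i = \kappa_1 H(1+(1-\mu_c)\Err)$ and $P_c = \kappa_1 H(1+(1-\mu_i)\Err)$ sharing the same $H$ and the same $\Err := \Err_{i,c,t}$; this symmetry is exactly what lets a single $H$ appear in the final bound. Substituting these into the displayed identity turns the target into an inequality in $\mu_i,\mu_c,\mu_{k+1},H,\kappa_1$ alone.

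The key estimate is a sharp lower bound on $P_a$. Since $\Siandj$ has one fewer element than $\Siorj$, monotonicity of $\calL$ gives $\Expt{\calL(\Siandj)}\ge\Expt{\calL(\Siorj)}$, so the ratio inside $\Err$ is at least $1$; feeding this and $1-\kappa_1\le\tfrac12$ (Claim~\ref{BanditBalanceClaim}) into the representations yields $P_a \ge H\bigl(1-\mu_{a'}(1-\kappa_1)\bigr)$, where $a'$ denotes the partner index ($a'=c$ when $a=i$ and vice versa), hence $1-P_a \le (1-H) + \tfrac12 H\mu_{a'}$. In words, $1-P_a$ exceeds $1-H$ only by an $O(\mu H)$ correction; it is this comparability, rather than the trivial bound $1-P_a\le 1$, that keeps the estimate tight when the means are small.

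With this in hand I would treat the two terms of the maximum separately, taking $i\le k$ (the case $i>k$ is identical after swapping the roles of $i$ and $c$ and replacing $\mu_{k+1}$ by $\mu_i$). For the arm-$i$ term, the orderings $1-\mu_i\le 1-\mu_{k+1}$ and $\mu_i>\mu_{k+1}$ let me bound $(1-\mu_i)\mu_i\le(1-\mu_{k+1})\mu_i$ and $(1-\mu_i)^2(1-P_i)\le(1-\mu_{k+1})^2(1-H)+\tfrac12 H\mu_{k+1}(1-\mu_{k+1})^2$; the leftover slack $(1-\mu_{k+1})\mu_i$ coming from the factor $2$ in the target absorbs the $O(\mu H)$ correction, so here it already suffices to use $|\mu_i-\mu_c|\ge\Delta_i$.

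The main obstacle is the comparator term $(1-\mu_c)\bar{\mu}_{c,t}$, because $1-\mu_c$ can be close to $1$ while the target only carries the smaller factor $1-\mu_{k+1}$. Crucially, one cannot replace $|\mu_i-\mu_c|$ by $\Delta_i$ in $\Delta_{i,t}$ here: the correct move is to keep the ratio $\tfrac{\mu_i-\mu_c}{\Delta_i}=\tfrac{\mu_i-\mu_c}{\mu_i-\mu_{k+1}}\ge 1$, which grows precisely when $\mu_c$ lies well below $\mu_{k+1}$ and so compensates for the large $(1-\mu_c)^2$ factor. Concretely, clearing denominators reduces the claim to
\[
(1-\mu_c)\bar{\mu}_{c,t}\,(\mu_i-\mu_{k+1}) \le (\mu_i-\mu_c)\bigl[\,2(1-\mu_{k+1})\mu_i+(1-\mu_{k+1})^2(1-H)\,\bigr],
\]
which I would verify using the sharp bound $1-P_c\le(1-H)+\tfrac12 H\mu_i$ together with $\mu_c\le\mu_{k+1}<\mu_i$. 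The delicate point is that the two $(1-H)$ contributions, weighted by $(1-\mu_c)^2(\mu_i-\mu_{k+1})$ on the left and $(1-\mu_{k+1})^2(\mu_i-\mu_c)$ on the right, must be shown to favour the right-hand side; this is exactly where the ordering of the means does the real work, and I expect it to be the most calculation-heavy step.
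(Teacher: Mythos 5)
Your proposal is correct and takes a genuinely different route from the paper's. The paper proves the lemma in two stages: first a union-bound estimate (Lemma~\ref{VarLem3}) giving $\bar{\mu}_{a,t} \le \mu_a + \mu_{a'} + (1-\mu_a)(1-H^{B}_{i,c_{i,t},t})$ for $a \in \{i,c_{i,t}\}$ with partner $a'$, and then an elementary claim bounding the two resulting ratios $(1-\mu_a)(\mu_i+\mu_{c_{i,t}})/|\mu_i-\mu_{c_{i,t}}|$ and $(1-\mu_a)^2/|\mu_i-\mu_{c_{i,t}}|$ separately against $\Delta_i$. You instead work from the exact identity $(1-\mu_a)\bar{\mu}_{a,t}=(1-\mu_a)\mu_a+(1-\mu_a)^2(1-P_a)$ and extract the sharp lower bound $P_a \ge H(1-\mu_{a'}(1-\kappa_1))$ from Lemma~\ref{MuCompLem} together with $\Err_{i,c,t}\ge(1-\kappa_1)/\kappa_1$ and $\kappa_1\ge 1/2$; this replaces the paper's additive $\mu_{a'}$ slack by the smaller correction $\tfrac12 H\mu_{a'}(1-\mu_a)^2$ and is, if anything, tighter. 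Your handling of the comparator term is also the more careful one: the paper's final claim asserts $\max\{(1-\mu_i)^2,(1-\mu_{c_{i,t}})^2\}/|\mu_i-\mu_{c_{i,t}}|\le(1-\mu_{k+1})^2/\Delta_i$ for $i\le k$, which fails when $1-\mu_{c_{i,t}}$ is much larger than $1-\mu_{k+1}$ (e.g.\ $\mu_i=0.9$, $\mu_{k+1}=0.8$, $\mu_{c_{i,t}}=0$ gives $1/0.9$ versus $0.04/0.1$), whereas you correctly insist on keeping the full ratio $(\mu_i-\mu_{c_{i,t}})/(\mu_i-\mu_{k+1})$ to absorb the large $(1-\mu_{c_{i,t}})^2$ factor. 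The one step you defer does go through, and more easily than you anticipate: writing $c:=c_{i,t}$ and $\beta:=(1-H)+\tfrac12 H\mu_i\le 1$, your estimate gives $(1-\mu_c)\bar{\mu}_{c,t}\le(1-\mu_c)\bigl[\mu_c+(1-\mu_c)\beta\bigr]\le(1-\mu_c)\bigl[\mu_{k+1}+(1-\mu_{k+1})\beta\bigr]$ since $x\mapsto x+(1-x)\beta$ is nondecreasing for $\beta\le 1$; combining with $(1-\mu_c)/(\mu_i-\mu_c)\le(1-\mu_{k+1})/(\mu_i-\mu_{k+1})$ (the left side is increasing in $\mu_c$ because $\mu_i\le 1$), the target reduces to $(1-\mu_{k+1})\bigl[\mu_{k+1}+(1-\mu_{k+1})\beta\bigr]\le 2(1-\mu_{k+1})\mu_i+(1-\mu_{k+1})^2(1-H)$, i.e.\ to $\mu_{k+1}+\tfrac12 H\mu_i(1-\mu_{k+1})\le 2\mu_i$, which is immediate from $\mu_{k+1}<\mu_i$. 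So your route not only works but repairs the weak point in the paper's own argument.
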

		Combining Corollary~\ref{BanditGaps}, Lemma~\ref{VarLem1} and~\ref{VarLem2}, establishes Proposition~\ref{FinalBanditGapProp}

	\subsubsection{Proof of Lemma~\ref{VarLem2}}

			We start out with a simple upper bound on $\bar{\mu}_i$ and $\bar{\mu}_{c_{i,t}}$:
			\begin{lem}\label{VarLem3}
			\begin{eqnarray}
			\bar{\mu}_{i,t} &\le& \mu_i + \mu_{c_{i,t}} + (1-\mu_i)(1 - H^B_{i,c_{i,t},t})
			\end{eqnarray}
			and similarly when we swap $i$ and $c_{i,t}$
			\end{lem}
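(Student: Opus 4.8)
The plan is to start from the exact expression for $1 - \bar{\mu}_{i,t}$ furnished by (the proof of) Lemma~\ref{MuCompLem} and reduce the claimed upper bound to a single scalar inequality. Writing $j := c_{i,t}$ and abbreviating $a := \Expt{\calL(\Siorj)}$, $b := \Expt{\calL(\Siandj)}$, and $c := \Expt{\calL(\Splus)}$, the computation in the proof of Lemma~\ref{MuCompLem} gives
\begin{equation*}
1 - \bar{\mu}_{i,t} = (1-\mu_i)\,c\,\big[\kappa_1 a + (1-\kappa_1)(1-\mu_j)\,b\big], \qquad H^B_{i,j,t} = ac .
\end{equation*}
Since $\mu_i + (1-\mu_i) = 1$, the target bound $\bar{\mu}_{i,t} \le \mu_i + \mu_j + (1-\mu_i)(1 - H^B_{i,j,t})$ is equivalent to the lower bound $1 - \bar{\mu}_{i,t} \ge (1-\mu_i)\,H^B_{i,j,t} - \mu_j$, so everything will come down to lower-bounding the displayed expression for $1-\bar\mu_{i,t}$.

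Substituting the two displayed expressions into this equivalent form and rearranging, I would check that the claim reduces to the purely scalar inequality
\begin{equation*}
(1-\mu_i)\,c\,(1-\kappa_1)\,\big[\,a - (1-\mu_j)\,b\,\big] \le \mu_j .
\end{equation*}
The one structural input this requires is the monotonicity $a \le b$, i.e. $\Expt{\calL(\Siorj)} \le \Expt{\calL(\Siandj)}$. I would establish it by the standard coupling in which a uniform $(k^{(1)}-1)$-subset of $U_t' - \{i,j\}$ is generated from a uniform $(k^{(1)}-2)$-subset by adjoining one further uniformly chosen arm $Z$; then $\calL(\Siorj) = \calL(\Siandj)\,\I(X_Z = 0) \le \calL(\Siandj)$ holds pointwise, and taking $\Expt{\cdot}$ yields $a \le b$.

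Given $a \le b$, I would finish by observing that $a - (1-\mu_j)b \le b - (1-\mu_j)b = \mu_j b \le \mu_j$ whenever the bracket is nonnegative (and the displayed inequality is trivial otherwise, its left-hand side being negative), while each of $(1-\mu_i)$, $c$, and $(1-\kappa_1)$ lies in $[0,1]$; multiplying these factors then gives the scalar inequality and hence the lemma. The swapped statement (with $i$ and $c_{i,t}$ interchanged) follows verbatim, since both $H^B_{i,j,t}$ and the decomposition above are symmetric in $i$ and $j$. The only genuinely non-mechanical step is the monotonicity $a \le b$, so I expect the coupling argument to be the crux; everything else is bookkeeping once the expression of Lemma~\ref{MuCompLem} is in hand, with the minor caveat that one should confirm $\Siorj$ is well defined, an edge case already noted in the preliminaries.
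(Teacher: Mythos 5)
Your proof is correct, but it takes a different route from the paper's. The paper proves Lemma~\ref{VarLem3} by working with the win function $\calW = 1-\calL$ and applying the union bound twice: first peeling off arm $i$ to extract the $\mu_i$ term, then peeling off $c_{i,t}$ to extract $\mu_{c_{i,t}}$, and finally bounding the residual $\kappa_1 \Expt{\calW(\Siorc)} + (1-\kappa_1)\Expt{\calW(\Siandc)}$ by $\Expt{\calW(\Siorc \cup \Splus)} = 1 - H^B_{i,c_{i,t},t}$ using the fact that the larger random set $\Siorc$ is at least as likely to contain a winner as $\Siandc$. You instead start from the exact identity $1-\bar{\mu}_{i,t} = (1-\mu_i)\,c\,[\kappa_1 a + (1-\kappa_1)(1-\mu_j)b]$ already derived in Lemma~\ref{MuCompLem}, and reduce the lemma to the scalar inequality $(1-\mu_i)c(1-\kappa_1)[a-(1-\mu_j)b] \le \mu_j$, which you settle with the same monotonicity fact $a \le b$ (your coupling argument for it is the standard one and is sound). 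So the two proofs share their one substantive ingredient --- the monotonicity of the loss probability in the size of the uniformly drawn set --- but differ in packaging: the paper's union-bound decomposition is more modular and avoids invoking Lemma~\ref{MuCompLem}, while your version is exact up to the final step and makes visible precisely where the slack $\mu_j b \le \mu_j$ and the discarded factors $(1-\mu_i)c(1-\kappa_1) \le 1$ enter. Your handling of the edge cases (negativity of the bracket, well-definedness of $\Siorj$, and the symmetry needed for the swapped statement) is also correct.
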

			\begin{proof}[Proof of Lemma~\ref{VarLem3}]
				Let $c = c_{i,t}$. For $S' \in 2^{[n]}$, define the ``win'' function $\calW(S'): 1 - \calL(S')$ which takes a value of $1$ if $\exists \ell \in S': X_{\ell} = 1$. By a union bound, $\Exp[\calW(S' \cup S'')] \le \Exp[\calW(S')] + \Exp[\calW(S'')]$, even when $S'$ and $S''$ are dependent. Hence,
				\begin{eqnarray}
				\bar{\mu}_{i,t} &=& \Expt{\calW(S \cup \tilde{S}) \big{|} i \in S}\\
				&=& \Expt{\I(X_{i} = 1) \calW(S \cup \tilde{S}) \big{|} i \in S} + \Expt{\I(X_{i} \ne 1) \calW(S \cup \tilde{S}) \big{|} i \in S}  \\
				&\le& \mu_i + (1-\mu_i)\Expt{\calW(S - \{i\} \cup \tilde{S}) \big{|} i \in S}  
				\end{eqnarray}
				Now, using the union bound property of $\calW$, we have
				\begin{eqnarray}
				\Expt{\calW(S - \{i\} \cup \tilde{S}) \big{|} i \in S} \le \mu_{c} + \Expt{\calW(S - \{i\} - \{c\} \cup \tilde{S}) \big{|} i \in S}
				\end{eqnarray}
				Finally, by decomposing into the cases when $c \in S$ and $c \notin S$, we 
				\begin{eqnarray}
				\Expt{\calW(S - \{i\} - \{c\} \cup \tilde{S}) \big{|} i \in S} = \kappa_1 \Exp_t{\calW(\Siorc)} + (1-\kappa_1)\Expt{\Siandc}
				\end{eqnarray}
				Observe that $\Siandc \sim \Unif[U',k^{(1)} - 2]$, whereas $\Siorc \sim \Unif[U',k^{(1)} - 1]$; consequently, playing $\Siorc$ has a greater chance of yielding a win than $\Siandc$. Thus, we can bound 
				\begin{eqnarray}
				\Expt{\calW(S - \{i\} - \{c\} \cup \tilde{S}) \big{|} i \in S} \le \Expt{\calW(\Siorj)} = 1 - H_{i,c,t}^B
				\end{eqnarray}
			\end{proof}

			Now, Lemma~\ref{VarLem2} follows from the following claim, together with the expression for the gap $\Delta_{i,t}$ from Corollary~\ref{BanditGaps}:
			\begin{claim}
			\begin{eqnarray}
			\frac{\max \{(1-\mu_i) (\mu_i + \mu_{c_{i,t}}) ,(1-\mu_{c_{i,t}})(\mu_i + \mu_{c_{i,t}})\}}{|\mu_i - \mu_{c_{i,t}}|} \le \frac{2}{\Delta_i} \cdot \begin{cases} 
			 (1-\mu_{k+1})\mu_i & i \le k \\
			(1-\mu_i)\mu_k  & i > k
			\end{cases}
			\end{eqnarray}
			and 
			\begin{eqnarray}
			\frac{\max \{(1-\mu_i)^2,(1-\mu_{c_{i,t}})^2\}}{|\mu_i - \mu_{c_{i,t}}|} \le \frac{1}{\Delta_i} \begin{cases} (1 - \mu_{k+1})^2 & i \le k \\
			(1 - \mu_{i})^2 & i > k 
			\end{cases}
			\end{eqnarray}
			\end{claim}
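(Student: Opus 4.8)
The plan is to treat both bounds as elementary single-variable optimization problems in the quantity $\mu_c := \mu_{c_{i,t}}$, after recording two structural facts that follow from the ordering $\mu_1 \ge \dots \ge \mu_n$ and the definition of the comparator. First, the sign of $\mu_i - \mu_c$ is pinned down by the case: when $i \le k$ the comparator has index $c > k$, so $\mu_c \le \mu_{k+1} < \mu_i$, whereas when $i > k$ we have $c \le k$ and $\mu_c \ge \mu_k > \mu_i$. This resolves every $\max\{\cdot\}$ in the two numerators, since $\max\{1-\mu_i,\,1-\mu_c\} = 1 - \min\{\mu_i,\mu_c\}$ equals $1-\mu_c$ when $i \le k$ and $1-\mu_i$ when $i > k$. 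Second, in both cases $|\mu_i - \mu_c| \ge \Delta_i$: for $i \le k$, $\mu_i - \mu_c \ge \mu_i - \mu_{k+1} = \Delta_i$, and for $i > k$, $\mu_c - \mu_i \ge \mu_k - \mu_i = \Delta_i$.

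For the first inequality I would reduce to a monotonicity check. In the case $i \le k$, after resolving the $\max$, the left side is $(\mu_i + \mu_c)(1-\mu_c)/(\mu_i - \mu_c)$; I would show it is increasing in $\mu_c$ on $[0,\mu_i)$ by verifying that the numerator of its derivative equals $(\mu_c - \mu_i)^2 + 2\mu_i(1-\mu_i) \ge 0$. Hence the worst case over the admissible range $\mu_c \le \mu_{k+1}$ occurs at $\mu_c = \mu_{k+1}$, and the claimed bound then follows from $\mu_i + \mu_{k+1} \le 2\mu_i$. The case $i > k$ is symmetric: the relevant function $(\mu_i + \mu_c)/(\mu_c - \mu_i)$ is decreasing in $\mu_c$, so the worst case over $\mu_c \ge \mu_k$ is at $\mu_c = \mu_k$, finished off by $\mu_i + \mu_k \le 2\mu_k$.

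For the second inequality, the case $i > k$ is immediate: the left side is $(1-\mu_i)^2/(\mu_c - \mu_i)$, and since $\mu_c \ge \mu_k$ we have $\mu_c - \mu_i \ge \mu_k - \mu_i = \Delta_i$, which gives the bound at once. The case $i \le k$ is the crux and is genuinely more delicate. There the left side is $h(\mu_c) := (1-\mu_c)^2/(\mu_i - \mu_c)$, and unlike the rational functions above, $h$ is \emph{not} monotone: the numerator of $h'$ has the sign of $(1-\mu_c)(1 + \mu_c - 2\mu_i)$, so $h$ has an interior minimum at $\mu_c = 2\mu_i - 1$ whenever $\mu_i > 1/2$. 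Consequently the maximum of $h$ over $[0,\mu_{k+1}]$ is attained at an endpoint, so I would reduce to comparing $h(0) = 1/\mu_i$ and $h(\mu_{k+1}) = (1-\mu_{k+1})^2/\Delta_i$ against the target.

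This endpoint comparison is the main obstacle. The target $(1-\mu_{k+1})^2/\Delta_i$ corresponds exactly to $h$ evaluated at the largest admissible value $\mu_c = \mu_{k+1}$; the difficulty is that $h(0)$ can exceed $h(\mu_{k+1})$ when $\mu_i$ is near $1$ and $\mu_{k+1}$ is small (the crossover condition being $\mu_i(2-\mu_{k+1}) \le 1$). I would therefore treat this case carefully, noting that when $\mu_c$ is small the separation $|\mu_i - \mu_c|$ is large, so that the stagewise gap $\Delta_{i,t} = \kappa_1 H^{B}_{i,c_{i,t},t}|\mu_i - \mu_c|$ from Corollary~\ref{BanditGaps} is correspondingly large; in that regime even the crude bound $h(\mu_c) \le 1/\Delta_i$ (from $(1-\mu_c)^2 \le 1$ together with $|\mu_i - \mu_c| \ge \Delta_i$) controls $\tau_{i,t}$ up to the constant factors absorbed by Theorem~\ref{BanditFinalUpperTheorem}. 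Obtaining the clean stated form thus comes down to verifying the endpoint inequality $\mu_i(2-\mu_{k+1}) \le 1$ on the range of interest, which is precisely where I expect the argument to need the most care.
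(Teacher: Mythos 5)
Your treatment of the first inequality (both cases) and of the second inequality for $i > k$ is correct, and it is essentially an expanded version of the paper's own argument: the paper only writes out the first inequality for $i > k$, via the rewriting $\frac{2(1-\mu_i)\mu_{c_{i,t}}}{\mu_{c_{i,t}}-\mu_i} = \frac{2(1-\mu_i)}{1-\mu_i/\mu_{c_{i,t}}}$ and monotonicity in $\mu_{c_{i,t}}$, and then asserts that ``the rest follows from similar arguments.'' Your derivative computations (the numerator $(\mu_c-\mu_i)^2 + 2\mu_i(1-\mu_i)$ for the case $i\le k$, and the decreasing map $\mu_c \mapsto (\mu_i+\mu_c)/(\mu_c-\mu_i)$ for $i>k$) are accurate and land on the same endpoint evaluations, so three of the four cases are fully established.

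The remaining case --- the second inequality with $i \le k$ --- is where your proof is incomplete, and you have diagnosed the obstruction exactly: $h(\mu_c) = (1-\mu_c)^2/(\mu_i-\mu_c)$ is not monotone on $[0,\mu_{k+1}]$, its maximum is attained at an endpoint, and $h(0) = 1/\mu_i$ exceeds the target $h(\mu_{k+1}) = (1-\mu_{k+1})^2/\Delta_i$ precisely when $\mu_i(2-\mu_{k+1}) > 1$. This is not a removable difficulty: taking $\mu_i = 0.9$, $\mu_{k+1} = 0.5$ and $\mu_{c_{i,t}}$ near $0$ gives a left-hand side of about $1/0.9 \approx 1.11$ against a right-hand side of $0.25/0.4 = 0.625$, and the configuration $\mu_{c_{i,t}} \ll \mu_{k+1}$ is realizable whenever arm $k+1$ has already been rejected while a lower-mean arm remains undecided --- nothing in the good event $\mathcal{E}$ rules this out. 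So the ``similar argument'' the paper invokes does not apply here, and the claim as written is false without an extra hypothesis such as $\mu_{c_{i,t}} = \mu_{k+1}$ (where the bound is tight) or $\mu_i(2-\mu_{k+1}) \le 1$. Your proposed repair via $h(\mu_c) \le 1/\Delta_i$ is sound but proves a strictly weaker statement: it drops the $(1-\mu_{k+1})^2$ factor, which then propagates through Lemma~\ref{VarLem2} and Proposition~\ref{FinalBanditGapProp} into a larger constant on the $(1-H^{B})$ term of Theorem~\ref{BanditFinalUpperTheorem}. In short, your proposal does not prove the statement as given, but the missing step is missing because the statement itself fails in this corner case, which the paper's one-line proof does not address.
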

			\begin{proof} Suppose first that $i > k$, so that $(1-\mu_{c_{i,t}})(\mu_i + \mu_{c_{i,t}}) \le (1-\mu_i) (\mu_i + \mu_{c_{i,t}}) \le  2(1-\mu_i)\mu_{c_{i,t}}$. Then,
			\begin{eqnarray}
			\frac{2(1-\mu_i)\mu_{c_{i,t}}}{|\mu_{c_{i,t}} - \mu_i|} &=& \frac{2(1-\mu_i)\mu_{c_{i,t}}}{\mu_{c_{i,t}} - \mu_i} \\
			&=& \frac{2(1-\mu_i)}{1 - \mu_i/\mu_{c_{i,t}}} \\
			&\le& \frac{2(1-\mu_i)}{1 - \mu_i/\mu_{k}} \\
			&\le& \frac{2(1-\mu_i)\mu_k}{\mu_k - \mu_i} \\
			&\le& \frac{2(1-\mu_i)\mu_k}{\Delta_i}
			\end{eqnarray}
			The rest follows from similar arguments.
			\end{proof}

	\subsubsection{Proof of Lemma~\ref{VarLem1}}

		Lemma~\ref{VarLem1} follows from the expression for the gaps in Corollary~\ref{BanditGaps}, and the following technical lemma:
		\begin{lem} 
		Fix $i \in U'$, and let $c \in U' \cap [k]$ if $i > k$ and $c \in U' - [k]$. Then, 
		\begin{eqnarray}
		\frac{1 - \bar{\mu}_{i,t}}{|\mu_i - \mu_c|} \le \frac{1 - \mu_i}{\Delta_i} \cdot \kappa_1(1 + 2\kappa_2)H_{i,j,c}
		\end{eqnarray}
		and
		\begin{eqnarray}
		\frac{1 - \bar{\mu}_{c,t}}{|\mu_i - \mu_c|} \le \frac{1 - \mu_c}{\Delta_i} \cdot \kappa_1(1 + 2\kappa_2)H_{i,j,c}
		\end{eqnarray}
		\end{lem}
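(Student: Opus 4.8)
The plan is to reduce both inequalities to a single estimate on the error term $\Err_{i,c,t}$ of Lemma~\ref{MuCompLem}, and then to bound that error term by a coupling between $\Siandc$ and $\Siorc$ together with the extremal structure of the comparator $c$. First I would apply Lemma~\ref{MuCompLem} to the pair $(i,c)$, writing $1-\bar{\mu}_{i,t} = (1-\mu_i)\kappa_1 H^{B}_{i,c,t}\bigl(1+(1-\mu_c)\Err_{i,c,t}\bigr)$ and, using the symmetry of $H^{B}_{i,c,t}$ and $\Err_{i,c,t}$ in their first two arguments, $1-\bar{\mu}_{c,t} = (1-\mu_c)\kappa_1 H^{B}_{i,c,t}\bigl(1+(1-\mu_i)\Err_{i,c,t}\bigr)$. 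Since the prefactors $(1-\mu_i)\kappa_1 H^{B}_{i,c,t}$ and $(1-\mu_c)\kappa_1 H^{B}_{i,c,t}$ are exactly those appearing on the right-hand sides of the two claimed inequalities, after cancelling them it suffices to prove the scalar bounds $1+(1-\mu_c)\Err_{i,c,t}\le (1+2\kappa_2)\,|\mu_i-\mu_c|/\Delta_i$ and the analogous statement with $(1-\mu_c)$ replaced by $(1-\mu_i)$.

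Next I would invoke the extremality of the comparator: because $c=c_{i,t}$ is the largest-mean arm outside the top $k$ (when $i\le k$) or the smallest-mean arm inside the top $k$ (when $i>k$), one always has $|\mu_i-\mu_c|\ge\Delta_i$, so $|\mu_i-\mu_c|/\Delta_i\ge 1$. Hence it is enough to establish the two bracket bounds $(1-\mu_c)\Err_{i,c,t}\le 2\kappa_2$ and $(1-\mu_i)\Err_{i,c,t}\le 2\kappa_2$.

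The core step is a coupling bound on $\Err_{i,c,t}$. Recalling $\Err_{i,c,t}=\tfrac{1-\kappa_1}{\kappa_1}\,\Expt{\calL(\Siandc)}/\Expt{\calL(\Siorc)}$ and generating $\Siorc$ from $\Siandc$ by inserting one extra arm $m$ drawn uniformly from $U_t'-\{i,c\}-\Siandc$, multiplicativity of $\calL$ gives $\calL(\Siorc)=\calL(\Siandc)(1-\mu_m)$, so $\Expt{\calL(\Siorc)}\ge \Expt{\calL(\Siandc)}\cdot \min_{A}\tfrac{1}{|U_t'|-k^{(1)}}\sum_{m\notin A}(1-\mu_m)$, where the minimum runs over $(k^{(1)}-2)$-subsets $A$ and therefore picks out the average of the $|U_t'|-k^{(1)}$ smallest values $1-\mu_m$. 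To lower bound this average I would count the arms with $\mu_m\le\mu_c$ (equivalently $1-\mu_m\ge 1-\mu_c$): extremality of $c$ makes every low-mean arm qualify, and Claim~\ref{BanditBalanceClaim} (the balancing set, together with $k^{(1)}\le k$) guarantees that at least $|U_t'|-2k^{(1)}$ of them survive the removal of $A$. This yields $\sum_{m\notin A}(1-\mu_m)\ge (1-\mu_c)(|U_t'|-2k^{(1)})$, hence $\Expt{\calL(\Siandc)}/\Expt{\calL(\Siorc)}\le (|U_t'|-k^{(1)})/\bigl((1-\mu_c)(|U_t'|-2k^{(1)})\bigr)$; multiplying by $(1-\mu_c)\tfrac{1-\kappa_1}{\kappa_1}=(1-\mu_c)\tfrac{k^{(1)}-1}{|U_t'|-k^{(1)}}$ collapses the expression to exactly $\kappa_2$, giving $(1-\mu_c)\Err_{i,c,t}\le \kappa_2\le 2\kappa_2$.

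The main obstacle is the companion bound $(1-\mu_i)\Err_{i,c,t}\le 2\kappa_2$ in the regime $i>k$, where $1-\mu_i$ is large: here the counting above cannot be repeated verbatim, since there may be very few arms with mean at most $\mu_i$, and the estimate must instead be charged to the supply of low-mean arms that Algorithm~\ref{Balance} forces into each query. It is precisely to cover this comparator-side case that the statement carries the factor $2$ in $1+2\kappa_2$ rather than $1$, and that the balancing set is tuned to keep $\kappa_2\le 2$ while maintaining a constant fraction of low-mean arms. I expect this asymmetric estimate to be the delicate part of the argument, with the remainder reducing to the bookkeeping described above.
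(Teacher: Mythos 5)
Your opening move (Lemma~\ref{MuCompLem} plus the symmetry of $H^B_{i,c,t}$ and $\Err_{i,c,t}$) matches the paper, but the next step introduces a genuine gap. By replacing $|\mu_i-\mu_c|/\Delta_i$ with $1$ you reduce the lemma to the bracket bounds $(1-\mu_c)\Err_{i,c,t}\le 2\kappa_2$ and $(1-\mu_i)\Err_{i,c,t}\le 2\kappa_2$, and these are not provable --- in fact they can fail. Your counting argument shows that at least $|U'|-2k^{(1)}$ arms of $U'-\{i,c\}-\Siandc$ lie \emph{outside} $[k]$ (this is what $B\cap[k]=\emptyset$ buys), and such arms satisfy $1-\mu_m\ge 1-\mu_{k+1}$; that is exactly the paper's Lemma~\ref{kappaClaim}, which requires $j\in[k+1]$ and yields $(1-\mu_j)\Err_{i,c,t}\le\kappa_2$ only for such $j$. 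It does \emph{not} give $(1-\mu_c)\Err_{i,c,t}\le\kappa_2$ when $i\le k$ and $c=c_{i,t}>k+1$: the comparator is only extremal \emph{within $U_t$}, so if $k+1$ has already been rejected then $\mu_c$ can be far below $\mu_{k+1}$, while rejected arms with means in $(\mu_c,\mu_{k+1}]$ can re-enter $U'$ through the balancing set. The best the counting gives is $(1-\mu_c)\Err_{i,c,t}\le\frac{1-\mu_c}{1-\mu_{k+1}}\kappa_2$, and the ratio $\frac{1-\mu_c}{1-\mu_{k+1}}$ is unbounded. The paper's proof survives precisely because it does \emph{not} discard the factor $|\mu_i-\mu_c|/\Delta_i$: it splits on whether $1-\mu_c\le 2(1-\mu_i)$ (in which case $1-\mu_c\le 2(1-\mu_{k+1})$ and Lemma~\ref{kappaClaim} with $j=k+1$ applies), and otherwise uses $|\mu_i-\mu_c|=(1-\mu_c)-(1-\mu_i)\ge\frac{1}{2}(1-\mu_c)$ so that the large gap absorbs the large $1-\mu_c$. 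Your reduction throws away exactly this leverage.

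You have also inverted which case is delicate. For $i>k$ the comparator satisfies $c\in[k]\subset[k+1]$, so Lemma~\ref{kappaClaim} applies directly with $j=c$ to control $(1-\mu_c)\Err_{i,c,t}$, and the paper handles the second inequality simply via the monotonicity $1-\bar{\mu}_{c,t}\le 1-\bar{\mu}_{i,t}$ (accepting a $1-\mu_i$ rather than $1-\mu_c$ prefactor); no bound on $(1-\mu_i)\Err_{i,c,t}$ for small-mean $i>k$ is ever needed. The branch you defer as ``the delicate part'' is therefore dispatched in two lines, while the branch you describe as collapsing ``to exactly $\kappa_2$'' ($i\le k$, $c$ deep in the tail) is where the real work --- and the factor $2$ in $1+2\kappa_2$ --- actually lives. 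As written, the proposal does not prove the lemma.
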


		\begin{proof}
			By Lemma~\ref{MuCompLem},
			\begin{eqnarray}
			1 - \bar{\mu}_{i,t} &=&  (1-\mu_i)\kappa_1 H_{i,c,t} \left( 1 +  (1-\mu_c)\Err_{i,c,t} \right). \label{c3_0}
			\end{eqnarray}
			The following lemma, proved later, controls the term on $\Err_{i,c,t}$.
			\begin{lem}\label{kappaClaim}
				Suppose that $j \in [k+1]$ , and that the balancing set $B$ satisfies $B \cap [k] = \emptyset$. Then, for any $i\ne c \in U$ (where possibly $j \ne c$), we have
				\begin{eqnarray}
				(1-\mu_j)\Err_{i,c,t} &\le& \kappa_2.
				\end{eqnarray}
			\end{lem}

				When $i > k$,  $c \in [k]$ and $1-\bar{\mu}_{c,t} \le 1 - \bar{\mu}_{i,t}$ so that
				\begin{eqnarray}
				 \frac{1- \bar{\mu}_{c,t}}{|\bar{\mu}_{i,t} - \bar{\mu}_{c,t}|} &\le& \frac{1- \bar{\mu}_{i,t}}{|\bar{\mu}_{i,t} - \bar{\mu}_{c,t}|}\\
				  &\le& \frac{(1-\mu_i)\kappa_1 H_{i,c,t} (1+\kappa_2)}{|\bar{\mu}_{i,t} - \bar{\mu}_{c,t}|} \label{c3_2}\\
				  &\le& \frac{(1-\mu_i) (1+\kappa_2)}{|\mu_i - \mu_c|} \label{c3_3} \\
				  &\le& \frac{(1-\mu_i) (1+\kappa_2)}{\Delta_i} \label{c3_4}
				\end{eqnarray}
				where \eqref{c3_2} follows from combining \eqref{c3_0} and Lemma~\ref{kappaClaim}, \eqref{c3_3} follows from Corollary~\ref{BanditGaps}, and \eqref{c3_4} holds by $|\mu_i - \mu_c| \ge \max\{\Delta_i,\Delta_c\}$. 
				Moreover, swapping the roles of $c$ and $i$, we have that when $i \le k$,
				\begin{eqnarray}
				\frac{1- \bar{\mu}_{c,t}}{|\bar{\mu}_{i,t} - \bar{\mu}_{c,t}|} &\le& \frac{(1-\mu_c)\kappa_1 H_{i,c,t} (1+\kappa_2)}{|\bar{\mu}_{i,t} - \bar{\mu}_{c,t}|} \\
				&\le& \frac{(1-\mu_c) (1+\kappa_2)}{\Delta_i}.
				\end{eqnarray}
				The final case we need to deal with is the computation of $\frac{1- \bar{\mu}_{i,t}}{|\bar{\mu}_{i,t} - \bar{\mu}_{c,t}|}$ when $i \le k$. The problem is that it might be the case that $c > k+1$, impeding the application of Lemma~\ref{kappaClaim}. We get around this issue by breaking up into cases:

				(1) If $1 - \mu_{c}$ and $1 - \mu_i$ are on the same order, we are not in so much trouble. Indeed, if $1-\mu_{c} \le 2(1-\mu_i)$, then, we have
				\begin{eqnarray*}
				1- \bar{\mu}_{i,t}& =&(1-\mu_i)H_{i,c,t}\left(1 + (1-\mu_c)\Err_{i,c,t} \right) \\
				 &\le& (1-\mu_i)H_{i,c,t}\left(1 + 2(1- \mu_{k+1})\Err_{i,c,t}\right) \\
				&\le& (1-\mu_i)H_{i,c,t}\left(1 + 2\kappa_2  \right) 
				\end{eqnarray*}
				where the last step follows from  applying Lemma~\ref{kappaClaim} with $j = k+1$.

				(2) What happens when $1- \mu_{c} > 2(1- \mu_{i})$? Then we have
				\begin{eqnarray*}
				(\mu_i - \mu_{c})^{-1} (1-\mu_{c}) &=& \frac{1- \mu_{c}}{\Delta_i} \cdot \frac{\Delta_i}{\mu_i - \mu_{c}}  \\
				&=& \frac{1 - \mu_{k+1}}{\Delta_i} \cdot \frac{\Delta_i}{\mu_i - \mu_{c}} \cdot \frac{1 - \mu_{c}}{1 - \mu_{k+1}}
				\end{eqnarray*}
			More suggestively, we can write the above as
			\begin{eqnarray} \label{suggestExp}
			 \frac{1 - \mu_{k+1}}{\Delta_i} \cdot \frac{(1-\mu_{k+1}) - (1-\mu_i)}{(1-\mu_{c}) - (1-\mu_i)} \cdot \frac{1-\mu_{c}}{1-\mu_{k+1}}
			\end{eqnarray}
			As soon as $(1- \mu_{c}) > 2(1- \mu_{i})$,  Equation~\ref{suggestExp} is bounded by 
			\begin{eqnarray*}
			(\mu_i - \mu_{c})^{-1} (1-\mu_{c}) = \frac{1 - \mu_{k+1}}{\Delta_i} \cdot \frac{(1-\mu_{k+1}) - (1-\mu_i)}{\frac{1}{2}(1-\mu_{c})} \cdot \frac{1-\mu_{c}}{1-\mu_{k+1}} &=& \frac{2 ((1-\mu_{k+1}) - (1-\mu_i))}{\Delta_i} \\ 
			&\le& 2\frac{1 - \mu_{k+1}}{\Delta_i}
			\end{eqnarray*}
			Hence, 
			\begin{eqnarray*}
			\frac{1}{\mu_i - \mu_{c}} \cdot H_{i,c,t}\left( 1 + (1-\mu_{c})\Err_{i,c,t}  \right) &=& \frac{H_{i,c,t}}{\mu_i - \mu_{c}} + \frac{1-\mu_c}{\mu_i - \mu_c} \cdot \Err_{i,c,t} H_{i,c,t}   \\
			 &\le& \frac{H_{i,c,t}}{\Delta_i} + \frac{2H_{i,c,t}}{\Delta_i}\left( (1-\mu_{k+1})\Err_{i,c,t}  \right) \\
			  &=& \frac{H_{i,c,t}}{\Delta_i}\left( 1 + 2(1-\mu_{k+1})\Err_{i,c,t}  \right)\\
			  &\le& \frac{H_{i,c,t}}{\Delta_i}\left( 1 + 2\kappa_2  \right)
			\end{eqnarray*}
			where the last line follows from Lemma~\ref{kappaClaim} with $j = k+1$.
		\end{proof}

		\begin{proof}[Proof of Lemma~\ref{kappaClaim}]

			$\Siorc$ has the same distribution $\Siandc \cup y$, where $y \sim \Unif[U' - \Siandc - \{i,c\},1]$. If $Y \sim \text{Bernoulli}(\mu_y)$ then
			\begin{eqnarray}
			\Expt{\calL(\Siorc)}= \Exp{(1-Y)\calL(\Siandc)} = \Expt{\Expt{1-Y \big{|} \Siandc }\cdot\calL(\Siandc)}
			\end{eqnarray}
			Since $j \in [k+1]$, $\mu_j \ge \mu_{\ell}$ for all $\ell \notin [k]$, and thus $(1-\mu_{\ell}) \ge (1-\mu_j)$ for all $\ell \notin [k]$. It thus follows that
			\begin{eqnarray*}
			\Expt{1-Y \big{|} \Siandc} &=& \frac{1}{|U' - \{i,c\} - \Siandc|} \sum_{\ell \in U' - \{i,c\} - \Siandc} (1 - \mu_{\ell}) \\
			&\ge& \frac{1}{|U' - \{i,c\} - \Siandc|}\sum_{\ell \in U' - \{i,c\} - \Siandc - [k]} (1 - \mu_{\ell})\\
			&\ge& \frac{1}{|U' - \{i,c\} - \Siandc|} \sum_{\ell \in U' - \{i,c\} - \Siandc - [k]} (1 - \mu_j)\\
			&=& \frac{|U' - \{i,c\} - \Siandc - [k]|}{|U' - \{i,c\} - \Siandc|} (1 - \mu_j)
			\end{eqnarray*}
			If $r \in [k] \cap U' = U \cup B$, then we must have $r \in U$, since $B \cap [k] = \emptyset$ by assumption. This implies that $|U' - \{i,c\} - \Siandc - [k]| \ge |U' - \{i,c\} - \Siandc| - \min\{k,|U|\}  $. Using the fact that $|U' - \{i,c\} - \Siandc| = |U'| - k^{(1)}$, and that $k^{(1)} = \min\{k,|U|\}$, we conclude that 
			\begin{eqnarray}
			\Expt{1-Y \big{|} \Siandc} &\ge& (1-\mu_j)\frac{|U'| - k^{(1)} - \min \{k, |U|\}}{|U'| - k^{(1)}}  \\
			&=& (1-\mu_j)\frac{|U'| - 2 k^{(1)} }{|U'| - k^{(1)}} 
			\end{eqnarray}
			Thus, this entails that $ \Exp[\calL(\Siorc)] \ge (1-\mu_j)\frac{|U'| - 2k^{(1)}}{|U'| - k^{(1)}}\Exp[\calL(\Siandc)]$, and hence
			\begin{eqnarray}
			(1-\mu_j)\Err_{i,c,t} &=& \frac{k^{(1)} - 1}{|U'| - k^{(1)}} \cdot \frac{(1-\mu_j)\Exp{\calL(\Siandc)}}{\Exp{\calL(\Siorc) }}  \\
			&\le&  \frac{k^{(1)} - 1}{|U'| - k^{(1)}} \cdot \frac{|U'| - k^{(1)}}{|U'| - 2k^{(1)}}\\
			&=&  \frac{k^{(1)} - 1}{|U'| - 2k^{(1)}} := \kappa_2
			\end{eqnarray}
			as needed.
		\end{proof}

	\subsubsection{Controlling $\kappa_1$ and $\kappa_2$}
		\begin{proof}[Proof of Claim~\ref{BanditBalanceClaim}]
			For ease of notation, drop the dependence on the round $t$ and the definitions $\kappa_1 =  1 - \frac{k^{(1)} - 1}{|U'| - 1}$ and $\kappa_2 = \frac{k^{(1)} - 1}{|U'| - 2k^{(1)}}$. Noting that $|U'| = |B| + |U|$, we see that if $\kappa_1 \ge 1/2$ is desired, we require that
			\begin{eqnarray}
			\kappa_1 \ge 1/2 \iff |U'| - 1 \ge 2(k^{(1)} - 1) \iff |B| \ge 2 k^{(1)} - |U| - 1
			\end{eqnarray}
			Whereas
			\begin{eqnarray}
			\kappa_2 \le 2 \iff 2(|U'| - 2k^{(1)}) \ge k^{(1)} - 1 \iff |B| \ge \frac{5}{2} k^{(1)} - |U| - \frac{1}{2}
			\end{eqnarray}
			Hence $\kappa \le 2 \implies \kappa_1 \le 1/2$, and the above display makes it clear that the choice of $B$ in Algorithm~\ref{Balance} ensures that this holds. To verify the second condition, note that when $|B| = 0$, then $|U'| = |U|$. When $|B| > 0$, we have 
			\begin{eqnarray}
			|B| &=& \ceil{\frac{5k^{(1)}}{2} - |U| - \frac{1}{2}} \le \frac{5k^{(1)}}{2} - |U| 
			\end{eqnarray}
			so that $|U'| = |U| + |B| \le \frac{5}{2} \min\{|U|,k\}$. Finally, in order to always sample a balance set $B \subseteq R$, we need to ensure that at each round, $|R| \ge |B|$. Again, we may assume that $|B| > 0$, so that $|U| + |B| \le \frac{5k}{2}$. Using the facts that  $|R| + |A| + |U| = n$ (every item is rejected, accepted, and undecided) and $|A| \le k - 1$ ($k$ accepts ends the algorithm), we have $|R| \ge n - |U| - (k-1)  \ge n - |U| - (k-1)$.  But $n - |U| - (k-1) \ge |B| \iff n \ge (k-1) + |U'| \ge \frac{7k}{2}$, as needed. 
		\end{proof}

\section{Concentration Proofs for Section~\ref{GeneralAnalySec} }
	\subsection{An Empirical Bernstein}
		The key technical ingredient is an empirical version of Bernstein's inequality, which lets us build variance-adaptive confidence intervals:
			\begin{thm}[Modification of Theorem 11 in \cite{maurer2009empirical} ]\label{EmpiricalBernstein} Let $Z:=(Z_1,\dots,Z_n)$ be a sequence of independent random variables bounded by $[0,1]$. Let $\bar{Z}_n = \frac{1}{n} \sum_{i} Z_i$, $\bar{Z} := \Exp[\bar{Z}_n]$, let $\Var_n[Z]$ denote the empirical variance of $Z$, $\frac{1}{n-1}\sum_{i=1}^n (Z_i^2 - \bar{Z}_n^2)$, and set $\Var[Z] := \Exp[\Var_n[Z]]$. Then, with probability $1-\delta$,
			\begin{eqnarray}
			\left|\bar{Z} - \bar{Z}_n\right| &\le& \sqrt{\frac{2\Var_n[Z] \log(4/\delta)}{n}} +  \frac{8\log(4/\delta)}{3(n-1)} \\
			&\le& \sqrt{\frac{2\Var[Z] \log(4/\delta)}{n}} + \frac{14\log(4/\delta)}{3(n-1)}
			\end{eqnarray}
			\end{thm}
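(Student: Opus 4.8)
The statement splits into two claims joined on a common probability-$(1-\delta)$ event, and I would prove each on its own half of the failure budget: the first inequality is the two-sided empirical Bernstein bound of \cite{maurer2009empirical}, and the second is a variance-replacement step trading the empirical variance $\Var_n[Z]$ for the true variance $\Var[Z] = \Exp[\Var_n[Z]]$ at the cost of enlarging the additive term. The plan is to spend $\delta/2$ on the mean concentration and $\delta/2$ on a concentration inequality for the sample standard deviation, and then intersect the two events.

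For the first inequality I would invoke Theorem~11 of \cite{maurer2009empirical} twice: once on the sequence $(Z_i)$ and once on $(1-Z_i)$, which also takes values in $[0,1]$ and satisfies $\Var_n[1-Z]=\Var_n[Z]$ and $\Exp[\tfrac1n\sum_i(1-Z_i)]=1-\bar Z$. Applying each one-sided bound at a confidence level proportional to $\delta$, chosen so that the logarithmic factor is exactly $\log(4/\delta)$, and taking a union bound over the two tails, yields $|\bar Z-\bar Z_n|\le \sqrt{2\Var_n[Z]\log(4/\delta)/n}+\tfrac{7\log(4/\delta)}{3(n-1)}$ on an event of probability at least $1-\delta/2$; loosening the Maurer--Pontil constant $7/3$ to $8/3$ produces exactly the first displayed line.

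The substance is the second inequality. Here I would use the concentration of the sample standard deviation from \cite{maurer2009empirical}: through a self-bounding argument the map $(z_1,\dots,z_n)\mapsto\sqrt{\Var_n[z]}$ concentrates about its mean with deviation term $\sqrt{2\log(2/\delta)/(n-1)}$, and Jensen's inequality supplies $\Exp[\sqrt{\Var_n[Z]}]\le\sqrt{\Exp[\Var_n[Z]]}=\sqrt{\Var[Z]}$, so that with probability at least $1-\delta/2$ we get $\sqrt{\Var_n[Z]}\le\sqrt{\Var[Z]}+\sqrt{2\log(2/\delta)/(n-1)}$ --- precisely the direction needed to upper bound the empirical standard deviation by the true one. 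Factoring $\sqrt{2\log(4/\delta)/n}$ out of the first line and distributing this bound gives
\[
\sqrt{\frac{2\Var_n[Z]\log(4/\delta)}{n}}\le \sqrt{\frac{2\Var[Z]\log(4/\delta)}{n}}+\frac{2\sqrt{\log(4/\delta)\log(2/\delta)}}{\sqrt{n(n-1)}}.
\]
Since $\log(2/\delta)\le\log(4/\delta)$ and $\sqrt{n(n-1)}\ge n-1$, the cross term is at most $\tfrac{2\log(4/\delta)}{n-1}$, and adding it to $\tfrac{8\log(4/\delta)}{3(n-1)}$ yields $\tfrac{(8/3+2)\log(4/\delta)}{n-1}=\tfrac{14\log(4/\delta)}{3(n-1)}$, which is the second line; intersecting the two events of probability $1-\delta/2$ gives the full chain with probability $1-\delta$.

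The main obstacle is bookkeeping rather than a deep idea: I must make the probability allocation and the two logarithmic arguments ($\log(4/\delta)$ from the mean bound, $\log(2/\delta)$ from the standard-deviation bound) collapse consistently into the single factor $\log(4/\delta)$, and I must invoke the standard-deviation concentration in the correct tail so that, after Jensen, it upper bounds $\sqrt{\Var_n[Z]}$ by $\sqrt{\Var[Z]}$. Pinning down the exact bounded-differences (self-bounding) constant for $\sqrt{\Var_n[\cdot]}$, so that the deviation term is genuinely $\sqrt{2\log(2/\delta)/(n-1)}$ and the final additive constant lands on $14/3$ rather than something larger, is the one quantitative point requiring care, and this is exactly what the self-bounding analysis of \cite{maurer2009empirical} provides.
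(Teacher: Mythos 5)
Your proposal is correct and follows essentially the same route as the paper: the second (and substantive) inequality is obtained exactly as in the paper's proof, by invoking the Maurer--Pontil concentration of the sample standard deviation to bound $\sqrt{\Var_n[Z]}$ by $\sqrt{\Var[Z]}$ plus $\sqrt{2\log(2/\delta)/(n-1)}$, distributing the factor $\sqrt{2\log(4/\delta)/n}$, and absorbing the cross term via $\sqrt{n(n-1)}\ge n-1$ so that $8/3+2=14/3$. The only cosmetic difference is that you cite Theorem~11 of \cite{maurer2009empirical} as a black box for the first line, whereas the paper rederives it from the ordinary Bernstein inequality plus the same standard-deviation concentration; the loose $\delta$-allocation bookkeeping you flag is present to the same degree in the paper's own proof and affects only constants inside logarithms.
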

			
			The result follows from Bernstein's Inequality, and the following concentration result regarding the square root of the empirical variance.

			\begin{lem}[Theorem 10 in~\cite{maurer2009empirical}] In the set up of Theorem~\ref{EmpiricalBernstein}, 
					\begin{eqnarray}\label{SqrtEq}
					\left|\sqrt{\Exp[\Var_n[Z]]} - \sqrt{\Var_n[Z]}\right| \le \sqrt{\frac{2 \log(2/\delta)}{n-1}}
					\end{eqnarray}
					hold with probability $1 - \delta$.
			\end{lem}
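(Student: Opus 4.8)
The plan is to prove this as a concentration inequality for the sample standard deviation, reducing everything to a self-bounding property of $(n-1)\Var_n[Z]$. Write $f(Z) := (n-1)\Var_n[Z] = \sum_{i=1}^n (Z_i - \bar Z_n)^2$, where $\bar Z_n = \frac1n\sum_i Z_i$. Since $\sqrt{\Exp[\Var_n[Z]]} = \sqrt{\Exp[f]/(n-1)}$ and $\sqrt{\Var_n[Z]} = \sqrt{f}/\sqrt{n-1}$, the claim is equivalent to showing that $\sqrt{f}$ deviates from its $L^2$ center $\sqrt{\Exp[f]}$ by at most $\sqrt{2\log(2/\delta)}$ with probability $1-\delta$; dividing by $\sqrt{n-1}$ then gives the stated bound. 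The crucial feature is that this deviation must be \emph{dimension-free}: a coordinate-wise McDiarmid bound on $f$ scales like $\sqrt{n}$ and would only control $\sqrt{\Var_n[Z]}$ to within $O(1)$, so the $1/\sqrt{n-1}$ rate has to be extracted from the variational/self-bounding structure of the sample variance rather than from generic bounded differences.

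First I would establish the self-bounding certificate using the representation $f(Z) = \min_c \sum_i (Z_i - c)^2$, attained at $c = \bar Z_n \in [0,1]$. For each $j$, set $f_j := \inf_{z_j' \in [0,1]} f(Z_1,\dots,z_j',\dots,Z_n)$; optimizing over the inner $c$ and $z_j'$ yields $f_j = \sum_{i \ne j}(Z_i - \bar Z_{-j})^2$ with $\bar Z_{-j}$ the mean of the other coordinates, and the leave-one-out identity gives the exact difference $f - f_j = \frac{n-1}{n}(Z_j - \bar Z_{-j})^2$. Because $Z_j, \bar Z_{-j} \in [0,1]$ this lies in $[0,1]$, so $0 \le f - f_j \le 1$; and using $(Z_j - \bar Z_{-j})^4 \le (Z_j - \bar Z_{-j})^2$ together with the identity $Z_j - \bar Z_{-j} = \frac{n}{n-1}(Z_j - \bar Z_n)$ gives $\sum_j (f - f_j)^2 \le \frac{(n-1)^2}{n^2}\sum_j (Z_j - \bar Z_{-j})^2 = f$. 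Thus $f$ is self-bounding in the sharp form $0 \le f - f_j \le 1$ and $\sum_j (f-f_j)^2 \le f$.

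Next I would invoke the entropy-method bound for self-bounding functions (Boucheron--Lugosi--Massart), which gives the lower tail $\Pr(f \le \Exp[f] - \tau) \le \exp(-\tau^2/(2\Exp[f]))$, and reduce the square-root statement to it. Put $F := \Exp[f]$ and $u := \epsilon\sqrt{n-1}$. For $0 \le u \le \sqrt F$ the event $\{\sqrt{\Var_n[Z]} \le \sqrt{\Exp[\Var_n[Z]]} - \epsilon\}$ equals $\{\sqrt f \le \sqrt F - u\} = \{f \le F - \tau\}$ with $\tau = 2u\sqrt F - u^2 = u(2\sqrt F - u) \ge u\sqrt F$, so $\tau^2/(2F) \ge u^2/2 = (n-1)\epsilon^2/2$ and the lower-tail probability is at most $\exp(-(n-1)\epsilon^2/2)$; rearranging gives $\sqrt{\Var_n[Z]} \ge \sqrt{\Exp[\Var_n[Z]]} - \sqrt{2\log(1/\delta)/(n-1)}$. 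The upper tail is the easier direction: $\sqrt f = \|(I - \frac1n \mathbf{1}\mathbf{1}^\top)Z\|$ is $1$-Lipschitz and convex, and by Jensen $\sqrt F \ge \Exp[\sqrt f]$, so an upper deviation from $\sqrt F$ is dominated by an upper deviation from the mean, which the same self-bounding (or dimension-free convex-Lipschitz) concentration controls at the matching rate. A union bound over the two tails turns $\log(1/\delta)$ into $\log(2/\delta)$ and completes the argument.

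The main obstacle is entirely in the constants and the centering. The dimension-free rate is exactly what the self-bounding structure buys over McDiarmid, and obtaining the constant $2$ requires the certificate in the sharp form $\sum_j (f-f_j)^2 \le f$ rather than the weaker $\sum_j (f - f_j) = \frac{n}{n-1} f$, whose factor $\frac{n}{n-1}$ would blur the constant, together with the lower-tail inequality in the variance-exact form $\exp(-\tau^2/(2\Exp[f]))$. The only remaining care is confirming that the passage from $\sqrt{f}$ back to $f$ preserves the $\exp(-(n-1)\epsilon^2/2)$ rate on both tails---immediate for the lower tail via $\tau \ge u\sqrt F$, and for the upper tail via the Jensen domination above---which is precisely the content of Theorem~10 of~\cite{maurer2009empirical} that we are reconstructing.
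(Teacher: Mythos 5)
The paper never proves this lemma---it is imported verbatim as Theorem~10 of \cite{maurer2009empirical}---so the only meaningful comparison is with the original source, and your reconstruction is correct and follows essentially that argument: the exact leave-one-out identity $f - f_j = \frac{n-1}{n}(Z_j - \bar{Z}_{-j})^2$ for $f = (n-1)\Var_n[Z]$, the self-bounding certificate $0 \le f - f_j \le 1$ and $\sum_j (f-f_j)^2 \le f$ (both verified correctly, using $Z_j - \bar{Z}_{-j} = \frac{n}{n-1}(Z_j - \bar{Z}_n)$), the entropy-method lower tail $\exp(-\tau^2/(2\Exp[f]))$, and the square-root algebra via $\tau = u(2\sqrt{F}-u) \ge u\sqrt{F}$ for $u \le \sqrt{F}$, with the $u > \sqrt{F}$ case vacuous. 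The one mild deviation is in the upper tail: Maurer--Pontil state their Theorem~10 one-sided with $\log(1/\delta)$, whereas the two-sided form quoted here needs exactly what you supply---Jensen ($\Exp[\sqrt{f}] \le \sqrt{\Exp[f]}$) plus the dimension-free upper-tail bound $e^{-u^2/2}$ for the convex $1$-Lipschitz function $\sqrt{f} = \|(I - \frac{1}{n}\mathbf{1}\mathbf{1}^\top)Z\|$ on $[0,1]^n$, then a union bound converting $\log(1/\delta)$ into $\log(2/\delta)$; the same rate could alternatively be extracted from the self-bounding upper tail $\exp(-\tau^2/(2\Exp[f] + 2\tau/3))$, since $2\tau^2 \ge u^2(2F + \frac{2\tau}{3})$ for $\tau = u(2\sqrt{F}+u)$, so either route is sound.
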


			\begin{proof}[Proof of Theorem~\ref{EmpiricalBernstein}]
					The argument follows the proof of Theorem 11 in \cite{maurer2009empirical}. Let $W := \frac{1}{n}\sum_{i=1}^n \Var[Z_i]$. It is straightforward to verify that $W \le \Exp[\Var_n[X]]$, and hence Bernstein's inequality yields that, with probability $1-\delta$,
					\begin{equation}
					\begin{aligned} \label{Bernstein}
					\left|\frac{1}{n}\sum_{i=1}^n Z_i - \Exp[Z_i] \right| &\le \sqrt{\frac{2W \log(4/\delta)}{n}} + \frac{2 \log (4/\delta)}{3n}\\
					&\le \sqrt{\frac{2\E[ \Var_n[Z] ] \log(4/\delta)}{n}} + \frac{2 \log (4/\delta)}{3n} \\
					&\le \sqrt{\frac{2 \log(4/\delta)}{n}}\cdot\sqrt{\Var_n[Z]} + \frac{2 \log (4/\delta)}{\sqrt{n(n-1)}} + \frac{2 \log (4/\delta)}{3n} \\
					&< \sqrt{\frac{2 \Var_n[Z] \log(4/\delta)}{n}} + \frac{8 \log (4/\delta)}{3(n-1)} \\
					&< \sqrt{\frac{2 \E[ \Var_n[Z] ] \log(4/\delta)}{n}} + \frac{14 \log (4/\delta)}{3(n-1)}
					\end{aligned}
					\end{equation}
					which completes the proof.
					
			\end{proof}

		In our algorithm, the confidence intervals $\hat{C}_{i,t}$ depend on sample variances, and are thus random. To insure they are bounded above, we define a confidence parameter $C_{i,t}$ which depends on the true (but unkown) stagewise variance parameter
		\begin{eqnarray}
			C_{i,t} := \sqrt{\frac{2 V_{i,t} \log(8nt^2/\delta)}{T(t)}} + \frac{14 \log (8 nt^2/\delta)}{3 (T(t)-1)}
		\end{eqnarray}
		We extend our Empirical Bernstein bound to a union bound over all rounds $t \in \{1,2,\dots\}$, showing that, uniformly over all rounds, $\hat{C}_{i,t}$ is a reasonable confidence interval and never exceeds $C_{i,t}$:

		\begin{lem}[Stagewise Iterated Logarithm Bound for Empirical Bernstein]\label{IteratedLog} Let
		\begin{eqnarray}
		\mathcal{E} := \{ \cap_{t = 1}^{\infty} \cap_{i=1}^n \{ \left| \hat{\mu}_{i,t} - \bar{\mu}_{i,t} \right| \le \hat{C}_{i,t} \le C_{i,t}\}\}
		\end{eqnarray}
		Then $\Pr(\mathcal{E}) \ge 1-\delta$.
		\end{lem}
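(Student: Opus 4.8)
The plan is to reduce the statement to a single application of the empirical Bernstein bound (Theorem~\ref{EmpiricalBernstein}) per pair $(i,t)$, followed by a union bound with summable weights $t^{-2}$. The starting point is Lemma~\ref{DistrChar}: conditioned on the data tuple $\mathcal{D}_t$, the estimator $\hat{\mu}_{i,t}$ is the average of $T(t) = 2^t$ i.i.d.\ $\mathrm{Bernoulli}(\bar{\mu}_{i,t})$ draws, the quantities $\bar{\mu}_{i,t}$ and $V_{i,t} = \bar{\mu}_{i,t}(1-\bar{\mu}_{i,t})$ are deterministic, and (as noted after Equation~\ref{empiricalConfidence}) the algorithm's $\hat{V}_{i,t}$ is \emph{exactly} the sample variance $\Var_n$ of those draws while $V_{i,t} = \Exp[\hat{V}_{i,t}]$. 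Thus for each fixed realization of $\mathcal{D}_t$ I can invoke Theorem~\ref{EmpiricalBernstein} conditionally, with $n = T(t)$ and the Bernoulli sample playing the role of $Z$.

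The second step is to match the confidence parameters. Setting the per-event failure probability to $\delta' := \delta/(2nt^2)$ gives $4/\delta' = 8nt^2/\delta$, so that $\log(4/\delta') = \log(8nt^2/\delta)$ is precisely the logarithmic factor appearing in both $\hat{C}_{i,t}$ and $C_{i,t}$. With this substitution, the first displayed inequality of Theorem~\ref{EmpiricalBernstein} (with constant $8/3$ and the sample variance) reads exactly $|\hat{\mu}_{i,t} - \bar{\mu}_{i,t}| \le \hat{C}_{i,t}$, and the second chained inequality (with constant $14/3$ and the true variance) reads exactly $\hat{C}_{i,t} \le C_{i,t}$. Hence the joint event $\{|\hat{\mu}_{i,t} - \bar{\mu}_{i,t}| \le \hat{C}_{i,t} \le C_{i,t}\}$ fails with conditional probability at most $\delta'$ for \emph{every} value of $\mathcal{D}_t$; integrating over the distribution of $\mathcal{D}_t$ preserves this bound unconditionally.

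Finally, I take a union bound over all $i \in [n]$ and all rounds $t \ge 1$:
\begin{eqnarray*}
\Pr(\mathcal{E}^c) \;\le\; \sum_{t=1}^\infty \sum_{i=1}^n \frac{\delta}{2nt^2} \;=\; \frac{\delta}{2}\sum_{t=1}^\infty \frac{1}{t^2} \;=\; \frac{\pi^2}{12}\,\delta \;<\; \delta,
\end{eqnarray*}
which yields $\Pr(\mathcal{E}) \ge 1-\delta$. The one point demanding care, rather than a genuine difficulty, is the conditioning: the sets $U_t',R_t'$ and hence $\bar{\mu}_{i,t}, V_{i,t}$ depend on the random history through $\mathcal{D}_t$, so the empirical Bernstein bound must be applied conditionally and then averaged, and one must note that the sample size $T(t) = 2^t$ is a \emph{deterministic} function of the round index rather than a data-dependent stopping time. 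This is exactly what lets a plain union bound over $t$ suffice instead of a fully anytime/self-normalized argument; the $\log(8nt^2/\delta)$ growth inside the interval, together with the doubling schedule, is what produces the iterated-logarithm behaviour referenced in the lemma's name.
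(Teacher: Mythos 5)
Your proposal is correct and follows essentially the same route as the paper's proof: condition on the data tuple $\mathcal{D}_t$, apply the empirical Bernstein bound (Theorem~\ref{EmpiricalBernstein}) with per-event failure probability $\delta/(2nt^2)$ so that $\log(4/\delta')=\log(8nt^2/\delta)$ matches both $\hat{C}_{i,t}$ and $C_{i,t}$, integrate over $\mathcal{D}_t$, and union bound over $i$ and $t$ using $\sum_t t^{-2}\le 2$. Your added remarks on the deterministic sample size $T(t)=2^t$ and the need to apply the bound conditionally are sound and, if anything, slightly more explicit than the paper's own argument.
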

		\begin{proof}
		Let $\mathcal{E}_{i,t}$ denote the event that $\{\left| \hat{\mu}_{i,t} - \bar{\mu}_{i,t} \right| \le \hat{C}_{i,t} \le C_{i,t}\}$. Conditioned on any realization of the data $\mathcal{D}_{t}$ at stage $t$, an application of Theorem~\ref{EmpiricalBernstein} shows that $\Pr( \mathcal{E}_{i,t} \big{|} \mathcal{D}_t) \le \frac{\delta}{ 2nt^2}$. Integrating over all such realizations, $\Pr( \mathcal{E}_{i,t}) \le \frac{\delta}{ 2nt^2}$. Finally, taking a union bound over all stages $t$ and arms $i \in [n]$ shows that 
		\begin{eqnarray}
		\Pr(\mathcal{E})\le \sum_{t= 1}^{\infty} \sum_{i = 1}^n \Pr(\mathcal{E}_{i,t}) \le \sum_{t= 1}^{\infty} \sum_{i = 1}^n \frac{\delta}{ 2nt} = \frac{\delta}{2} \sum_{t= 1}^{\infty} t^{-2} \le \delta
		\end{eqnarray}
		\end{proof}
		
		We now invert the Iterated Logarithm via
		\begin{lem}[Inversion Lemma]\label{InvLem} For any $\Delta > 0$ and $t \ge 2$, $C_{i,t} \le \Delta$ as long as 
		\begin{eqnarray}
		T \ge \left(\frac{16 V_{i,t}}{\Delta^2} + \frac{14}{\Delta}\right) \log \left(\frac{24 n  }{\delta}\log\left(\frac{12 n }{\delta}\left(\frac{16 V_{i,t}}{\Delta^2} + \frac{14}{\Delta}\right) \right)  \right) 
		\end{eqnarray}
		\end{lem}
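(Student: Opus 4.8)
The plan is to first turn the target inequality $C_{i,t}\le\Delta$ into a clean \emph{implicit} condition on the sample size, and then to resolve the circularity caused by $T(t)=2^t$ (so that $t=\log_2 T$ appears inside $C_{i,t}$). Abbreviate $L:=\log(8nt^2/\delta)$ and $\tau:=\frac{16V_{i,t}}{\Delta^2}+\frac{14}{\Delta}$. Recalling $C_{i,t}=\sqrt{2V_{i,t}L/T}+\frac{14L}{3(T-1)}$, I would show that $T\ge\frac{\tau}{2}L$ already forces $C_{i,t}\le\Delta$. This is a short case analysis: at $T=\frac{\tau}{2}L$ the first summand equals $\sqrt{2V_{i,t}/(\tau/2)}\le\Delta/2$ because $\tau/2\ge 8V_{i,t}/\Delta^2$, while the second is at most $\frac{14}{3(\tau/2)}\le\frac{2\Delta}{3}$ because $\tau/2\ge 7/\Delta$ (here $t\ge2$, hence $T\ge4$ and $T-1\ge\frac34T$, which absorbs the discrepancy between $T-1$ and $T$). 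The two worst cases cannot be attained simultaneously—one needs $V_{i,t}$ large, the other $V_{i,t}$ small—so the sum never exceeds $\Delta$. The role of the inflated constants $16$ and $14$ is precisely to leave this factor-$2$ cushion, i.e. to make the \emph{needed} coefficient only $\tau/2$ rather than $\tau$.

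The crux is then to verify that the closed-form threshold $T^\star:=\tau\log\!\big(\frac{24n}{\delta}\log(\frac{12n}{\delta}\tau)\big)$ satisfies the implicit inequality $T^\star\ge\frac{\tau}{2}\,L(T^\star)$, where now $L(T^\star)=\log\!\big(8n(\log_2 T^\star)^2/\delta\big)$. Dividing by $\tau$ and exponentiating, this is equivalent to $\frac{24n}{\delta}\log(\frac{12n}{\delta}\tau)\ge\sqrt{8n(\log_2 T^\star)^2/\delta}=\log_2 T^\star\cdot\sqrt{8n/\delta}$, where the halving has turned the troublesome $(\log_2 T^\star)^2$ inside the logarithm into a single factor $\log_2 T^\star$. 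Since $T^\star$ is at most polynomial in $\tau,n,1/\delta$, we have $\log_2 T^\star=O(\log(\tfrac{n\tau}{\delta}))$, so $\log(\frac{12n}{\delta}\tau)$ dominates $\log_2 T^\star$ up to a constant, and the comparison of $\frac{24n}{\delta}$ against $\sqrt{8n/\delta}$ leaves ample room. Having checked the threshold, I would extend to all $T\ge T^\star$ by observing that $T\mapsto T-\frac{\tau}{2}L(T)$ has derivative $1-\frac{\tau}{T\ln 2\,\log_2 T}>0$ once $T\gtrsim\tau$, so the implicit condition, and hence $C_{i,t}\le\Delta$, persists for every $T\ge T^\star$.

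The main obstacle is exactly this iterated-logarithm self-consistency check. A priori the stagewise union bound of Lemma~\ref{IteratedLog} (the $\sum_t t^{-2}$ that produces the $t^2=(\log_2 T)^2$ term) makes $L(T)$ grow like $2\log\log_2 T$, and a naive attempt that asks for $\log A\ge L(T^\star)$ rather than $\log A\ge\frac12 L(T^\star)$ fails for large $\tau$, since $(\log_2 T^\star)^2$ cannot be controlled by a single $\log(\frac{12n}{\delta}\tau)$. It is the factor-$2$ cushion from the reduction above, converting that square into its square root, that makes the bound self-consistent; the remaining manipulations—bounding $\log_2 T^\star$, the monotonicity extension, and the $T-1$ versus $T$ correction—are routine.
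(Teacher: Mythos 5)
Your proof is correct, but it is organized rather differently from the paper's. The paper forces each summand of $C_{i,t}$ below $\Delta/2$ separately, converts the $t^2=(\log_2 T)^2$ nuisance via $\log(c\,x^2)\le 2\log(c\,x)$, and then invokes a generic inversion $T\ge\alpha\log(2\beta\log(\alpha\beta))\implies \alpha\log(\beta\log T)/T\le 1$ applied to the sum of the two coefficients. You instead reduce everything to the single implicit condition $T\ge\frac{\tau}{2}L(T)$, keep the square inside $L$ and dispose of it by taking square roots after exponentiating, and finish with an explicit fixed-point check plus a monotonicity argument (which the paper leaves implicit). Two remarks on the comparison. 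First, your trade-off step (``the two worst cases cannot be attained simultaneously'') is the crux and should be carried out rather than asserted: writing $x=\frac{16V_{i,t}/\Delta^2}{\tau}$, the two summands are bounded by $\frac{\Delta}{2}\sqrt{x}$ and $\frac{8\Delta}{9}(1-x)$ (the $8/9$ rather than your $2/3$ coming from $T-1\ge\frac34 T$), and $\max_{x\in[0,1]}\bigl(\frac{\sqrt{x}}{2}+\frac{8(1-x)}{9}\bigr)=\frac{1105}{1152}<1$ --- true, but only by about a $4\%$ margin, so it cannot be waved away. Second, this trade-off actually buys you something the paper's route does not deliver: in the paper's term-by-term chain the second condition implicitly requires $\frac{56}{3(T-1)}\le\frac{14}{T}$, which never holds, so that argument really supports a coefficient of roughly $\frac{25}{\Delta}$ rather than the stated $\frac{14}{\Delta}$; your combined condition with coefficient $\tau/2$ recovers the lemma with its constants exactly as stated. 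The remaining steps --- the self-consistency check $e^{T^\star/\tau}\ge\sqrt{8n/\delta}\,\log_2 T^\star$ and the positivity of the derivative of $T\mapsto T-\frac{\tau}{2}L(T)$ for $T\ge 3\tau$, $T \ge 4$ --- are sound as sketched.
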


		\begin{proof}
			It suffices to show that $\sqrt{\frac{2 V_{i,t} \log(8nt^2/\delta)}{T(t)}} \le \Delta/2 $ and $ \frac{14 \log (8 nt^2/\delta)}{3 (T(t)-1)} \le \Delta/2$. Since $t^2 = (\log_2(T))^2 \le (\log_2 e \log(T))^2$, it suffices that 
			\begin{eqnarray*}
			\frac{8  V_{i,t} \log(8n \log_2^2 e \log^2 (T(t))/\delta)) }{\Delta^2 T(t)}  \le 1 & \text{ and } & \frac{28  \log(8n  \log_2 e \log^2 (T(t))/\delta) }{3 \Delta (T(t)-1)} \le 1
			\end{eqnarray*}
			As long as $t \ge 2$, so that $T(t) \ge e$, it suffices that
			\begin{eqnarray*}
			\frac{16 V_{i,t} \log(8n \log_2 e \log (T(t))/\delta)) }{\Delta^2 T(t)}  \le 1 & \text{ and } & \frac{14 \log(8n  \log_2 e \log (T(t))/\delta) }{\Delta T(t)} \le 1
			\end{eqnarray*}
			Let $\alpha_1 = 16 V_{i,t}/ \Delta^2$, $\alpha_2 = 14 / \Delta$  and  $\beta = 8n\log_2e/\delta < 12 n/\delta$. Then both inequalities take the form 
			\begin{eqnarray}
			\alpha_p \log(\beta \log(T))/T \le 1
			\end{eqnarray}
			where we simplify $T(t) = T$. Using the inversion 
			\begin{eqnarray}
			T \ge \alpha \log (2\beta \log(\alpha \beta)) \implies \alpha \log (\beta \log(T))/T \le 1 
			\end{eqnarray}
			we obtain that it is sufficient for $T \ge (\alpha_1 + \alpha_2) \log (2 \beta \log (\alpha_1 + \alpha_2)) \ge \max_p \alpha_p \log (2\beta \log(\alpha_p \beta))$, or simply
			\begin{eqnarray}
			T \ge \left(\frac{16 V_{i,t}}{\Delta^2} + \frac{14}{\Delta}\right) \log \left(\frac{24 n  }{\delta}\log\left(\frac{12 n }{\delta}\left(\frac{16 V_{i,t}}{\Delta^2} + \frac{14}{\Delta}\right) \right)  \right) 
			\end{eqnarray}
		\end{proof}

		\subsection{Proof of Theorem~\ref{SuccElimGuar}}
			We show that Theorem~\ref{SuccElimGuar} holds as long as the event $\mathcal{E}$ from Lemma~\ref{IteratedLog} holds. The definition of $\mathcal{E}$ and Algorithm~\ref{SuccElim} immediately imply that no arms in $[k]$ are rejected, and no arms in $[n] - [k]$ are accepted. To prove the more interesting part of the theorem, fix an index $i \in U_t$, and define
			\begin{eqnarray}
			 \mathcal{C}(i) := \begin{cases} \{j \in U_t, j > k\} & i \le k\\
			\{j \in U_t, j \le k\} & i > k
			\end{cases}
			\end{eqnarray}
			Also, let $c_{i} = \arg\min_{j \in \mathcal{C}(i)} |\mu_i - \mu_j|$.
			We can think of $\mathcal{C}(i)$ as the set of all arms competing with $i$ for either an accept or reject, and $c_{i}$ as the competitor closest $i$ in mean. For $i > k$ to be rejected, it is sufficient that, for all $j \in \mathcal{C}(i)$, $\min_{j \in \mathcal{C}(i)} \hat{\mu}_{j,t} - \hat{C}_{j,t} \ge \hat{\mu}_{i,t} + \hat{C}_{j,t}$. Under $\mathcal{E}$, $\hat{\mu}_{j,t} - \hat{C}_{j,t} \ge \bar{\mu}_{j,t} - 2C_{j,t}$, and $\bar{\mu}_{i,t} \le \bar{\mu}_{i,t} + 2C_{i,t}$, so that it is sufficient for 
			\begin{eqnarray}
			\forall j \in \mathcal{C}(i) :\bar{\mu}_{j,t} - \bar{\mu}_{i,t} \ge 2(C_{i,t} + C_{j,t})
			\end{eqnarray}
			Analogously, for $i \le k$, $i$ is accepted under $\mathcal{E}$ as long as $\forall j \in \mathcal{C}(i): \bar{\mu}_{i,t} - \bar{\mu}_{j,t} \ge 2(C_{i,t} + C_{j,t})$. Defining $\Delta_{i,j,t} := |\bar{\mu}_{i,t} - \bar{\mu}_{j,t} |$, we subsume both cases under the condition
			\begin{eqnarray}
			\forall j \in \mathcal{C}(i) :\Delta_{i,j,t} \ge 2(C_{i,t} + C_{j,t})
			\end{eqnarray}
			for which it is sufficient to show that
			\begin{eqnarray}\label{sufficient}
			\forall j \in \mathcal{C}(i) : C_{i,t} \le \Delta_{i,j,t}/4 & \text{ and } & C_{j,t} \le \Delta_{i,j,t}/4
			\end{eqnarray}
			To this end define 
			\begin{eqnarray}
			\tau_{i,j,t}^{(1)} = \frac{256  V_{i,t}}{\Delta_{i,j,t}^2} + \frac{56  }{\Delta_{i,j,t}} & \text{and } & \tau_{i,j,t}^{(2)} := \frac{256  V_{j,t}}{\Delta_{i,j,t}^2} + \frac{56}{\Delta_{i,j,t}}
			\end{eqnarray}
			We now show that $\tau_{i,t} = \max_{j \in \mathcal{C}(i)} \max\left\{\tau_{i,j,t}^{(1)},\tau_{i,j,t}^{(2)}\right\}$, which by Lemmas~\ref{IteratedLog} and~\ref{InvLem} implies that Equation~\ref{sufficient} will holds as long as
			\begin{eqnarray}
			T \ge \tau_{i,t} \log \left(\frac{24n}{\delta}\log\left(\frac{12 n \tau_{i,t}}{\delta}\right) \right)  
			\end{eqnarray} 
			Now, we bound $\tau_{i,t}$. Note that $\Delta_{i,j,t} \ge \Delta_{i,c_i,t}:= \Delta_{i,t} $ for all $j \in \mathcal{C}(i)$. This implies that $\max_{j \in \mathcal{C}(i)} \tau_{i,j,t}^{(1)} \le \frac{256 V_{i,t}}{\Delta_{i,t}^2} + \frac{56 }{\Delta_{i,t}}$. On the other hand, it holds that 
			\begin{eqnarray}
			\max_{j \in \mathcal{C}(i)} \tau_{i,j,t}^{(2)} &\le& 256 \max_{j \in \mathcal{C}(i)} \left(\frac{ V_{j,t}}{\Delta_{i,j,t}^2}\right) + \frac{56 }{\Delta_{i,t}} \le \frac{ 256 V_{c_i,t}}{\Delta_{i}^2} + \frac{56 }{\Delta_{i,t}}
			\end{eqnarray}
			where the second inequality invokes the following lemma.
			\begin{lem}\label{VarianceTechnicalLemma} For $i \in \{1,2,3\}$, $Z_i \sim \text{Bernoulli}(p_i)$, where either $p_1 < p_2 < p_3$ or $p_3 > p_2 > p_1$. Then,
			\begin{eqnarray}
			\frac{\Var[Z_2]}{(\Exp[Z_1 - Z_2])^2} \ge 	\frac{\Var[Z_3]}{(\Exp[Z_1 - Z_3])^2} 
			\end{eqnarray}
			\end{lem}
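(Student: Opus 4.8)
The plan is to reduce the two-variable inequality to the monotonicity of a single scalar function. Writing $\Var[Z_j] = p_j(1-p_j)$ and $(\Exp[Z_1 - Z_j])^2 = (p_j - p_1)^2$, the claimed inequality is exactly $f(p_2) \ge f(p_3)$, where I define
\[
f(p) := \frac{p(1-p)}{(p-p_1)^2}, \qquad p \in [0,1]\setminus\{p_1\}.
\]
In either admissible ordering the middle mean $p_2$ lies strictly between $p_1$ and $p_3$, so that $p_2$ and $p_3$ sit on the same side of $p_1$ with $p_2$ closer. Hence it suffices to show that $f$ is decreasing on $(p_1, 1]$ and increasing on $[0, p_1)$; in words, moving $p$ away from $p_1$ only decreases $f$, so the closer point $p_2$ dominates the farther point $p_3$.

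First I would differentiate. A direct computation (using $\tfrac{d}{dp}[p(1-p)] = 1-2p$) collapses the numerator to an affine expression, giving
\[
f'(p) = -\,\frac{p_1 + (1-2p_1)\,p}{(p-p_1)^3}.
\]
Set $N(p) := p_1 + (1-2p_1)\,p$, which is affine in $p$. I would then check that $N \ge 0$ on $[0,1]$ simply by evaluating at the endpoints: $N(0) = p_1 \ge 0$ and $N(1) = 1 - p_1 \ge 0$, and since $N$ is affine it is nonnegative on the entire interval, regardless of whether $p_1 \lessgtr 1/2$.

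With $N \ge 0$ in hand, the sign of $f'$ is governed entirely by $(p-p_1)^3$: for $p > p_1$ the denominator is positive, so $f' \le 0$ and $f$ is decreasing, while for $p < p_1$ the denominator is negative, so $f' \ge 0$ and $f$ is increasing. Applying this in each ordering — $f$ decreasing on $(p_1,1]$ when $p_1 < p_2 < p_3$, and $f$ increasing on $[0,p_1)$ when $p_1 > p_2 > p_3$ — yields $f(p_2) \ge f(p_3)$ in both cases, which is precisely the lemma. The computation is elementary; the only mild subtlety (and the single place worth double-checking) is the sign bookkeeping for $f'$ across the two sides of $p_1$, since the cube $(p-p_1)^3$ flips sign there while $N(p)$ stays nonnegative throughout. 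I expect no real obstacle beyond this careful case split.
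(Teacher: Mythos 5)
Your proof is correct. Both you and the paper ultimately reduce the claim to the monotonicity of the single ratio $f(p)=p(1-p)/(p-p_1)^2$ as $p$ moves away from $p_1$, but the execution differs. The paper first invokes the symmetry $p_i\mapsto 1-p_i$ (under which both the hypothesis and the conclusion are invariant) to assume WLOG $p_1<p_2<p_3$, and then establishes monotonicity by a calculus-free chain of algebraic implications starting from $1>p_1/p_2>p_1/p_3$, multiplying through by $1-p_2\ge 1-p_3$ and then dividing by $p_2-p_1\le p_3-p_1$. You instead differentiate, obtaining $f'(p)=-\bigl(p_1+(1-2p_1)p\bigr)/(p-p_1)^3$ with an affine numerator that is nonnegative on $[0,1]$ by checking the endpoints, and you handle the two orderings directly via the sign of $(p-p_1)^3$ rather than by a symmetry reduction; your derivative computation and sign bookkeeping check out. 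The paper's argument buys elementarity (no calculus) at the cost of the WLOG step, while yours gives a unified treatment of both sides of $p_1$ and makes the monotonicity structure explicit. One small remark: the lemma as printed lists the two orderings as ``$p_1<p_2<p_3$ or $p_3>p_2>p_1$,'' which are literally the same condition; the intended second case is $p_1>p_2>p_3$ (as the paper's own $p_i\mapsto 1-p_i$ reduction confirms), and you correctly read it that way.
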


		\begin{proof}[Proof of Lemma~\ref{VarianceTechnicalLemma}]
		The desired inequality and conditions are invariant under the tranformation $p_i \mapsto 1 - p_i$ for $i \in \{1,2,3\}$, so we may assume without loss of generality that. $p_1< p_2< p_3 \in [0,1]$. Then $1 > p_1/p_2 > p_1/p_3$, which implies that 
		\begin{eqnarray*}
		\frac{1}{1 - p_1/p_2} \ge \frac{1}{1 - p_1/p_3} &\implies& \frac{p_2}{ p_2 - p_1} \ge \frac{p_3}{p_3 - p_1} \\
		&\implies& \frac{(1-p_2)p_2}{p_2-p_1} \ge \frac{(1-p_3)p_3}{ p_3 - p_1}\\
		&\implies& \frac{(1-p_2)p_2}{(p_2-p_1)^2} \ge \frac{(1-p_3)p_3}{(p_3 - p_1)^2}
		\end{eqnarray*}
		which is precisely the desired inequality. 
		\end{proof}

	\subsection{Proof of Lemma~\ref{EfficiencyLemma}\label{EfficiencyLemProof}}

		Let $e_t$ be denote the the ``efficiency'', so that, at round $t$, each call of uniform play for $s = 1,\dots, T(t)$ makes at most $e_t |U_t|$ queries. Furthermore, let $\tau_0$ denote the first time such that $|U_t| < k$. By assumption, we have that $e_t \le \frac{\alpha}{k}$ for $0 \le t < \tau_0$, and that $e_t |U_t| \le \alpha$ for $t \ge t_0$. Finally, let $\tau_{i}^* = \inf\{t: i \notin U_t\}$.	Then, the total number of samples we collect is 
		\begin{eqnarray}
		\sum_{t=0}^{\infty} e_t|U_t| T(t) &=& \sum_{t=0}^{\tau_0 - 1} e_t|U_t| T(t) + \sum_{t = \tau_0}^{\infty} e_t |U_t| T(t) \\
		&\le& \frac{\alpha}{k}\sum_{t=0}^{\tau_0 - 1} |U_t| T(t) + \alpha \sum_{t = \tau_0}^{\infty} \I(U_t \ne \emptyset) T(t) 
		\end{eqnarray}
		The first sum can be re-arranged via
		\begin{eqnarray}
		\sum_{t=0}^{\tau_0 - 1} |U_t| T(t) &=& \sum_{t= 0}^{\tau_0 - 1} \left(\sum_{i = 1}^n \I(i \in u_t)\right) T(t) = \sum_{i = 1}^n \sum_{t= 0}^{\tau_0 + 1} \I(i \in U_t) T(t)\\
		 &\le& \sum_{i = 1}^n \sum_{t= 0}^{\infty} \I(i \in U_t) T(t) \le \sum_{i = 1}^n 2^{\tau^*_i + 1} 
		\end{eqnarray}
		whereas the second sum is bounded above by $\sum_{t = \tau_0}^{\infty} \I(U_t \ne \emptyset) T(t) \le 2^{\max_j \tau^*_j + 1}$. Hence, 
		\begin{eqnarray}
		\sum_{t=0}^{\infty} e_t|U_t| T(t) \le 2\alpha(2^{\max_j \tau^*_j} + \frac{1}{k}\sum_{i=1}^n  2^{\tau^*_i})
		\end{eqnarray}
		Finally, let $T^*_i := 2^{\tau^*_i}$, and let $\sigma() :[n] \to n$ denote a permutation such that $T^*_{\sigma(1)} \ge T^*_{\sigma(2)} \dots T^*_{\sigma(n)}$. Then, a straight forward manipulation of the above display yields that
		\begin{eqnarray}
		\sum_{t=0}^{\infty} e_t|U_t| T(t) \le 2\alpha(2T_{\sigma(1)}^* + \frac{1}{k}\sum_{i=k+1}^n  T^*_{\sigma(i)})
		\end{eqnarray}
		since $\frac{1}{k}\sum_{i=1}^k T_{\sigma(i)}^* \leq T_{\sigma(1)}^*$.

\section{Dependent Lower Bound Proof}
Recall that we query subsets of $S \subset \calS := \binom{[n]}{k}$. Let $T_S$ denote the number of times a given subset $S$ is queried, and note that the expected sample complexity is simply:
\begin{align*}
\sum_{S \in \calS} \Exp[T_S]
\end{align*}

Further, let  $d(x,y)$ denote the KL-divergence between two independent, Bernoulli random variables with means $x$ and $y$, respectively. We first need a technical lemma, whose proof we defer the end of the section:

\begin{lemma}\label{bernoulli_kl_bounds}
Let $d(x,y) = x \log(\frac{x}{y}) + (1-x) \log(\frac{1-x}{1-y})$. Then 
\begin{align}
\frac{(y-x)^2/2}{ \sup_{z \in [x,y] }z(1-z) } \leq d(x,y) \leq \frac{(y-x)^2/2}{x(1-x) - [(y-x)(2x-1)]_+} \leq \frac{(y-x)^2/2}{\min\{x(1-x),y(1-y)\}} 
\end{align}
\end{lemma}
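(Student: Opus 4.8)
The plan is to reduce all three inequalities to a single integral representation of the Bernoulli KL divergence and then bound the integrand by elementary estimates on the parabola $z \mapsto z(1-z)$. First I would fix $x$ and differentiate $d(x,y)$ in $y$: a one-line computation gives
\[
\frac{\partial}{\partial y} d(x,y) = \frac{1-x}{1-y} - \frac{x}{y} = \frac{y-x}{y(1-y)},
\]
and since $d(x,x)=0$, integrating from $x$ to $y$ yields
\[
d(x,y) = \int_x^y \frac{t-x}{t(1-t)}\,dt .
\]
This identity is the workhorse for the whole lemma, and it also makes the sign transparent: the integrand has the same sign as $t-x$ over the interval of integration, so the integral is nonnegative whether $x<y$ or $x>y$.

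For the lower bound I would pull the largest value of the denominator out of the integral. On the segment between $x$ and $y$ we have $t(1-t) \le \sup_{z\in[x,y]} z(1-z)$, hence, using $\int_x^y (t-x)\,dt = (y-x)^2/2$,
\[
d(x,y) \ge \frac{1}{\sup_{z\in[x,y]} z(1-z)} \int_x^y (t-x)\,dt = \frac{(y-x)^2/2}{\sup_{z\in[x,y]} z(1-z)} .
\]
The case $x>y$ is identical after flipping the limits, since the sign of $t-x$ and the orientation of the integral flip together.

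For the upper bounds I would instead substitute a constant \emph{lower} estimate of $t(1-t)$ into the integral. The rightmost bound is immediate: since $z\mapsto z(1-z)$ is concave, its minimum over the segment is attained at an endpoint, so $t(1-t)\ge \min\{x(1-x),y(1-y)\}$, and pulling this constant out gives $\tfrac{(y-x)^2/2}{\min\{x(1-x),y(1-y)\}}$. The sharper middle bound I would derive from the exact quadratic identity
\[
t(1-t) = x(1-x) + (1-2x)(t-x) - (t-x)^2 ,
\]
obtained by expanding around $x$: writing $s = t-x$, the increment $s\mapsto (1-2x)s - s^2$ is concave, so over the range of $s$ its extreme value is governed by the endpoints, and the worst case is exactly recorded by the term $[(y-x)(2x-1)]_+$, yielding the constant lower bound $t(1-t)\ge x(1-x)-[(y-x)(2x-1)]_+$. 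The final inequality in the chain is then pure algebra: one checks $x(1-x)-[(y-x)(2x-1)]_+ \ge \min\{x(1-x),y(1-y)\}$ by comparing $[(y-x)(2x-1)]_+$ with $[x(1-x)-y(1-y)]_+ = [(y-x)(x+y-1)]_+$ and observing that, for either orientation of $x$ and $y$, the argument of the latter positive part dominates that of the former.

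The main obstacle I anticipate is the bookkeeping around the positive-part term in the middle bound. One must track the sign of $2x-1$ (equivalently, whether $x$ lies to the left or right of $1/2$) together with the orientation of the segment ($[x,y]$ versus $[y,x]$) to be certain the correct endpoint is selected and that the resulting constant is genuinely a lower bound for $t(1-t)$ throughout the interval of integration; the lower bound and the rightmost upper bound, by contrast, are routine consequences of the integral identity and the concavity of the parabola.
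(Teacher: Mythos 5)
Your integral representation $d(x,y)=\int_x^y \frac{t-x}{t(1-t)}\,dt$ is just the paper's argument in a different coordinate: the paper Taylor-expands $f(z)=d(z,y)$ around $z=y$ (using $f(y)=f'(y)=0$, $f''(z)=1/(z(1-z))$) to get $d(x,y)=\frac{(y-x)^2}{2\xi(1-\xi)}$ for some $\xi$ between $x$ and $y$, and both routes reduce to bounding $z(1-z)$ over the segment. Your lower bound, your rightmost upper bound (via concavity of $z\mapsto z(1-z)$, so the infimum sits at an endpoint), and your verification of the last link in the chain (via $(y-x)(x+y-1)-(y-x)(2x-1)=(y-x)^2\ge 0$) are all correct.

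The middle bound is where your argument breaks, and the obstacle you ``anticipate'' is in fact fatal. From your own identity $t(1-t)=x(1-x)+(1-2x)(t-x)-(t-x)^2$, the endpoint minimum of the concave increment $s\mapsto(1-2x)s-s^2$ over $s\in[0,y-x]$ is $\min\{0,\,(1-2x)(y-x)-(y-x)^2\}$, \emph{not} $\min\{0,\,(1-2x)(y-x)\}=-[(y-x)(2x-1)]_+$; discarding the $-(y-x)^2$ term makes the claimed constant an \emph{over}-estimate of $\inf_t t(1-t)$, which is the wrong direction for an upper bound on $d$. Concretely, for $x=1/2$, $t=y=0.9$ the asserted pointwise bound reads $0.09\ge 0.25$; and the middle inequality itself is false there: $d(0.5,0.9)\approx 0.511$ while $\frac{(y-x)^2/2}{x(1-x)-[(y-x)(2x-1)]_+}=0.08/0.25=0.32$. (Indeed for $x=1/2$ the middle bound asserts $1-4u^2\ge e^{-4u^2}$ with $u=y-1/2$, which fails for every $u\neq 0$.) You should be aware that the paper's own proof commits the identical omission --- it evaluates $\inf_{z\in[x,y]}z(1-z)$ as $\inf_{\epsilon}\,x(1-x)+\epsilon(1-2x)$, silently dropping the $-\epsilon^2$ term --- so you have faithfully reproduced the paper's gap rather than introduced a new one. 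Only the outer two bounds, $\frac{(y-x)^2/2}{\sup_{z\in[x,y]}z(1-z)}\le d(x,y)\le\frac{(y-x)^2/2}{\min\{x(1-x),y(1-y)\}}$, are actually proved by either argument; the paper's downstream applications survive because they subsequently lower-bound the denominator by a quantity that is genuinely below $\min\{x(1-x),y(1-y)\}$, and a clean fix is to state and use only the endpoint-minimum form.
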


We break the proof up into steps. First we construct the dependent measure $\nu$ that is $(k-1)$-wise independent, meaning that for any subset $S \in \binom{[n]}{k}$, any subset of size $(k-1)$ of $S$ behaves like independent arms. The construction makes it necessary to consider each set of $k$ individually. To obtain the lower bounds we appeal to a change of measure argument (see \cite{kaufmann2014complexity} for details) that proposes an alterantive measure $\nu'$ in which a different subset is best than that subset that is best in $\nu$, and then we calculate the number of measurements necessary to rule out $\nu'$. The majority of the effort goes into 1) computing the gap between the best and all other subsets and 2) computing the KL divergences between $\nu$ and the alterantive measures $nu'$ under the bandit and semi-bandit feedback mechanisms.

\noindent\textbf{Step 1: Construct $\nu$}: 

	Fix $p \in [0,1]$ and $\mu \in [0,1/2]$. Let $X = (X_1,\dots,X_n)$ be distributed according to $\nu$. Define the independent random variables $Y$ as Bernoulli$(p)$, $Z_i$ as Bernoulli$(1/2)$, and $U_i$ as Bernoulli$(2\mu)$ for all $i \in [n]$. For $i>1$ let $X_i = Z_i U_i$ and let
	\begin{align*}
	X_1 = U_1 \widetilde{Z}_1 \quad\quad \text{where} \quad\quad \widetilde{Z}_1 = \begin{cases} 1 + \oplus_{i=2}^k Z_i & \text{ if } Y=1 \\  Z_1 & \text{ if } Y=0 \end{cases}
	\end{align*}
	where $\oplus$ denotes modular-2 addition. Note that $\E_{\nu}[ X ] = \mu \mathbf{1}$ since 
	\begin{align*}
	\E_{\nu_1}[X_1] = 2 \mu \left[ p \ \P_{\nu}\left( 1 + \oplus_{i=2}^k Z_i = 1 \right) + (1-p) \tfrac{1}{2} \right] = \mu
	\end{align*} 
	and the calculation for $\E[ X_i ]$ for $i>1$ are immediate by independence. Henceforth, denote $S^* = \{1,\dots,k\}$.

\noindent\textbf{Step 2: Relevant Properties of $\nu$}:

\begin{enumerate}
	\item \emph{Any subset of arms $S$ which doesn't contain all of $S^*$ are independent}. If $Y=0$ then the claim is immediate so assume $Y=1$. We may also assume that $1 \in S$, since otherwise the arms are independent by construction. Finally, we remark that even when $1 \in S$ and $Y  = 1$, all arms in $S$ are conditionally independent given $\{ Z_{i}: i \in S \cap S^*\}$. Thus, it suffices to verify that $\{ Z_{i}: i \in S \cap S^* - 1\} \cup \{\widetilde{Z}_1\}$ have a product distribution. To see this, note that $\{ Z_{i}: i \in S \cap S^* - 1\}$ is a product distribution, so it suffices to show that $\widetilde{Z}_1$ is independent of $\{ Z_{i}: i \in S \cap S^* - 1\}$. Write $\widetilde{Z}_1 = 1 + \oplus_{i \in S^* \setminus 1} Z_i = 1 \oplus_{i \in S^* \cap S} Z_i \oplus_{i \in S^* \setminus  S} Z_i$. The sum over $Z_i$ not in $S^*$, $\oplus_{i \in S^* \setminus  S} Z_i$, is Bernoulli(1/2), and independent of all the $Z_i$ for which $i \in S^* \cap S$. Thus, conditioned on any realization of $\{Z_i : i \in S \cap S^*\}$, $\widetilde{Z}_1$ is still Bernoulli(1/2), as needed.  
	\item \emph{The distribution of $\nu$ is invariant under relabeling of arms in $S^*$, and under relabeling of arms $[n] \setminus S^*$.} The second part of the statement is clear. Moreover, since the arms in $[n] \setminus S^*$  are independent of those $S^*$, it suffices to show that the distribution of arms in $S^*$ are invariant under relabeling. Using the same arguments as above, we may reduce to the case where $Y = 1$, and only verify that the distribution of $\{\widetilde{Z}_1\} \cup \{Z_i : i \in S^* - 1\}$ is invariant under relabeling. 

	To more easily facilliate relabeling, we adjust our notation and set $\widetilde{Z}_i = Z_i$ for $i \in S^* \setminus 1$ (recall again that $Y = 1$, so there should be no ambiguity). Identify $S^* \equiv [k]$, fix  $t \in \{0,1\}^k$, and consider any permutation $\pi: [k] \to [k]$. We have
	\begin{eqnarray*}
	&&\Pr((\widetilde{Z}_{\pi(1)},\dots,\tilde{Z}_{\pi(k)}) = t) \\
	&=& \Pr((\widetilde{Z}_{\pi(1)}  = t_1 \big{|} \widetilde{Z}_{\pi(2)},\dots,\tilde{Z}_{\pi(k)}) = t_2,\dots,t_k) ) \cdot \Pr(\widetilde{Z}_{\pi(2)},\dots,\tilde{Z}_{\pi(k)}) = t_2,\dots,t_k) 
	\end{eqnarray*}
	Using our adjusted notation, the relation between between $\widetilde{Z}_i$'s becomes $\widetilde{Z}_1 = 1 \oplus_{i \in S^* - 1} \widetilde{Z}_i$. This constraint is deterministic (again, $Y = 1$) and can be rewritten as $\oplus_{i \in S^*} \widetilde{Z} = 1$, which is invariant under-relabeling. Hence,  $ \Pr(\widetilde{Z}_{\pi(1)}  = t_1 \big{|} (\widetilde{Z}_{\pi(2)},\dots,\tilde{Z}_{\pi(k)}) = (t_2,\dots,t_k) ) = \I( \oplus_{i=1}^k t_{i} = 1)$. Moreover, we demonstrated above that, for any set $S$ not containing $S^*$, $\{ Z_{i}: i \in S \cap S^* - 1\} \cup \{\widetilde{Z}_1\}$ have a product distribution of $k-1$ Bernoulli(1/2) random variables. In our adjusted notation, this entails that $\Pr((\widetilde{Z}_{\pi(2)},\dots,\tilde{Z}_{\pi(k)}) = t_2,\dots,t_k) = 2^{-(k-1)}$. Putting things together, we see that
	\begin{eqnarray}
	\Pr((\widetilde{Z}_{\pi(1)},\dots,\tilde{Z}_{\pi(k)}) = t) =2^{-(k-1)} \I( \oplus_i t_i = 1)
	\end{eqnarray}
	which does not dependent on the permutation $\pi$.
	\end{enumerate}

\noindent\textbf{Step 3: Computation of the Gap under $\nu$}
	
	Note that if $S \neq S^*$ then
	\begin{align*}
	\E_\nu[ \max_{i \in S} X_i ] = \E_\nu[ \max_{i \in S}  Z_i U_i ] &=  \P\left( \cup_{i \in S} \{ Z_i =1, U_i  = 1\} \right) = 1 - \P_\nu( \cap_{i \in S} \{ Z_i =1, U_i  = 1\}^c ) \\
	&= 1 - \prod_{i \in S} \P_\nu(  \{ Z_i =1, U_i  = 1\}^c ) = 1 - \prod_{i \in S} ( 1- \P_\nu(  Z_i =1, U_i  = 1  ) ) \\
	&= 1 - \prod_{i \in S} ( 1- \P_\nu(  Z_i =1 ) \P( U_i  = 1  ) ) = 1 - ( 1- \mu)^k.
	\end{align*}
	Otherwise,
	\begin{align*}
	\E_\nu[ \max_{i \in S^*} X_i ] &= \E_\nu[ \max_{i \in S^*} X_i | Y = 1 ] \ p +  \E_\nu[ \max_{i \in S^*} X_i | Y = 0 ] \ (1- p) \\
	&= \E_\nu[ \max_{i \in S^*} X_i | Y = 1 ] \ p +  \left[ 1 - ( 1- \mu)^k \right] \ (1- p) 
	\end{align*}
	where
	\begin{align*}
	\E_\nu[ \max_{i \in S^*} X_i | Y = 1 ]  &=  1 -  \P( \max_{i \geq 1} U_i Z_i = 0 ) \\
	&=  1 -  \P( \max_{i \geq 1} U_i Z_i = 0, \oplus_{i > 1} Z_i = 0 )  - \P( \max_{i \geq 1} U_i Z_i = 0, \oplus_{i > 1} Z_i = 1 ) \\
	&=  1 -   (1-2\mu) \P( \max_{i > 1} U_i Z_i = 0, \oplus_{i > 1} Z_i = 0 )  - \P( \max_{i > 1} U_i Z_i = 0, \oplus_{i > 1} Z_i = 1 ) \\
	&=  1 -    \P( \max_{i > 1} U_i Z_i = 0 ) + 2 \mu  \P( \max_{i > 1} U_i Z_i = 0, \oplus_{i > 1} Z_i = 0 )  \\
	&=  1 -  (1-\mu)^{k-1} + 2 \mu  \P( \max_{i > 1} U_i Z_i = 0, \oplus_{i > 1} Z_i = 0 )  
	\end{align*}
	and
	\begin{align*}
	\P( \max_{i > 1} U_i Z_i = 0&, \oplus_{i > 1} Z_i = 0 ) = \sum_{\ell=0}^{\lfloor \tfrac{k-1}{2} \rfloor} \P\left( \max_{i > 1} U_i Z_i = 0, \sum_{i>1} Z_i = 2\ell \right) \\
	&= \sum_{\ell=0}^{\lfloor \tfrac{k-1}{2} \rfloor} \P\left( \max_{i > 1} U_i Z_i = 0 \bigg| \sum_{i>1} Z_i = 2\ell \right) \binom{k-1}{2\ell} 2^{-k+1} \\
	&= \sum_{\ell=0}^{\lfloor \tfrac{k-1}{2} \rfloor}  (1-2\mu)^{2 \ell} \binom{k-1}{2\ell} 2^{-k+1} \\
	&= \frac{2^{-(k-1)}}{2} \left( ( (1-2\mu) + 1)^{k-1} + (-1)^{k-1}( (1-2\mu) - 1)^{k-1}  \right) \\
	&= \frac{1}{2} \left( ( 1- \mu)^{k-1} + \mu^{k-1}  \right) \\
	\end{align*}
	since
	\begin{align*}
	( (1-2\mu) + 1)^{k-1} &= \sum_{j=0}^{k-1}  (1-2\mu)^{j}  \binom{k-1}{j} \\
	&= \sum_{\ell=0}^{\lfloor \tfrac{k-1}{2} \rfloor}  (1-2\mu)^{2 \ell} \binom{k-1}{2\ell}  + \sum_{\ell=0}^{\lfloor \tfrac{k-1}{2} \rfloor}  (1-2\mu)^{2 \ell+1}  \binom{k-1}{2\ell+1}  
	\end{align*}
	and
	\begin{align*}
	( (1-2\mu) - 1)^{k-1} &= \sum_{j=0}^{k-1}  (-1)^j (1-2\mu)^{k-1-j} \binom{k-1}{j} \\
	&= \sum_{\ell=0}^{\lfloor \tfrac{k-1}{2} \rfloor}  (1-2\mu)^{k-1-2 \ell} \binom{k-1}{2\ell}  - \sum_{\ell=0}^{\lfloor \tfrac{k-1}{2} \rfloor}  (1-2\mu)^{k-2-2 \ell}  \binom{k-1}{2\ell+1}  \\
	&= (-1)^{k-1}\sum_{\ell=0}^{\lfloor \tfrac{k-1}{2} \rfloor}  (1-2\mu)^{2 \ell} \binom{k-1}{2\ell}  - (-1)^{k-1}\sum_{\ell=0}^{\lfloor \tfrac{k-1}{2} \rfloor}  (1-2\mu)^{2 \ell+1}  \binom{k-1}{2\ell+1} .
	\end{align*}
	Putting it all together we have
	\begin{equation}
	\begin{aligned}
	\E_\nu[ \max_{i \in S^*} X_i ] &= [ 1 -  (1-\mu)^{k-1} + \mu \left( ( 1- \mu)^{k-1} + \mu^{k-1}  \right) ] \ p +  \left[ 1 - ( 1- \mu)^k \right] \ (1- p) \\
	&= [ 1 -  (1-\mu)^{k} + \mu^{k}  ] \ p +  \left[ 1 - ( 1- \mu)^k \right] \ (1- p)  \\
	&=  \left[ 1 - ( 1- \mu)^k \right]  +  \mu^{k}  p 
	\end{aligned}
	\end{equation}

	%
	Thus, $\Delta = p \mu^k$ which is maximized at $\mu = \frac{1}{2}$ achieving $\Delta = p 2^{-k}$.

\noindent\textbf{Step 4: Change of measure}: Consider the distribution $\nu$ that is constructed in Step 1 that is defined with respect to $S^* = \{1,\dots,k\}$. For all $S \in \calS$ we will now construct a new distribution $\nu^S$ such that $\E_{\nu^S}[ \max_{i \in S} X_i ] > \E_{\nu^S}[ \max_{i \in S^*} X_i ] = \E_{\nu}[ \max_{i \in S^*} X_i ]$. We begin constructing $\nu^S$ identically to how we constructed $\nu$ but modify the distribution of $X_{\Sell}$ where $\Sell = \arg\min \{ i: X_i, i \in S\} $. In essence $X_{\Sell}$ with respect to $S \in \mathcal{S}$ will be constructed identically to the construction of $X_1$ with respect to $S^* = \{1,\dots,k\}$ with the one exception that in place of $Y$ we will use a new random variable $Y^S$ that is Bernoulli$(p')$ where $p' > p$ (this is always possible as $p < 1$). 

Let $\nu(S)$ describe the joint probability distribution of $\nu$ restricted to the set $i \in S$. And for any $S \in \mathcal{S}$ let $\tau$ denote the projection of $\nu(S)$ down to some smaller event space. For example, $\tau \nu(S)$ can represent the Bernoulli probability distribution describing $\max_{i \in S} X_i$ under distribution $\nu$. By $(k-1)$-wise independence we have
\begin{align*}
 KL(  \nu(S') | \nu^S (S') ) = 0 \quad \forall S' \in \mathcal{S} \setminus S
\end{align*}
since $S$ and $S'$ differ by at least one element and $\nu(S^*) = \nu^S(S^*)$. Clearly, $KL( \tau \nu(S') | \tau \nu^S (S') ) = 0$ as well for all $S' \in \mathcal{S} \setminus S$. By assumption, any valid algorithm correctly identifies $S^*$ under $\nu$, and $S$ under $\nu^S$, with probability at least $1-\delta$. Thus, by Lemma 1 of \cite{kaufmann2014complexity}, for every $S \in \mathcal{S} \setminus S^*$ 
\begin{align*}
 \log( \tfrac{1}{2\delta} ) \leq \sum_{S' \in \mathcal{S}} \E_\nu[ T_{S'} ]  KL( \tau \nu(S') | \tau \nu^S (S') )  = KL( \tau \nu(S) | \tau \nu^S (S) )   \E_\nu[ T_S ] \ ,
\end{align*}
where we recall that $T_S$ is the number of times the set $S$ is pulled. Hence,
\begin{align*}
\E_\nu\left[ \sum_{S \in \mathcal{S} \setminus S^*}T_S \right]  &\geq  \sum_{S \in \calS \setminus S^*} \frac{ \log( \tfrac{1}{2\delta} ) }{ KL( \tau \nu(S) | \tau \nu^S (S) )    } \\
&= \frac{ \log( \tfrac{1}{2\delta} ) }{ KL( \tau \nu(S) | \tau \nu^S (S) )    } \left[  \binom{n}{k} -1 \right] \geq \frac{ \tfrac{2}{3}\log( \tfrac{1}{2\delta} ) }{ KL( \tau \nu(S) | \tau \nu^S (S) )    }   \binom{n}{k} 
\end{align*}
where the equality holds for any fixed $S \in \calS$ by the symmetry of the construction and the last inequality holds since $2 \leq k < n$,  $\binom{n}{k} -1 \geq \frac{2}{3} \binom{n}{k}$. It just remains to upper bound the KL divergence.

\noindent\textbf{Bandit feedback}: Let $\tau \nu(S)$ represent the Bernoulli probability distribution describing $\max_{i \in a} X_i$ under distribution $\nu$. Then by the above calculations of the gap we have 
\begin{align*}
\hspace{.5in}&\hspace{-.5in}KL( \tau \nu(S) | \tau \nu^S (S) ) = KL( 1-(1-\mu)^k | 1-(1-\mu)^k + p' \mu^k ) \\
&\leq \frac{ p'^2 \mu^{2k}/2 }{ (1-(1-\mu)^k ) (1-\mu)^k -2 p' \mu^k [\tfrac{1}{2}-(1-\mu)^k]_+ } \\
&\leq \frac{ p'^2 \mu^{2k}/2 }{ (1-(1-\mu)^k ) ((1-\mu)^k - p' \mu^k) } 
\end{align*}
by applying Lemma~\ref{bernoulli_kl_bounds} and noting that
\begin{align*}
(1-(1-\mu)^k )& (1-\mu)^k -2 p' \mu^k [\tfrac{1}{2}-(1-\mu)^k]_+ \\ 
&\geq \min\{ (1-(1-\mu)^k ) (1-\mu)^k, (1-(1-\mu)^k ) (1-\mu)^k - p' \mu^k (1- 2(1-\mu)^k)\} \\
&\geq \min\{ (1-(1-\mu)^k ) (1-\mu)^k, (1-(1-\mu)^k ) [ (1-\mu)^k - p' \mu^k ]\} \\
&\geq (1-(1-\mu)^k ) ( (1-\mu)^k - p' \mu^k )
\end{align*}
Finally, let $p' \rightarrow p$. Setting $\mu = 1- 2^{-1/k} \geq \frac{1}{2k}$ we have $(1-\mu)^k = 1/2$ and $\Delta \geq p (2k)^{-k}$ so that $\E_\nu\left[ \sum_{S \in \calS \setminus S^*}T_S \right]  \geq \frac{1}{3} \binom{n}{k} \Delta^{-2} \log( \tfrac{1}{2\delta} ).$

\noindent\textbf{Marked-Bandit feedback}: Let $\tau \nu(S)$ represent the distribution over $\perp \cup S$ under $\nu$ such that if $W \sim \tau \nu(S)$ then $W$ is drawn uniformly at random from $\arg\max_{i \in S} X_i$ if $\max_{i \in S} X_i=1$, and $W = \perp$ otherwise. By the permutation invariance property of $\nu$ described in Step 2, we have for any $S \in \binom{[n]}{k}-S_*$ and $i \in S$
\begin{align*}
\P_{\nu}( W = i | W \neq \perp ) = \P_{\nu^S}( W = i | W \neq \perp ) = \frac{1}{k}
\end{align*} 
so that
\begin{align*}
\hspace{.5in}&\hspace{-.5in}KL( \tau \nu(S) | \tau \nu^S (S) ) = \sum_{w \in  \perp \cup S} \P_\nu(W=w) \log( \frac{\P_\nu(W=w)}{\P_{\nu^S}(W=w)}) \\
&= \P_\nu(W=\perp)\log( \frac{\P_\nu(W=\perp)}{\P_{\nu^S}(W=\perp)}) + \sum_{i \in S} \frac{1}{k} \P_\nu(W \neq \perp) \log( \frac{\P_\nu(W \neq \perp)}{\P_{\nu^S}(W \neq \perp)}) \\
&= KL\big( \P_\nu(\max_{i\in S} X_i=1) \big| \P_{\nu^S}(\max_{i\in S} X_i=1) \big).
\end{align*}
Thus, KL divergence for marked-bandit feedback is equal to that of simple bandit feedback.  

\noindent\textbf{Semi-Bandit feedback}: 

Let $P$ denote the law of the entire construction for independent distribution, and $Q$ the law of the construction for the distribution. The strategy is to upper bound the KL of $X$, together with the additional information from the hidden variables $Z_2,\dots,Z_k$. In this section, given $v \in \{0,1\}^k$, we use the compact notation $v^{(2;k)}$ to denote the vector $v_2,\dots,v_k$. We can upper bound the KL by 
\begin{eqnarray*}
KL(p(X),Q(X)) &\le& KL(P(X,Z^{(2;k)}),Q(X,Z^{(2;k)}))\\
&=& \sum_{x \in \{0,1\}^k,z^{(2;k)}\in \{0,1\}^{k-1}} P\left(X = x, Z^{(2;k)} = z^{(2;k)}\right) \log \left(\dfrac{P(X = x, Z^{(2;k)} = z^{(2;k)})}{Q(X = x, Z^{(2;k)} = z^{(2;k)})}\right) 
\end{eqnarray*}
By the law of total probability, the above is just
\begin{eqnarray*}
&& \sum_{x^{(2;k)} \in \{0,1\}^{2;k},z^{(2;k)}\in \{0,1\}^{k-1}} P\left(X^{(2;k)} = x^{(2;k)}, Z^{(2;k)} = z^{(2;k)} \right)  \\
&\times& \left(\sum_{x_1 \in \{0,1\}} P\left(X_1 = x_1 \big{|} X^{(2;k)} = x^{(2;k)}, Z^{(2;k)} = z^{(2;k)}\right) \log \left(\frac{P(X = x, Z^{(2;k)} = z^{(2;k)})}{Q(X = x, Z^{(2;k)} = z^{(2;k)})}\right)\right)
\end{eqnarray*}
Again, by the law of total probability, we have
\begin{eqnarray*}
&& \frac{P(X = x, Z^{(2;k)} = z^{(2;k)})}{Q(X = x, Z^{(2;k)} = z^{(2;k)}} \\
&=& \frac{P(X_1 = x_1 \big{|} X^{(2;k)} = x^{(2;k)}, Z^{(2;k)} = z^{(2;k)})}{Q(X_1 = x_1 \big{|} X^{(2;k)} = x^{(2;k)}, Z^{(2;k)} = z^{(2;k)}))} \times \frac{P(X^{(2;k)} = x^{(2;k)}, Z^{(2;k)} = z^{(2;k)})}{Q(X^{(2;k)} = x^{(2;k)}, Z^{(2;k)} = z^{(2;k)}))}
\end{eqnarray*}
Under our construction, $(X_2,\dots,X_k,Z_2,\dots,Z_k)$ have the same joint distribution under either $P$ or $Q$, so the second multiplicand in the second line in the above display is just $1$. Under the law $P$, $X_1$ is independent of $X_2,\dots,X_k,Z_2,\dots,Z_k$, so $P(X_1 = x_1 \big{|} X^{(2;k)} = x^{(2;k)}, Z^{(2;k)} = z^{(2;k)}) = P(X_1 = x_1)$. Under the dependent law $Q$, $X_1$ only depends on $X_2,\dots,X_k,Z_2,\dots,Z_k$ through $W(Z^{(2;k)}) := 1 \oplus_{i=2}^k Z_i \in \{0,1\}$. Hence, if we define the conditional $KL$'s: 
\begin{eqnarray*}
\mathrm{KL}_1 := KL\left(P(X_1),Q(X_1) \big{|} W(z^{(2;k)}) = 1 \right) = \sum_{x_1 \in \{0,1\}} p(X_1 = x_1) \log \left(\frac{P(X_1 = x_1)}{Q(X_1 = x_1 \big{|} W(z^{(2;k)}) = 1)} \right)
\end{eqnarray*}
and define $\mathrm{KL}_0 := KL\left(P(X_1),Q(X_1) \big{|} W(z^{(2;k)}) = 0\right)$ analogously, then 
\begin{eqnarray*}
&&\sum_{x_1 \in \{0,1\}} P(X_1 = x_1 \big{|} X^{(2;k)} = x^{(2;k)}, Z^{(2;k)} = z^{(2;k)}) \log (\frac{P(X = x, Z^{(2;k)} = z^{(2;k)})}{Q(X = x, Z^{(2;k)} = z^{(2;k)})})\\
 &=& \I\left(W(z^{(2;k)}) = 1\right)\mathrm{KL}_1 + \I\left(W(z^{(2;k)}) = 0\right)\mathrm{KL}_0
\end{eqnarray*}
Putting these pieces together,
\begin{eqnarray*}
KL(P(X,Z^{(2;k)}),Q(X,Z^{(2lk)})) &=&  \sum_{(x^{(2;k)},z^{(2;k)} \in \{0,1\}^{2(k-1)}}\I(W(Z^{(2;k)}) = 1)\mathrm{KL}_1 + \I(W(Z^{(2;k)}) = 0)\mathrm{KL}_0  \\
&=& \Pr(W(Z^{(2;k)}) = 1 )\mathrm{KL}_1 + \Pr(W(Z^{(2;k)}) = 0 )\mathrm{KL}_0\\
&=& \frac{1}{2}(\mathrm{KL}_1 + \mathrm{KL}_0)
\end{eqnarray*}
where the last line follows the parity $W(Z^{(2;k)})$ is Bernoulli $1/2$. A straightforward computation bounds $\mathrm{KL}_1$ and $\mathrm{KL}_0$.

\begin{claim} [Bound on $\mathrm{KL}_1$, $\mathrm{KL}_0$ ]
Let $\mathrm{KL}_0$ and $\mathrm{KL}_1$ be defined as above. Then $\mathrm{KL}_0 \leq \frac{p^2 \mu/2}{(1-p)(1-\mu(1-p))}$ and $\mathrm{KL}_1 \leq  \frac{p^2 \mu /2}{1-\mu(1+p)}$. 
\end{claim}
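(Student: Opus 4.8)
The plan is to reduce the claim to a pair of one-dimensional Bernoulli KL computations and then feed them into Lemma~\ref{bernoulli_kl_bounds}. First I would record the relevant marginals of the modified arm $X_1$. Under the independent law $P$ the construction gives $X_1 = Z_1 U_1$ with $Z_1\sim\text{Bernoulli}(1/2)$ and $U_1\sim\text{Bernoulli}(2\mu)$ (valid since $\mu\le 1/2$), so $\Pr_P(X_1=1)=\tfrac12\cdot 2\mu=\mu$. Under the dependent law $Q$ we have $X_1=U_1\widetilde Z_1$, where $\widetilde Z_1=W(z^{(2;k)})$ when $Y=1$ and $\widetilde Z_1=Z_1$ when $Y=0$, with $Y\sim\text{Bernoulli}(p)$ independent of $U_1,Z_1$ and of the parity $W(z^{(2;k)})=1\oplus\bigoplus_{i=2}^k Z_i$.

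The key computation is the conditional law of $X_1$ under $Q$ given the parity. Conditioning on $W=1$: with probability $p$ (when $Y=1$) we get $X_1=U_1$, contributing $\Pr(U_1=1)=2\mu$, and with probability $1-p$ (when $Y=0$) we get $X_1=U_1Z_1$, contributing $\mu$; hence $\Pr_Q(X_1=1\mid W=1)=2\mu p+\mu(1-p)=\mu(1+p)$. Conditioning on $W=0$: the $Y=1$ branch forces $X_1=0$ while the $Y=0$ branch again gives $\mu$, so $\Pr_Q(X_1=1\mid W=0)=\mu(1-p)$. Consequently $\mathrm{KL}_1=d(\mu,\mu(1+p))$ and $\mathrm{KL}_0=d(\mu,\mu(1-p))$, where $d(\cdot,\cdot)$ is the Bernoulli KL from Lemma~\ref{bernoulli_kl_bounds}.

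It then remains to apply the lemma with the correct choice of denominator in each case. For $\mathrm{KL}_1$ I would use the middle bound: with $x=\mu$ and $y=\mu(1+p)$ we have $y-x=\mu p\ge 0$ and $2x-1=2\mu-1\le 0$, so $[(y-x)(2x-1)]_+=0$ and the denominator collapses to $x(1-x)=\mu(1-\mu)$, giving $d\le\frac{p^2\mu^2/2}{\mu(1-\mu)}=\frac{p^2\mu/2}{1-\mu}\le\frac{p^2\mu/2}{1-\mu(1+p)}$ since $1-\mu\ge 1-\mu(1+p)$. For $\mathrm{KL}_0$ I would instead use the rightmost bound: with $x=\mu$ and $y=\mu(1-p)$ we have $y<x\le 1/2$, and since $z\mapsto z(1-z)$ is increasing on $[0,1/2]$ the minimum equals $y(1-y)=\mu(1-p)\bigl(1-\mu(1-p)\bigr)$, yielding exactly $d\le\frac{p^2\mu^2/2}{\mu(1-p)(1-\mu(1-p))}=\frac{p^2\mu/2}{(1-p)(1-\mu(1-p))}$.

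The hard part is not the final lemma application, which is routine once the parameters are known, but the conditional-law computation under $Q$: one must carefully mix over the hidden variable $Y$ and use that $\widetilde Z_1$ is \emph{deterministically} the parity $W$ when $Y=1$ yet an independent fair coin when $Y=0$, so that the two conditioning events $W=1$ and $W=0$ produce the asymmetric means $\mu(1\pm p)$. I would also note the sign bookkeeping that selects which face of Lemma~\ref{bernoulli_kl_bounds} to use in each case (the vanishing positive part for $\mathrm{KL}_1$, and the monotonicity of $z(1-z)$ on $[0,1/2]$ for $\mathrm{KL}_0$), and flag the boundary regime $\mu(1+p)\to 1$ where the bound degenerates to a vacuous $+\infty$; since the construction takes $p'>p$ with $p<1$ and $\mu\le 1/2$, the denominators stay positive in the regime of interest.
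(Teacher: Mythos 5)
Your proposal is correct and follows essentially the same route as the paper: compute $\Pr_Q(X_1=1\mid W=1)=\mu(1+p)$ and $\Pr_Q(X_1=1\mid W=0)=\mu(1-p)$ by mixing over $Y$, identify $\mathrm{KL}_1=d(\mu,\mu(1+p))$ and $\mathrm{KL}_0=d(\mu,\mu(1-p))$, and invoke Lemma~\ref{bernoulli_kl_bounds}. The only (immaterial) difference is that for $\mathrm{KL}_1$ you use the middle bound of the lemma with the vanishing positive part to get the denominator $\mu(1-\mu)$ before relaxing to $1-\mu(1+p)$, whereas the paper uses the rightmost $\min$ form; both yield the stated bound.
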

\begin{proof} 
Note $P(X_1 = 1) = \mu$, 
\begin{align*}
Q(X_1 = 1 \big{|} W(z^{(2;k)}) = 0) &= Q(X_1 = 1 \big{|} W(z^{(2;k)}) = 1, Y=1) p + Q(X_1 = 1 \big{|} W(z^{(2;k)}) = 1, Y=0) (1-p) \\ 
&= 0 \cdot p + \mu (1-p) = \mu(1-p)
\end{align*}
and 
\begin{align*}
Q(X_1 = 1 \big{|} W(z^{(2;k)}) = 1) &= Q(X_1 = 1 \big{|} W(z^{(2;k)}) = 1, Y=1) p + Q(X_1 = 1 \big{|} W(z^{(2;k)}) = 1, Y=0) (1-p) \\ 
&= 2\mu p + \mu (1-p) = \mu(1+p).
\end{align*}
Thus, by Lemma~\ref{bernoulli_kl_bounds} we have
\begin{align*}
\mathrm{KL}_0 &= \sum_{x_1 \in \{0,1\}} P(X_1 = x_1) \log \left(\frac{P(X_1 = x_1)}{Q(X_1 = x_1 \big{|} W(z^{(2;k)}) = 0)} \right) \\
&= d( \mu , \mu(1-p)) \leq \frac{(p \mu)^2/2}{\mu(1-p)(1-\mu(1-p))} = \frac{p^2 \mu/2}{(1-p)(1-\mu(1-p))}.
\end{align*}
and
\begin{align*}
\mathrm{KL}_1 &= \sum_{x_1 \in \{0,1\}} P(X_1 = x_1) \log \left(\frac{P(X_1 = x_1)}{Q(X_1 = x_1 \big{|} W(z^{(2;k)}) = 1)} \right) \\
&= d(\mu, \mu(1+p)) \leq \frac{(p \mu)^2/2}{\min\{\mu(1-\mu),\mu(1+p)(1-\mu(1+p))\}} \leq \frac{p^2 \mu /2}{1-\mu(1+p)}.
\end{align*}
\end{proof}

\begin{rem}Despite our seemingly arbitrary construction of random variables in Theorem~\ref{lower_bandit_identification} to produce the resulting measure $\nu$, Theorem~\ref{LowerBoundConverse} states that the joint distribution is unique and would be arrived at using any other construction that satisfied the same properties.
\end{rem}

\begin{rem}[An Upper Bound When $\mu = 1/2$]\label{rem:semibandit_tightness} Suppose that $\mu = 1/2$. Then, our construction implies $Z_i = X_i$ for $i \ge 2$, and thus our bound on the $KL$ is exact. In fact, we can use a simple parity estimator $W(S) = \oplus_{i \in S}X_i$ to distinguish between a subset $S$ of correlated and uncorrelated arms. When $S$ is an independent set, $W(S) \sim \text{Bernoulli}(1/2)$. However, a simple computation reveals that $W(S^*) \sim \text{Bernoulli}(1/2 + p/2)$. Thus, using a parity estimator reduces our problem to finding one coin with bias $p/2$ in a bag of $\binom{n}{k}$ unbiased coins, whose difficulty exactly matches our problem

Surprisingly, Theorem~\ref{LowerBoundConverse} tells us that the construction outlined in this lower bound is the \textbf{unique} construction which yields $k-1$-wise independent marginals of mean $\mu = 1/2$, with gap $p2^{-k}$; in other words, in any $k-1$-wise independent construction with $\mu = 1/2$, the parity estimator is optimal. 
\end{rem}

\subsection{Proof of Lemma~\ref{bernoulli_kl_bounds}}
\begin{proof}[Proof of Lemma~\ref{bernoulli_kl_bounds}]
	If $f(z) = d(z,y)$ then $f'(z) = \log(\frac{z}{1-z}) - \log(\frac{y}{1-y})$, and $f''(z) = \frac{1}{z(1-z)}$ so 
	\begin{align*}
	 2(y-x)^2 \leq  \frac{(y-x)^2/2}{ \sup_{z \in [x,y]} z(1-z)} \leq  d(x,y) \leq \frac{(y-x)^2/2}{ \inf_{z \in [x,y]} z(1-z)}.
	\end{align*}
	If $\epsilon = y-x$ then
	\begin{align*}
	 \inf_{z \in [x,y]} z(1-z) = \inf_{\epsilon \in [0,y-x]} x(1-x) + \epsilon(1-2x) = x(1-x) - [(y-x)(2x-1)]_+
	\end{align*}
\end{proof}

\section{Proof of Lower Bound Converse}

To prove the above proposition, we need a convenient way of describing all feasible probability distributions over $\{0,1\}^k$ which are specified on their $k-1$ marginals. To this end, we introduce the following notation: We shall find it convenient to index the entries of vectors $w \in \R^{k-1}$ by binary strings $t \in \{0,1\}^{k-1}$. At times, we shall need to ``insert'' indices into strings of length $k-2$, as follows: For $u \in \{0,1\}^{k-2}$ and $j \in [k-1]$, denote by $u \oplus_j 0$ the string in $\{0,1\}^{k-1}$ obtained by inserting a $0$ in the $j$-th position of $u$.  We define $u \oplus_j 1$ similarly.

\begin{lem}\label{HelperLem} 
	Let $\Pr_0$ be any distribution over $\{0,1\}^k$. Then, a probability distribution $\Pr$ agrees with $\Pr_0$ on their $k-1$ marginals if and only if, for all binary strings $t \in \{0,1\}^{k-1}$, $\Pr$ is given by
	\begin{eqnarray}
	\Pr(X_{-k} = t,X_k = 0) = w(t) 
	\end{eqnarray}
	where $w \in \R^{2^{k-1}}$ satisfies the following linear constraints:
	\begin{eqnarray*}
	  \forall t \in \{0,1\}^{k-1}: && 0 \le w(t) \le \Pr_0(X_{-k} = t)\\
	 \forall j \in [k-1], u \in \{0,1\}^{k-2} && w(u \oplus_j 0 ) + w(u \oplus_j 1) = \Pr_0( X_{-\{j,k\}} = u_{-j}, X_k = 0) \\
	\end{eqnarray*}
\end{lem}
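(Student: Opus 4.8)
The plan is to prove both directions from a single observation: once a candidate distribution $\Pr$ is required to match $\Pr_0$ on the marginal obtained by dropping the $k$-th coordinate, the entire law on $\{0,1\}^k$ is pinned down by the $2^{k-1}$ numbers $w(t) := \Pr(X_{-k} = t, X_k = 0)$, $t \in \{0,1\}^{k-1}$. Indeed, matching that marginal forces $\Pr(X_{-k} = t, X_k = 1) = \Pr_0(X_{-k} = t) - w(t)$ for every $t$, so all $2^k$ atoms of $\Pr$ are determined by $w$ and $\Pr_0$. The remaining content of the lemma is then purely bookkeeping: translating ``$\Pr$ is a valid distribution agreeing with $\Pr_0$ on the other $(k-1)$-marginals'' into the stated box and linear constraints on $w$.

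First I would prove the ``only if'' direction. Suppose $\Pr$ agrees with $\Pr_0$ on all $(k-1)$-marginals and set $w(t) = \Pr(X_{-k}=t, X_k=0)$. Nonnegativity gives $w(t) \ge 0$, while matching the dropped-$k$ marginal gives $w(t) \le \Pr(X_{-k}=t) = \Pr_0(X_{-k}=t)$; this is exactly the box constraint. For the equality constraint, fix $j \in [k-1]$ and $u \in \{0,1\}^{k-2}$. Summing the $X_k=0$ atom over the two values of the $j$-th coordinate, and noting that inserting $0$ or $1$ into position $j$ of $u$ produces precisely the strings $u\oplus_j 0$ and $u \oplus_j 1$, yields
\[
w(u\oplus_j 0) + w(u \oplus_j 1) = \Pr(X_{-\{j,k\}} = u, X_k = 0),
\]
and agreement on the dropped-$j$ marginal identifies the right-hand side with $\Pr_0(X_{-\{j,k\}} = u, X_k = 0)$, which is the asserted equality.

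For the ``if'' direction, given $w$ satisfying the constraints I would define $\Pr(X_{-k}=t, X_k=0) = w(t)$ and $\Pr(X_{-k}=t, X_k=1) = \Pr_0(X_{-k}=t) - w(t)$. The box constraint makes every atom nonnegative, and summing over $t$ gives total mass $\sum_t \Pr_0(X_{-k}=t) = 1$, so $\Pr$ is a genuine distribution agreeing with $\Pr_0$ on the dropped-$k$ marginal by construction. It then remains to verify agreement on the dropped-$j$ marginal for each $j \in [k-1]$, i.e. $\Pr(X_{-\{j,k\}}=u, X_k=b) = \Pr_0(X_{-\{j,k\}}=u, X_k=b)$ for both $b$. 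The case $b=0$ is exactly the linear constraint. The key simplification is that $b=1$ is then automatic: expanding $\Pr(X_{-\{j,k\}}=u,X_k=1) = \sum_{c}\bigl[\Pr_0(X_{-k}=u\oplus_j c) - w(u\oplus_j c)\bigr]$, the $\Pr_0$ terms telescope to $\Pr_0(X_{-\{j,k\}}=u)$ while, by the $b=0$ constraint, the $w$ terms sum to $\Pr_0(X_{-\{j,k\}}=u, X_k=0)$; subtracting gives $\Pr_0(X_{-\{j,k\}}=u, X_k=1)$.

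Since the argument is entirely linear-algebraic I expect no analytic difficulty. The only place an error could creep in is the combinatorial bookkeeping: getting the insertion operation $u\oplus_j c$ right, checking that the listed constraints enumerate all $k-1$ of the dropped-$j$ marginals while the box constraint simultaneously encodes both nonnegativity and the dropped-$k$ marginal, and recognizing that one need only impose the $X_k=0$ halves of the remaining matching conditions — the $X_k=1$ halves being forced by conservation of mass within each fibre together with the already-imposed dropped-$k$ agreement. That observation is the crux that keeps the parametrization clean, and verifying it carefully is the main (modest) obstacle.
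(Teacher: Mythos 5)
Your proof is correct and follows essentially the same route as the paper: parametrize $\Pr$ by the $X_k=0$ fiber $w(t)$, with the $X_k=1$ fiber forced to be $\Pr_0(X_{-k}=t)-w(t)$ by agreement on the dropped-$k$ marginal, then translate nonnegativity and the remaining dropped-$j$ marginal conditions into the box and linear constraints. Your explicit telescoping check that the $X_k=1$ halves of the dropped-$j$ conditions are automatic is a detail the paper's proof glosses over, but the argument is the same.
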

\begin{rem}
Note that the above lemma makes no assumptions about $k-1$ independence, only that the $k-1$ marginals are constrained 
\end{rem}

\begin{proof}[Proof of Theorem~\ref{LowerBoundConverse}]
Let $\Pr_0$ denote the product measure on $X_1,\dots,X_k$, and $\Pr$ denote our coupled distribution. Fix $\mu \in [0,1]$. For $p \in \{0,1,\dots,k-1\}$, define the probability mass function 
\begin{eqnarray}
\psi(p) := \mu^p (1-\mu)^{k-1-p}
\end{eqnarray}
Further, for $u$ and $t$ in $\{0,1\}^{k-2}$ and $\{0,1\}^{k-1}$, respectively, define the hamming weights $H(t) = \sum_i t_i$ and $H(u) = \sum_i u_i$. 

Since our distribution is $k-1$ wise independent, and each entry $X_i$ has mean $\mu$, we have $\Pr(X_{-k} = t) = \Pr_0(X_{-k} = t) = \psi(H(t))$. Moreover, 
\begin{eqnarray*}
\Pr_0(X_{-\{j,k\}} = u_{-j}, X_k = 0) &=& (1-\mu)\Pr_0(X_{-\{j,k\}} = u_{-j})\\
&=& (1-\mu)\mu^{H(u)}(1-\mu)^{k-2 - H(u)} \\
&=& \mu^{H(u)}(1-\mu)^{k-1-H(u)} = \psi(H(u))
\end{eqnarray*}
Thus, our feasibility set is precisely
\begin{equation}
\begin{aligned}
	  \forall t \in \{0,1\}^{k-1}: && 0 \le w(t) \le \psi(H(t))\\
	 \forall j \in [k-1], u \in \{0,1\}^{k-2} && w(u \oplus_j 0 ) + w(u \oplus_j 1) = \psi(H(u)) \\
\end{aligned}
\end{equation}

\noindent The equality constraints show there is only one degree of freedom, which we encode into $w(\mathbf{0})$:
	\begin{claim}\label{BigPhiClaim} $w$ satisfies the equality constraints of the LP if and only if, for  all $t \in \{0,1\}^{k-1}$ of weight $H(t) = p$, 
	\begin{eqnarray}
	w(t) = \left(-1\right)^{p} w\left(\mathbf{0}\right) + \left(-1\right)^{p-1}\Phi\left(p\right)
	\end{eqnarray}
	where $\Phi(p) = \sum_{i = 0}^{p-1} (-1)^{i} \psi(i)$, so that $\Phi(0)=0$. Note that $\Phi$ satisfies the identity
	\begin{eqnarray}\label{RecursePhi}
	\Phi(p) = \left(-1\right)^{p-1}\psi\left(p-1\right) + \Phi\left(p-1\right)
	\end{eqnarray}
	\end{claim}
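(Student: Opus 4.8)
The recursion \eqref{RecursePhi} is immediate from the definition of $\Phi$, since $\Phi(p) - \Phi(p-1) = (-1)^{p-1}\psi(p-1)$, so the real content is the closed form for $w(t)$, which I would prove by induction on the Hamming weight $p = H(t)$. The starting observation is that the equality constraints are exactly a family of \emph{pairwise flip relations}: whenever two strings $s,s' \in \{0,1\}^{k-1}$ agree in all but one coordinate $j$, writing $u$ for their common $k-2$ bits, the constraint reads $w(s) + w(s') = \psi(H(u))$. In particular, clearing a single $1$-bit of a weight-$p$ string $t$ to obtain a weight-$(p-1)$ string $t'$ gives the recurrence $w(t) = \psi(p-1) - w(t')$, which drives the induction.

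For the forward direction, I would assume $w$ satisfies the constraints. The base case $p=0$ reduces to $w(\mathbf{0}) = w(\mathbf{0})$ since $\Phi(0)=0$. For the inductive step, fix $t$ with $H(t)=p$, pick any coordinate with $t_j = 1$, and flip it to obtain $t'$ of weight $p-1$; the induction hypothesis gives $w(t') = (-1)^{p-1}w(\mathbf{0}) + (-1)^{p-2}\Phi(p-1)$, and combining this with the flip relation and \eqref{RecursePhi} yields
\[
w(t) = \psi(p-1) - w(t') = (-1)^p w(\mathbf{0}) + (-1)^{p-1}\Phi(p),
\]
as claimed. The crux I would highlight here is \emph{well-definedness}: the displayed value is independent of which $1$-bit of $t$ we chose to clear, precisely because the induction hypothesis already asserts that every weight-$(p-1)$ string carries the same value $w(t')$. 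This is what forces $w(t)$ to depend on $t$ only through its weight and collapses the $2^{k-1}$ unknowns to the single free parameter $w(\mathbf{0})$.

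For the reverse direction, I would simply substitute the claimed formula into an arbitrary flip relation. If $s'$ has weight $q$ and $s$ has weight $q+1$ (differing in one coordinate, with $q = H(u)$), then the $w(\mathbf{0})$ contributions cancel because $(-1)^{q+1} + (-1)^q = 0$, and what remains is $(-1)^q\big(\Phi(q+1) - \Phi(q)\big) = \psi(q)$ by \eqref{RecursePhi}; hence every constraint holds for any value of $w(\mathbf{0})$.

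I expect the only genuinely delicate point to be the consistency issue folded into well-definedness. The constraint system is a priori over-determined --- each weight-$p$ string participates in $p$ downward flip relations (to weight $p-1$) and $k-1-p$ upward ones (to weight $p+1$) --- so one must check these cannot conflict. The induction dissolves this automatically: once all lower-weight strings are known to share a common value, every downward relation assigns $t$ the \emph{same} number, and the upward relations are then exactly what the reverse direction verifies. No separate compatibility calculation is needed beyond the single identity \eqref{RecursePhi}.
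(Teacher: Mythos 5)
Your proposal is correct and follows essentially the same route as the paper: induction on the Hamming weight for the ``only if'' direction via the single-bit flip relation $w(t) = \psi(p-1) - w(t')$ combined with the recursion \eqref{RecursePhi}, and direct substitution into an arbitrary flip constraint for the ``if'' direction, where the $w(\mathbf{0})$ terms cancel. The only cosmetic difference is that the paper builds an explicit chain $t_0,\dots,t_p$ from $\mathbf{0}$ to $t$ whereas you flip one bit and invoke the induction hypothesis directly; your added remark on well-definedness is implicit in the paper's induction but is a fair point to make explicit.
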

	
	\noindent Hence, we can replace the equality constraints by the explicit definitions of $w(t)$ in terms of $w(\mathbf{0})$ and $\Phi(p)$. This leads to the next claim:
	\begin{claim}\label{EqualityClaim}
	$w$ is feasible precisely when
	\begin{eqnarray}
	\max_{0\le p \le k \text{ even }} \Phi(p) \le w(\mathbf{0}) \le \min_{1\le p \le k \text{ odd }} \Phi(p) 
	\end{eqnarray}
	\end{claim}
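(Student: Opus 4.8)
The plan is to eliminate the equality constraints entirely using Claim~\ref{BigPhiClaim} and then read off the remaining inequality constraints as upper and lower bounds on the single free parameter $w(\mathbf{0})$. By Claim~\ref{BigPhiClaim}, every $w$ satisfying the equality constraints is determined by $w(\mathbf{0})$ through $w(t) = (-1)^{p} w(\mathbf{0}) + (-1)^{p-1}\Phi(p)$ with $p = H(t)$, so the box constraints $0 \le w(t) \le \psi(H(t))$ depend on $t$ only through its weight $p \in \{0,\dots,k-1\}$. First I would substitute this expression into each of the two inequalities and split on the parity of $p$.

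Then I carry out the four resulting cases. For the nonnegativity constraint $w(t) \ge 0$: when $p$ is even it reads $w(\mathbf{0}) \ge \Phi(p)$, and when $p$ is odd it reads $w(\mathbf{0}) \le \Phi(p)$. For the upper box constraint $w(t) \le \psi(p)$ I would invoke the recursion~\eqref{RecursePhi}, rewritten as $\Phi(p+1) = (-1)^{p}\psi(p) + \Phi(p)$, to absorb the $\pm \psi(p)$ into a single value of $\Phi$: when $p$ is even the constraint becomes $w(\mathbf{0}) \le \Phi(p) + \psi(p) = \Phi(p+1)$ with $p+1$ odd, and when $p$ is odd it becomes $w(\mathbf{0}) \ge \Phi(p) - \psi(p) = \Phi(p+1)$ with $p+1$ even. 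The upshot is that every constraint is of the form $w(\mathbf{0}) \ge \Phi(q)$ with $q$ even, or $w(\mathbf{0}) \le \Phi(q)$ with $q$ odd.

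The last step is the index bookkeeping, which I expect to be the main (if modest) obstacle. I would collect the two families of lower bounds, namely $\Phi(p)$ for even $p \in \{0,\dots,k-1\}$ from the nonnegativity constraints together with $\Phi(p+1)$ for odd $p \in \{0,\dots,k-1\}$ from the upper box constraints, and check that their arguments sweep out exactly the even integers in $\{0,\dots,k_{even}\}$; symmetrically the upper bounds sweep out the odd integers in $\{1,\dots,k_{odd}\}$. Splitting on the parity of $k$ confirms that the extreme arguments are precisely $k_{even}$ and $k_{odd}$ (for instance, when $k$ is even, $p=k-1$ is odd, so the box constraint contributes $\Phi(k)$ to the lower bounds, and $k = k_{even}$). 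Taking the tightest bound in each family yields $\max_{0 \le p \le k,\ p \text{ even}} \Phi(p) \le w(\mathbf{0}) \le \min_{1 \le p \le k,\ p \text{ odd}} \Phi(p)$, which is exactly the claimed feasibility condition; since each equivalence above is reversible, feasibility holds if and only if this condition holds.
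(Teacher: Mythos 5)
Your proposal is correct and follows essentially the same route as the paper's proof: substitute the parametrization from Claim~\ref{BigPhiClaim} into the box constraints, split on the parity of $p$, use the recursion $\Phi(p+1) = (-1)^{p}\psi(p) + \Phi(p)$ to convert the $\pm\psi(p)$ terms into values of $\Phi$, and verify that the resulting lower and upper bounds sweep out exactly the even indices in $\{0,\dots,k\}$ and odd indices in $\{1,\dots,k\}$ respectively. The index bookkeeping you flag as the main obstacle checks out in both parities of $k$, so nothing is missing.
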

	
	\noindent We now establish a closed form solution for $\Phi(p)$ when $\mu < 1/2$, and parity-wise monotonicity when $\mu \ge 1/2$:
	\begin{claim}\label{ClaimPhiComp}
	If $\mu < 1/2$, we have $\Phi(p) = (1-\mu)^{k} \left(1 - (\frac{-\mu}{1-\mu})^{p}\right)$, so $\Phi(p)$ is decreasing for odd $p$ and increasing for even $p$. If $\mu \ge 1/2$, $\Phi(p)$ is nondecreasing for odd $p$ and nonincreasing for even $p$
	\end{claim}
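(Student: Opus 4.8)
The plan is to prove the two assertions of Claim~\ref{ClaimPhiComp} in turn: first the closed form for $\Phi(p)$ when $\mu<1/2$, and then the parity-wise monotonicity, which I would obtain uniformly in $\mu$ from a single second-difference computation.

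First I would establish the closed form. By definition $\Phi(p)=\sum_{i=0}^{p-1}(-1)^i\psi(i)=\sum_{i=0}^{p-1}(-1)^i\mu^i(1-\mu)^{k-1-i}$. Factoring out $(1-\mu)^{k-1}$ turns this into a finite geometric series with ratio $r:=\frac{-\mu}{1-\mu}$, namely $\Phi(p)=(1-\mu)^{k-1}\sum_{i=0}^{p-1}r^i$. Since $r\ne 1$ for every $\mu\in[0,1)$, the geometric sum equals $\frac{1-r^p}{1-r}$, and the one-line simplification $\frac{1}{1-r}=1-\mu$ yields $\Phi(p)=(1-\mu)^k\big(1-r^p\big)=(1-\mu)^k\big(1-(\tfrac{-\mu}{1-\mu})^p\big)$, exactly as claimed. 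This identity is in fact valid for all $\mu\in[0,1)$; I will only feature it in the regime $\mu<1/2$, as in the statement.

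Next I would prove monotonicity within a fixed parity class by computing the same-parity second difference. Applying the recursion \eqref{RecursePhi} twice gives $\Phi(p+2)-\Phi(p)=(-1)^p\big(\psi(p)-\psi(p+1)\big)$, and a direct evaluation $\psi(p)-\psi(p+1)=\mu^p(1-\mu)^{k-2-p}(1-2\mu)$ produces the clean identity $\Phi(p+2)-\Phi(p)=(-1)^p\,\mu^p(1-\mu)^{k-2-p}(1-2\mu)$. (Equivalently, differencing the closed form gives $\Phi(p+2)-\Phi(p)=(1-\mu)^k r^p(1-r^2)$, whose sign is governed by $(-1)^p$ and by $1-r^2=\frac{1-2\mu}{(1-\mu)^2}$; this agrees with the first expression.) Its sign factors into $(-1)^p$ and the sign of $1-2\mu$, with a positive prefactor since $\mu<1$. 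When $\mu<1/2$ we have $1-2\mu>0$, so the difference is positive for even $p$ and negative for odd $p$, i.e.\ $\Phi$ strictly increases along even indices and strictly decreases along odd indices. When $\mu>1/2$ the factor $1-2\mu<0$ flips both signs, giving $\Phi$ nondecreasing along odd indices and nonincreasing along even indices; and when $\mu=1/2$ the factor $1-2\mu$ vanishes, so $\Phi$ is constant on each parity class, consistent with the ``non-'' qualifiers.

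There is no serious obstacle here beyond bookkeeping. The only stylistic choice is whether to lean on the closed form (which actually remains valid for $\mu\ge1/2$) or to keep the two regimes visibly separate as the statement does; routing the monotonicity through the recursion \eqref{RecursePhi} accomplishes the latter, since the identity $\Phi(p+2)-\Phi(p)=(-1)^p\mu^p(1-\mu)^{k-2-p}(1-2\mu)$ makes no reference to any closed form and dispatches $\mu<1/2$, $\mu=1/2$, and $\mu>1/2$ uniformly through the sign of $1-2\mu$. This delivers precisely the parity-wise directions needed afterward in Claim~\ref{EqualityClaim} to locate $\max_{p\text{ even}}\Phi(p)$ at $p=k_{even}$ and $\min_{p\text{ odd}}\Phi(p)$ at $p=k_{odd}$.
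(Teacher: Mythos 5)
Your proposal is correct and follows essentially the same route as the paper: the closed form comes from summing the alternating geometric series $\Phi(p)=(1-\mu)^{k-1}\sum_{i=0}^{p-1}\bigl(\tfrac{-\mu}{1-\mu}\bigr)^{i}$, and the parity-wise monotonicity is read off from consecutive terms of that series. Your explicit second-difference identity $\Phi(p+2)-\Phi(p)=(-1)^{p}\mu^{p}(1-\mu)^{k-2-p}(1-2\mu)$ just makes quantitative what the paper asserts directly from the partial sums, and handles the $\mu=1/2$ boundary cleanly.
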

	
	To conclude, we note that when $\mu \ge 1/2$, the fact that  $\Phi(p)$ is nondecreasing for odd $p$ and nonincreasing for even $p$ implies that 
	\begin{eqnarray*}
		\max_{0 \le p \le k \text{ even }} \Phi(p) \le w(\mathbf{0}) \le \min_{1 \le p \le k \text{ odd }} \Phi(p) &\iff& \Phi(0) \le w(\mathbf{0}) \le \Phi(1) \\
		 &\iff& 0 \le w(\mathbf{0}) \le \psi(0) \\
		  &\iff& 0 \le w(\mathbf{0}) \le (1-\mu)^{k-1} 
	\end{eqnarray*}
	When $\mu < 1/2$, the fact that $\Phi(p)$ is decreasing for odd $p$ and increasing for even $p$ implies that 
	\begin{eqnarray*}
		&&\max_{0 \le p \le k \text{ even }} \Phi(p) \le w(\mathbf{0}) \le \min_{1 \le p \le k \text{ odd }} \Phi(p) \iff \Phi(k_{odd}) \le w(\mathbf{0}) \le \Phi(k_{even}) \\
		 &\iff& (1-\mu)^k \left(1 - \left(\frac{\mu}{1-\mu}\right)^{k_{even}}\right) \le w(\mathbf{0}) \le (1-\mu)^k \left(1 + \left(\frac{\mu}{1-\mu}\right)^{k_{odd}}\right)
	\end{eqnarray*}
	Since $w(\mathbf{0}) = \Pr(X_1,\dots,X_k = 0)$, we are done.

\end{proof}

\subsection{Proofs}

\begin{proof}[Proof Of Lemma~\ref{HelperLem}]
	We can consider the joint distribution of $(X_1,\dots, X_k)$ as a vector in the $2^{k}$ simplex. However, there are many constraints: in particular, the joint distribution of $X_1,\dots,X_{k-1}$ is entirely determined by the $k-1$-marginals of the distribution. In fact, if $\Pr$ is a distribution over $\{0,1\}^k$, then it must satisfy 
	\begin{eqnarray*}
	\Pr(X_{-k} = t_{-k}, X_k = 1) + \Pr(X_{-k} = t_{-k}, X_k = 0) &=& \Pr(X_{-k} = t_{-k}).
	\end{eqnarray*}
	Hence, without any loss of generality, we may encode any arbitrary probability distribution on $\{0,1\}^k$ by 
	\begin{eqnarray}
	\Pr (X = t) := \begin{cases} w(t_{-k}) & t_k = 0 \\
	\Pr_0(X_{-k} = t_{-k}) - w(t_{-k}) & t_k = 1
	\end{cases}
	\end{eqnarray}
	for a suitable $w \in \R^{2^{k-1}}$. This defines $\Pr$ on the atomic events $\{X = t\}$, and we extend $\Pr$ to all further events by additivity. We now show that the constraints on the Lemma hold if and only if $w$ induces a proper probability distribution $\Pr$ whose $k-1$ marginals coincide with $\Pr$. 

	Recall that $\Pr$ is a proper distribution if and only if it is nonnegative, normalized to one, monotonic, and additive\footnote{ As $X$ has finite support, we don't need to worry about such technical conditions as $\sigma$-additivity}. $\Pr$ satisfies additivity by construction. Moreover, by definition $\sum_{t \in \{0,1\}^{k}} \Pr(X = t) = \sum_{t_{-k} \in \{0,1\}^{k-1}} \Pr_0(X_{-k} = t_{-k}) = 1$, so $\Pr$ is normalized. Finally, monotonicity will follow as long as we establish non-negativity of $\Pr$ on the atomic events $\{X = t\}$. But the constraint that $\Pr (X = t)$ is nonnegative holds if and only if
	\begin{eqnarray}
	0 \le w(t_1,\dots,t_{k-1}) \le \Pr_0(X_{-k} = t_{-k}).
	\end{eqnarray}
	On the other hand, the constraint that $\Pr$'s $k-1$ marginals coincide with $\Pr_0$ is simply that 
	\begin{eqnarray*}
	&&w(t_1,\dots,t_{j-1},0,t_{j+1},\dots,t_{k-1}) + w(t_1,\dots,t_{j-1},1,t_{j+1},\dots,t_{k-1}) \\
	&=& \Pr_0(X_1 = t_1,\dots, X_{j-1} = t_{j-1}, X_{j+1} = t_{j+1},\dots, X_{k-1}= t_{k-1}, X_k = 0)
	\end{eqnarray*}
	which can be expressed more succinctly using the concatenation notation $w(u \oplus_j 0 ) + w(u \oplus_j 1) = \Pr_0( X_{-\{j,k\}} = u_{-j}, X_k = 0) $.
\end{proof}

\begin{proof}[Proof of Claim~\ref{BigPhiClaim}]
		First, we prove ``only if'' by induction on $H(t)$. For $H(t) = 0$, the claim holds since $\Phi(0) = 0$. For a general  $t \in \{0,1\}^{k-1}$ such that $H(t) = p \ge 1$, we can construct a sequence $t_0,\dots,t_{p} \in \{0,1\}^{k-1}$ such that $t_0 = 0$, $t_{p} = t$, and each string $t_s$ is obtained by ``flipping on'' a zero in the string $t_{s-1}$ to 1, that is, there is a string $u_s \in \{0,1\}^{k-2}$ such that $t_s = u_s \oplus_{j_{s}} 1$ and $t_{s-1} = u_s \oplus_{j_s} 0$. Thus, our equality constraints imply that
		\begin{eqnarray*}
		w(t_{p-1}) + w(t) &=& w(t_{p-1}) + w(t_p) \\
		&=& w(u_s \oplus_{j_{s}} 1) + w(u_s \oplus_{j_s} 0)  \\
		&=& \psi(H(u)) = \psi(p-1)	.
		\end{eqnarray*}
		Hence, we get the recursion $w(t_{p}) = \psi(p-1) - w(t_{p-1})$, which by the inductive hypothesis on $t_{p - 1}$ and Equation~\ref{RecursePhi} imply that
		\begin{eqnarray*}
		w(t) &=& \psi(p-1) - \left((-1)^{p-1} w(\mathbf{0}) + (-1)^{p-2}\Phi(p-1)\right)\\
		&=& (-1)^{p}w(\mathbf{0}) + (-1)^{p-1}\Phi(p-1) + \psi(p-1)\\
		&=& (-1)^{p}w(\mathbf{0}) + (-1)^{p-1}(\Phi(p-1) + (-1)^{p-1} \psi(p-1))\\
		&=&  (-1)^{p}  w(\mathbf{0}) + (-1)^{p-1}\Phi(p)
		\end{eqnarray*}
		as needed. Next, we prove the ``if'' direction. Let $u \in \{0,1\}^{k-2}$ have weight $p$. Then
		\begin{eqnarray*}
		w(u \oplus 0) + w(u \oplus 1) &=& (-1)^{p}  w(\mathbf{0}) + (-1)^{p-1}\Phi(p) + (-1)^{p+1}  w(\mathbf{0}) + (-1)^{p}\Phi(p+1)\\
		&=& (-1)^{p-1} \Phi(p) + (-1)^{p}\Phi(p+1)\\
		&=& (-1)^{p-1} \Phi(p) + (-1)^{p}\left(\Phi(p+1 - 1) + (-1)^{p}\psi(p+1 - 1)\right)\\
		&=& \left((-1)^{p-1} +(-1)^{p}\right) \Phi(p) + (-1)^{2p}\psi(p) = \psi(p)
		\end{eqnarray*}
		as needed. 
	\end{proof}

\begin{proof}[Proof of Claim~\ref{EqualityClaim}]
		Our feasibility set is precisely is the set of $w(\mathbf{0})$ such that $0 \le w(\mathbf{0}) \le \psi(0)$, and for all $p \in \{1,2,\dots,k-1\}$
		\begin{eqnarray}
		0 \le (-1)^{p} w(\mathbf{0}) + (-1)^{p-1}\Phi(p) \le \psi(p).
		\end{eqnarray}
		Suppose first that $p$ is even. If $p$ is greater than $1$, then the above constraint together with Claim~\ref{BigPhiClaim} imply
		\begin{eqnarray*}
		\Phi(p) \le w(\mathbf{0}) \le \psi(p) + \Phi(p) = (-1)^{(p+1) - 1}\psi(p) + \Phi(p) = \Phi(p+1).
		\end{eqnarray*}
		If $p$ is $0$, then $\Phi(0) = 0$ and $\Phi(1) = \psi(0)$, so the constraint $0 \le w(\mathbf{0}) \le \psi(0)$ is equivalent to $\Phi(p) \le  w(\mathbf{0}) \le  \Phi(p+1)$ for $p=0$.

		On the other hand, when $p$ is odd, we have $w(\mathbf{0}) \le \Phi(p)$, whilst
		\begin{eqnarray}
		 w(\mathbf{0}) \ge \Phi(p) - \psi(p) = \Phi(p) + (-1)^{(p+1)-1}\psi(p) = \Phi(p+1).
		\end{eqnarray}
		In other words, $w(\mathbf{0}) \le \Phi(p)$ for all $p$ which are either odd and between $1$ and $k-1$, or $p$ of the form $p = q+1$ where $q$ is even and between $1$ and $k-1$. This is precisely the set of all odd $p$ in $1,\dots,k$. By the same token, $w(\mathbf{0}) \ge \Phi(p)$ for all even $p$ in $\{1,\dots,k\}$. Taking the intersection of these lower and upper bounds on $w(\mathbf{0})$ yields
		\begin{eqnarray}
		\max_{0 \le p \le k \text{ even }} \Phi(p) \le w(\mathbf{0}) \le \min_{1 \le p \le k \text{ odd }} \Phi(p) .
		\end{eqnarray}
	\end{proof}

\begin{proof}[Proof of Claim~\ref{ClaimPhiComp}] Let $\rho = \frac{\mu}{1-\mu}$. We can write $\Phi$ yields as  geometric series 
		\begin{eqnarray*}
		\Phi(p) &=& \sum_{i = 0}^{p-1} (-1)^{i} \psi(i) \\
		&=& \sum_{i = 0}^{p-1} (-1)^{i} \cdot \mu^{i} (1-\mu)^{k-1-i} \\
		&=& (1-\mu)^{k-1}\sum_{i = 0}^{p-1} (-1)^{i} (\frac{\mu}{1-\mu})^{i} \\
		&=& (1-\mu)^{k-1}\sum_{i = 0}^{p-1} (-\rho)^{i}
		\end{eqnarray*}
		When $\mu \ge 1/2$, $\rho \ge 1$, and thus this series is nondecreasing for odd $p$ and nonincreasing for even $p$. When $\rho < 1/2$, the series is decreasing for odd $p$ and increasing for even $p$ and in fact we have
		\begin{eqnarray*}
		\Phi(p)&=& (1-\mu)^{k-1} \frac{1 - (-\rho)^{p}}{1 + \rho }\\
		\Phi(p)&=& (1-\mu)^{k-1} \frac{1 - (-\frac{\mu}{1-\mu})^{p}}{1 + \frac{\mu}{1-\mu}  }\\
		&=& (1-\mu)^{k} \left(1 - (\frac{-\mu}{1-\mu})^{p}\right)
		\end{eqnarray*}
	\end{proof}

\section{Proof of Theorem~\ref{lower_bound_independent}: Lower Bound for Independent Arms}
As in the proof of Theorem~\ref{lower_bandit_identification}, let $\nu(a)$ describe the joint probability distribution of $\nu$ restricted to the set $i \in a$. Note that $\nu(a) = \prod_{i \in a} \nu_i$.  And for any $a \in A$ let $\tau \nu(a)$ represent the Bernoulli probability distribution describing $\max_{i \in a} X_i$ under distribution $\nu$. Let $\epsilon>0$. For each $j \in [n]$ let $\nu^j$ be a product distirbution of Bernoullis fully defined by its marginals $\mu_i^j := \E_{\nu_i^j}[X_i]$ and
\begin{align*}
\mu_i^j = \begin{cases}  \mu_k+\epsilon & \text{if $i =j$ and $i > k$} \\ 
			 	    \mu_{k+1}-\epsilon & \text{if $i=j$ and $i \leq k$} \\
				    \mu_i & \text{if $i \neq j$.}\end{cases} 
\end{align*}
By Lemma 1 of \cite{kaufmann2014complexity}, for every $j \in [n]$
\begin{align*}
\sum_{a \in \binom{[n]}{p}} \E_\nu[ T_{a} ]  KL( \tau \nu(a) |  \tau \nu^j (a) ) \geq \log( \tfrac{1}{2\delta} ),
\end{align*}
for arbitrarily small $\epsilon$, so in what follows let $\epsilon=0$. Then
\begin{align*}
 KL( \tau \nu(a) | \tau \nu^j (a) ) = \begin{cases} 0 & \text{if $j \notin a$} \\ 
 							     d\left( (1-\mu_j) \prod_{i \in a \setminus j} (1-\mu_i)  | (1-\mu_j - \Delta_j )  \prod_{i \in a \setminus j} (1-\mu_i)  \right) & \text{if $j \in a$ and $j > k$} \\
							     d\left( (1-\mu_j) \prod_{i \in a \setminus j} (1-\mu_i)  | (1-\mu_j + \Delta_j )  \prod_{i \in a \setminus j} (1-\mu_i)  \right)  & \text{if $j \in a$ and $j \leq k$} \end{cases}
\end{align*}
where for $j > k$, by invoking Lemma~\ref{bernoulli_kl_bounds},
\begin{align*}
d\left( (1-\mu_j) \prod_{i \in a \setminus j} (1-\mu_i)  | (1-\mu_j - \Delta_j )  \prod_{i \in a \setminus j} (1-\mu_i)  \right) \\ 
\leq \frac{ \Delta_j^2  \left( \prod_{i \in a \setminus j} (1-\mu_i)\right)^2 }{2 \left( 1 - (1-\mu_j)\prod_{i \in a \setminus j} (1-\mu_i) \right)  \left( (1-\mu_j - \Delta_j)\prod_{i \in a \setminus j} (1-\mu_i)\right)} \\
\leq \frac{ \Delta_j^2  \left( \prod_{i \in a \setminus j} (1-\mu_i)\right) }{ 2 \left( 1 - (1-\mu_j)\prod_{i \in a \setminus j} (1-\mu_i) \right)  (1-\mu_j- \Delta_j)}
\end{align*}
and a similar bounds holds for $j \leq k$. If $h_j = \max_{a \in \binom{[n]-j}{p-1}} \prod_{i \in a \setminus j} (1-\mu_i)$ and 
\begin{align*}
\tau_j = \begin{cases} \frac{(1-\mu_j-\Delta_j)}{\Delta_j^2} \frac{1-(1-\mu_j)h_j}{h_j} & \text{if $j > k$} \\ 
 											     \frac{(1-\mu_j)}{\Delta_j^2} \frac{1-(1-\mu_j+\Delta_j)h_j}{h_j} & \text{if $j \leq k$}
									 \end{cases}
\end{align*}
$\forall j \in [n]$ then
\begin{align} \label{j_in_rounds}
\sum_{a \in \binom{[n]}{p}: j \in a} \E_\nu[ T_{a} ]  \geq  2 \tau_j \log(  \tfrac{1}{2\delta} ) 
\end{align}
or, in words, arm $j$ must be included in a number of bandit observations that is at least the right-hand-side of \eqref{j_in_rounds}. Because $p$ arms can be selected per evaluation, if we assume perfect divisibility to minimize the number of evaluations, then we conclude that 
\begin{align}\label{lower_tau}
\sum_{a \in \binom{[n]}{p}} \E_\nu[ T_{a} ]   \geq 2 \log(  \tfrac{1}{2\delta} ) \max\left\{ \max_{j = 1,\dots,n} \tau_j , \frac{1}{p} \sum_{j=1}^n \tau_j \right\} \geq \log(  \tfrac{1}{2\delta} ) \left( \max_{j = 1,\dots,n} \tau_j + \frac{1}{p} \sum_{j=1}^n \tau_j \right)
\end{align}
where the first argument of the $\max$ follows from the fact that the number of rounds must exceed the number of bandit evaluations each arm must be included in.

For semi-bandit feedback, we use the same $\nu^j$ construction but now realize that 
\begin{align*}
 KL( \nu(a) |  \nu^j (a) ) = \begin{cases} 0 & \text{if $j \notin a$} \\ 
 							     d\left( 1-\mu_j | 1-\mu_j - \Delta_j  \right) & \text{if $j \in a$ and $j > k$} \\
							     d\left( 1-\mu_j   |  1-\mu_j + \Delta_j   \right)  & \text{if $j \in a$ and $j \leq k$}. \end{cases}
\end{align*}
Using the same series of steps as above, we find that if
\begin{align*}
\tau_j =  \begin{cases} \frac{\mu_j (1-\mu_j-\Delta_j)}{\Delta_j^2} & \text{if $j > k$} \\ 
 											     \frac{(\mu_j-\Delta_j) (1-\mu_j)}{\Delta_j^2}& \text{if $j \leq k$}
									 \end{cases}
\end{align*}
then \eqref{lower_tau} holds with these defined values of $\tau_j$ for the semi-bandit case.

\end{document}